\newcommand{\R}{\mathbb{R}}
\newcommand{\rbra}[1]{\left(#1\right)}
\newcommand{\sbra}[1]{\left[#1\right]}
\newcommand{\cbra}[1]{\left\{#1\right\}}
\newcommand{\norm}[1]{\left\|#1\right\|}
\newcommand{\inner}[1]{\langle#1\rangle}
\newcommand{\abs}[1]{\left|#1\right|}
\newcommand{\wh}[1]{\widehat{#1}}
\newcommand{\wb}[1]{\overline{#1}}
\newcommand{\var}{\mathrm{var}}
\newcommand{\cov}{\mathrm{cov}} 
\newcommand{\cum}{\mathrm{cum}} 
\newcommand{\E}{\mathbb{E}}
\newcommand{\ccal}{\mathcal{C}}
\newcommand{\ecal}{\mathcal{E}}
\newcommand{\lcal}{\mathcal{L}}
\newcommand{\kcal}{\mathcal{K}}
\newcommand{\ncal}{\mathcal{N}}
\newcommand{\pcal}{\mathcal{P}}
\newcommand{\scal}{\mathcal{S}}
\newcommand{\vect}{\mathrm{vec}}
\newcommand{\mat}{\mathrm{mat}}
\newcommand{\spn}{\mathrm{Span}}
\newcommand{\tr}{\mathrm{Tr}}
\newcommand{\diag}{\mathrm{Diag}}
\newcommand{\proj}{\mathrm{Proj}}
\newcommand{\acos}{\mathrm{acos}}
\def\min{\mathop{\rm min}\limits}
\def\argmin{\mathop{\rm argmin}\limits}
\def\maximize{\mathop{\rm maximize}\limits}
\def\max{\mathop{\rm max}\limits}
\def\argmax{\mathop{\rm argmax}\limits}
\newtheorem*{rep@theorem}{\rep@title}
\newcommand{\newreptheorem}[2]{%
	\newenvironment{rep#1}[1]{%
		\def\rep@title{#2 \ref{##1}}%
		\begin{rep@theorem}}%
		{\end{rep@theorem}}}
\newtheorem{lemma}{Lemma}[subsection] % section
\newtheorem{proposition}{Proposition}[section]
\newtheorem{theorem}{Theorem}[section]
\newtheorem*{theorem-nonumber}{Theorem}
\newtheorem{assumption}{Assumption}[section]
\newtheorem{program}[theorem]{Program}
\newcommand{\defeq}{\triangleq}
\newcommand{\one}{\mathbbm{1}}
\DeclareMathOperator{\probP}{\Pr}
\newcommand{\prob}[1]{\probP[ #1 ]}
\newcommand{\problr}[1]{\probP\left[ #1 \right]}
\newcommand{\EE}{\mathbb{E}}  % aligns to bottom of text
\DeclareMathOperator{\Binom}{Binom}
\newcommand{\iid}{i.i.d.\ }
\begin{document}

% If your paper is accepted and the title of your paper is very long,
% the style will print as headings an error message. Use the following
% command to supply a shorter title of your paper so that it can be
% used as headings.
%
%\runningtitle{I use this title instead because the last one was very long}

% If your paper is accepted and the number of authors is large, the
% style will print as headings an error message. Use the following
% command to supply a shorter version of the authors names so that
% they can be used as headings (for example, use only the surnames)
%
%\runningauthor{Surname 1, Surname 2, Surname 3, ...., Surname n}

\twocolumn[

\aistatstitle{Overcomplete Independent Component Analysis via SDP}

\runningauthor{Podosinnikova, Perry, Wein, Bach, d'Aspremont, Sontag}

\aistatsauthor{ Anastasia Podosinnikova \And Amelia Perry \And  Alexander S.\ Wein}

\aistatsaddress{ MIT \And MIT \And Courant Institute, NYU} 

\aistatsauthor{Francis Bach \And Alexandre d'Aspremont \And David Sontag }

\aistatsaddress{INRIA, ENS \And  CNRS, ENS \And MIT }

]

\begin{abstract}
  We present a novel algorithm for overcomplete independent components analysis (ICA), 
  where the number of latent sources $k$ exceeds the dimension $p$ of observed variables. 
  Previous algorithms either suffer from high computational complexity or make strong 
  assumptions about the form of the mixing matrix.
  Our algorithm does not make any sparsity assumption
  yet enjoys favorable computational and theoretical properties.
  Our algorithm consists of two main
  steps: (a) estimation of the Hessians of the cumulant generating function
  (as opposed to the fourth and higher order cumulants used by most algorithms) and (b) 
  a novel semi-definite programming (SDP) relaxation for recovering a 
  mixing component. We show that this relaxation can be 
  efficiently solved with a projected accelerated gradient descent method,
  which makes the whole algorithm computationally practical. 
  Moreover, we conjecture that the proposed program recovers a mixing component at the rate $k < p^2/4$ and prove that a mixing component can be recovered with high probability when 
  $k < (2 - \varepsilon)p \log p$ when the original components are sampled uniformly at random on the hypersphere. 
  Experiments are provided on synthetic data and the
  CIFAR-10 dataset of real images.
\end{abstract}

\section{Introduction}

	\emph{Independent component analysis (ICA)}
	models a $p$-dimensional \emph{observation}
	$x$ as a linear combination 
	of $k$ latent mutually
	independent \emph{sources}:
	\begin{equation}
	\label{ica}
	x = D\alpha,
	\end{equation}
	where $\alpha := (\alpha_1, \dots, \alpha_k)^{\top}$ and $D \in \R^{p\times k}$.
	The linear transformation $D$ is called the \emph{mixing matrix}
	and is closely related to the \emph{dictionary matrix}
	from dictionary learning \citep[see, e.g.,][]{CheDon1994,CheEtAl1998}. 
	Given a sample $X := \{x^{(1)}, \dots, x^{(n)}\}$ of $n$ observations,
	one is often interested in estimating the latent mixing matrix $D$ and 
	respective latent representations, $\alpha^{(1)}, \dots, \alpha^{(n)}$, also known as \emph{sources}, 
	of every observation.
	
	A classical motivating example for  ICA  
	is the cocktail party problem, where one is interested in separating 
	individual speakers' voices from noisy recordings. Here, 
	each record is an observation and each speaker is an independent source.
	In general, ICA is a simple single-layered neural network and 
	is widely used as an unsupervised learning method 
	in machine learning and
	signal processing communities \citep[see, e.g.,][]{HyvEtAl2001,ComJut2010}.
	
	There are three conceptually different settings 
	of the ICA problem: (a) \emph{complete},
	or \emph{determined}, where the dimension of observations coincides
	with the number of sources, i.e., $p=k$; (b) \emph{undercomplete},
	or \emph{overdetermined}, with fewer sources than the dimension, i.e., $k<p$;
	and (c) \emph{overcomplete}, or \emph{underdetermined}, with more sources than
	the dimension, i.e., $k>p$. While the first two cases are well studied, 
	the last one is more difficult and we address it here. 
	
	In the \emph{complete} setting, where $k=p$, ICA is usually
	solved via pre-whitening of the data so that
	the whitened observations, $z := Wx$, are uncorrelated and
	all have unit variance, i.e., $\cov(z) = W\cov(x)W^{\top} = I$,
	where $W$ denotes the whitening matrix.
	Substituting $x = D\alpha$, we get 
	$(WD) (WD)^{\top} = I$ which implies that the matrix $Q := WD$
	is \emph{orthogonal} and therefore the problem of finding the 
	mixing matrix $D$ boils down to finding the ``correct'' orthogonal 
	matrix $Q$. Numerous ``correctness'' criteria, such as maximizing
	non-Gaussianity of sources, were proposed and
	respective algorithms for complete ICA
	are well known
	\citep[see, e.g.,][]{HyvEtAl2001,ComJut2010}. 
	The most widely known complete ICA algorithms are
	possibly the \emph{FastICA} algorithm by
	\citet{Hyv1999} and 
	the \emph{JADE} algorithm by
	\citet{CarSou1993}.
	This 
	naturally extends to the undercomplete setting where one looks 
	for an orthonormal matrix, where columns are orthogonal, instead.
	However, although nothing prevents us from whitening data 
	in the overcomplete setting, the orthogonalization trick 
	cannot be extended to the 
	\emph{overcomplete} setting, where $k>p$, since the mixing matrix
	$D$ has more columns than rows and therefore cannot 
	have full column rank. 
	
	Improvements in feature learning are among the advantages
	of overcomplete representations: it has been shown by \citet{CoaEtAl2011}
	that dense and overcomplete features
	can significantly improve performance of classification algorithms.
	However, advantages of overcomplete representations go far
	beyond this task \citep[see, e.g.,][]{BenEtAl2013}.
	
	Originally, the idea of overcomplete representations 
	was developed in the context of dictionary learning,
	where an overcomplete dictionary, formed by Fourier, wavelet,
	Gabor or other filters, is given and one is only interested in 
	estimating the latent representations~$\alpha$.
	Different approaches were proposed for this problem
	including the method of frames \citep{Dau1988} and 
	basis pursuit \citep{CheDon1994,CheEtAl1998}.
	Later in sparse coding, the idea of estimating a dictionary matrix
	directly from data was introduced \citep{OlsFie1996,OlsFie1997}
	and was shortly followed by the first overcomplete ICA algorithm
	\citep{LewEtAl2000}.\footnote{
		Recall the close relation between ICA and sparse coding:
		indeed, the maximum likelihood estimation of ICA with the Laplace prior 
		on the sources (latent representations $\alpha$) is equivalent to the
		standard sparse coding formulation with the $\ell_1$-penalty.
	}
	Further overcomplete ICA research continued in several fairly 
	different directions based on either (a) 
	various sparsity assumptions \citep[see, e.g.,][]{TehEtAl2003}
	or on (b) prior assumptions about the sources as by \citet{LewEtAl2000}
	or (c) instead in a more general dense overcomplete setting 
	\citep[see, e.g.,][]{Hyv2005,ComRaj2006,LatEtAl2007,GoyEtAl2014,BhaEtAl2014,BhaEtAl2014b,AnaEtAl2015,MaEtAl2016}. Since we focus
	on this more general dense setting, we do not review
	or compare to the literature in the other settings.
	
	In particular, we focus on the following problem: \emph{Estimate the mixing matrix
		$D$ given an observed sample $X := \cbra{x^{(1)}, \dots, x^{(n)}}$ of $n$ observations}.
	We aim at constructing an algorithm that would bridge the 
	gap between algorithms with theoretical guarantees and
	ones with practical computational properties. 
	Notably, our algorithm does not depend on 
	any probabilistic assumptions on the sources, except for the 
	standard independence and non-Gaussianity,
	and 
	the uniqueness of the ICA representation (up to permutation and scaling) is 
	the result of the independence of sources rather than sparsity.
	Here we only focus on the estimation of the latent mixing matrix
	and leave the learning of the latent representation for future 
	research
	(note that one can use, e.g., the mentioned earlier
	dictionary learning approaches).
	
	Different approaches have been proposed to address
	this problem. Some attempt to relax the
	hard orthogonality constraint in  
	the whitening procedure with more heuristic
	quasi-orthogonalization approaches \citep[see, e.g.,][]{LeEtAl2011,AroEtAl2012}.
	Other approaches try to specifically address the structure of the 
	model in the overcomplete setting 
	\citep[see, e.g.,][]{Hyv2005,ComRaj2006,LatEtAl2007,GoyEtAl2014,BhaEtAl2014,BhaEtAl2014b,AnaEtAl2015,MaEtAl2016}
	by considering higher-order cumulants or derivatives of the
	cumulant generating function.
	The algorithm that we propose is the closest to the
	latter type of approach.
	
	%%%%%% ALGORITHM %%%%%%
	\begin{algorithm}[b]
		\caption{OverICA}
		\label{alg-overica}
		\begin{algorithmic}[1]
			\STATE \textbf{Input: Observations $X:=\cbra{x_1, \dots, x_n}$ and
				latent dimension $k$.}
			\STATEx Parameters: The regularization parameter $\mu$ and 
			the number $s$ of generalized covariances, $s>k$.
			\STATE \textbf{STEP I. Estimation of the subspace $W$:}
			\STATEx Sample vectors $t_1, \dots, t_s$.
			\STATEx Estimate matrices $H_j := \ccal_x(t_j)$ for all $j\in[s]$.
			\STATE \textbf{STEP II. Estimation of the atoms:}
			\STATEx Given $G^{(i)}$ for every deflation step $i = 1,2,\dots, k$:
			\STATEx Solve the relaxation~\eqref{def-sdp-relax} with $G^{(i)}$.
			\STATEx (OR: Solve the program~\eqref{def-sdp} with $G^{(i)}$.)
			\STATEx Estimate the $i$-th mixing component $d_i$ from $B^{\ast}$.
			\STATE \textbf{Output: Mixing matrix $D = (d_1, d_2, \dots, d_k)$.}
		\end{algorithmic}
	\end{algorithm}
	%%%%%% ALGORITHM %%%%%%
	
	We make two conceptual contributions: (a) we show how to use second-order statistics instead
	of the fourth and higher-order cumulants, which improves sample complexity, and (b) we introduce a novel semi-definite
	programming-based approach, with a convex relaxation that can be solved efficiently, for estimating the columns of $D$. %mixing components~$d_i$.
	Overall, this leads to a computationally efficient overcomplete ICA algorithm
	that also has theoretical guarantees.
	Conceptually, our work is similar to the fourth-order only 
	blind identification (FOOBI) algorithm \citep{LatEtAl2007},
	which we found to work well in practice.
	However, FOOBI suffers from high computational and memory complexities, 
	its theoretical guarantee requires all kurtoses of the sources to be positive, 
	and it makes the strong assumption that certain fourth-order tensors are linearly independent.
	Our approach resolves these drawbacks.
	We describe our algorithm in Section~\ref{sec-main} and
	 experimental results in Section~\ref{sec-exps}.

\section{Overcomplete ICA via SDP}
\label{sec-main}
	
\subsection{Algorithm overview}
	
	We focus on estimating the latent mixing
	matrix $D\in\R^{p\times k}$
	of the ICA model~\eqref{ica} in the overcomplete
	setting where $k>p$. 
	We first motivate our algorithm in the population 
	(infinite sample) setting and later address 
	the finite sample case.
	
	In the following, the $i$-th column of the mixing matrix $D$ is denoted as~$d_i$
	and called the $i$-th \textbf{mixing component}.
	The rank-1 matrices $d_1d_1^{\top}, \dots, d_kd_k^{\top}$ are referred to 
	as \textbf{atoms}.\footnote{
		We slightly abuse the standard closely related dictionary
		learning terminology where the term atom 
		is used for the individual columns $d_i$
		\citep[see, e.g.,][]{CheEtAl1998}.
	}

	Our algorithm, referred to as \textbf{OverICA}, consists of two major steps:
	(a) construction of the \textbf{subspace~$\mathbf{W}$} spanned 
	by the atoms, i.e.,
	\begin{equation}
		\label{def-w}
		\setlength\abovedisplayskip{.5mm}
		W:= \spn\cbra{d_1d_1^{\top}, \dots,d_kd_k^{\top}},
		\vspace{-2mm}
	\end{equation}
	and $(b)$ estimation of individual atoms  $d_id_i^{\top}$, $i\in[k]$,
	given any basis of this subspace.\footnote{
		The mixing component is then the largest eigenvector. 
	}
	We summarize this high level idea\footnote{
		The deflation part is more involved (see Section~\ref{sec-deflation}).
	}
	in Algorithm~\ref{alg-overica}.
	Note that although the definition of 
	the subspace
	$W$ in~\eqref{def-w} is based on the latent atoms, in practice
	this subspace is estimated from the known observations~$x$
	(see Section~\ref{sec-subspace}).
	However, we do use this explicit representation in our theoretical analysis.
	
	In general, there are different ways to implement
	these two steps.
	For instance, some algorithms implement the first step based on the fourth or higher order
	cumulants \citep[see, e.g.,][]{LatEtAl2007,GoyEtAl2014}. 
	In contrast, we estimate the subspace $W$ from the 
	Hessian of the cumulant generating function
	which has better computational and sample complexities (see Section~\ref{sec-subspace}).
	Our algorithm also works (without any adjustment) with other implementations
	of the first step, including the fourth-order cumulant based one,
	but other algorithms cannot take advantage of our
	efficient first step due to the differences in the second step.

	In the second step, 
	we propose a novel 
	semi-definite program (SDP) for estimation 
	of an individual atom given the subspace $W$ (Section~\ref{sec-sdp}).
	We also provide a convex relaxation of this program which 
	admits efficient implementation and introduces regularization to
	noise which is handy in practice when the subspace $W$ can only
	be estimated approximately (Section~\ref{sec-relaxation}).
	Finally, we provide a deflation procedure that allows us to estimate
	all the atoms (Section~\ref{sec-deflation}).
	Before proceeding, a few assumptions are in order.

\subsection{Assumptions}
	
	Due to the inherent permutation and scaling unidentifiability of the ICA problem, 
	it is a standard practice to assume, without loss of generality, that 
	\begin{assumption}
		\label{ass-scaling}
		Every mixing component has unit norm, i.e.,
		$\norm{d_i}_2 = 1$ for all $i\in[k]$. 
	\end{assumption}
	This assumption immediately implies that
	all atoms have unit Frobenius norm, i.e.,
	$\norm{d_id_i^{\top}}_F  = \norm{d_i}_2^2 = 1$ for all $i\in[k]$.
	
	Since instead of recovering mixing components $d_i$ as in (under-) complete setting
	we recover atoms $d_id_i^{\top}$, the following assumption is necessary for the identifiability of our algorithm:
	\begin{assumption}
		\label{ass-lnd}
		The matrices (atoms) $d_1d_1^{\top}$, $d_2d_2^{\top}$, \dots, $d_kd_k^{\top}$
		are linearly independent.
	\end{assumption}
	This in particular implies that the number of sources~$k$
	cannot exceed $m:=p(p+1)/2$, which is the 
	latent dimension of the set of all symmetric matrices $\scal_p$.
	We also assume, without loss of generality, that
	the observations are centred, i.e., $\E(x) = \E(\alpha) = 0$.

\subsection{Step I: Subspace Estimation}
\label{sec-subspace}
	
	In this section, we describe a construction of an orthonormal basis of
	the subspace $W$.
	For that, we first construct matrices $H_1, \dots, H_s \in\R^{p\times p}$,
	for some~$s$, which span the subspace $W$. These matrices
	are obtained from the Hessian of the cumulant generating function
	as described below.

	\paragraph{Generalized Covariance Matrices.}
	Introduced for complete ICA  by~\citet{Yer2000},
	a generalized covariance matrix is the Hessian of the cumulant generating
	function evaluated at a non-zero vector. 
	
	Recall that the cumulant generating function (cfg) of a $p$-valued
	random variable $x$ is defined as
	\begin{equation}
		\label{cgf}
		\setlength\abovedisplayskip{.5mm}
		\phi_x(t) := \log \E (e^{t^{\top} x}),
		\vspace{-2.3mm}
	\end{equation}
	for any $t\in\R^p$. It is well known that the cumulants of $x$ 
	can  be computed as the coefficients of the Taylor series expansion of the cgf
	evaluated at zero~\citep[see, e.g.,][Chapter 5]{ComJut2010}. In particular, the second order cumulant,
	which coincides with the covariance matrix, is then the Hessian evaluated at zero, i.e., $\cov(x) = \nabla^2 \phi_x(0)$.
		
	The \textbf{generalized covariance matrix}
	is a straightforward extension
	where the Hessian of the cgf is
	evaluated at a non-zero vector $t$:
	\begin{equation}
	\label{gencov}
		\setlength\abovedisplayskip{0mm}
		\ccal_x(t) := \nabla^2 \phi_x(t) 
		= \frac{ \E(xx^{\top} e^{t^{\top}x}) }{ \E(e^{t^{\top}x}) } 
		- \ecal_x(t) \ecal_x(t)^{\top},
		\vspace{-3mm}
	\end{equation}
	where we introduced
	\begin{equation}
		\label{genmean}
		\setlength\abovedisplayskip{0mm}
		\ecal_x(t) := \nabla \phi_x(t) = \frac{ \E(xe^{t^{\top}x}) }{ \E(e^{t^{\top}x}) }.
	\end{equation}

\paragraph{Generalized Covariance Matrices of ICA.}
	
	In case of the ICA model, 
	substituting~\eqref{ica} into the expressions~\eqref{genmean} and~\eqref{gencov}, we obtain
	\begin{equation}
		\label{ica-gencov}
		\setlength\abovedisplayskip{.5mm}
		\begin{aligned}
			\ecal_x(t) &= \frac{ D \E(\alpha e^{\alpha^{\top}y}) }{ \E( e^{\alpha^{\top}y}) } = D \ecal_{\alpha}(y), \\
			\ccal_x(t) &= D \ccal_{\alpha}(y) D^{\top},
		\end{aligned}
		\vspace{-1mm}
	\end{equation}
	where we introduced $y := D^{\top}t$ and
	the generalized covariance  
	$\ccal_{\alpha}(y) := \nabla^2 \phi_{\alpha}(y)$ of the sources:
	\begin{equation}
		\label{gencov-alpha}
		\setlength\abovedisplayskip{.5mm}
		\ccal_{\alpha}(y) = \frac{ \E(\alpha\alpha^{\top} e^{\alpha^{\top}y}) }{ \E(e^{\alpha^{\top}y}) } 
		- \ecal_{\alpha}(y) \ecal_{\alpha}(y)^{\top},
		\vspace{-1mm}
	\end{equation}
	where $\ecal_{\alpha}(y) := \nabla \phi_{\alpha}(y)=\E(\alpha e^{y^{\top}\alpha})/
	\E(e^{y^{\top}\alpha})$.
		
	Importantly, the generalized covariance $\ccal_{\alpha}(y)$ of the sources, 
	due to the independence,
	is a diagonal matrix~\citep[see, e.g.,][]{PodEtAl2016}. Therefore, 
	the ICA generalized covariance  $\ccal_x(t)$ is:
	\begin{equation}
		\label{gencov-sum}
		\setlength\abovedisplayskip{-.5mm}
		C_x(t) = \sum_{i=1}^k \omega_i(t) d_i d_i^{\top},
		\vspace{-2mm}
	\end{equation}
	where $\omega_i(t) := [\ccal_{\alpha}(D^{\top}t)]_{ii}$ are the generalized variance of the $i$-th source $\alpha_i$. 
	This implies that \emph{ICA generalized
		covariances belong to the subspace~$W$.}

	\paragraph{Construction of the Subspace.}
	Since ICA generalized covariance matrices 
	belong to the subspace $W$, then the span of any number of such 
	matrices would either be a subset of $W$ or equal to $W$.
	Choosing sufficiently large number $s>k$ of generalized covariance matrices,
	we can ensure the equality.
	Therefore, given a sufficiently large number $s$ of vectors $t_1, \dots, t_s$, 
	we construct matrices
	$H_j := \ccal_x(t_j)$ for all $j\in[s]$. 
	Note that in practice it is more convenient to
	work with vectorizations of these matrices
	and then consequent matricization of the obtained
	result (see Appendices~\ref{app-mat-vec} and~\ref{app-ica-cum4}).
	Given matrices $H_j$, for $j\in[s]$, an orthonormal basis 
	can be straightforwardly extracted via the singular value decomposition.
	In practice, we set $s$ as a multiple of $k$ and 
	sample the vectors $t_j$ from the Gaussian distribution.
	
	Note that one can also construct a basis of the subspace $W$
	from the column space of the flattening of the fourth-order
	cumulant of the ICA model~\eqref{ica}. In particular,
	this flattening is a matrix $C \in\R^{p^2\times p^2}$ such that
	$C = (D \odot D) \diag(\kappa) (D\odot D)$,
	where $\odot$ stands for the Khatri-Rao product and the $i$-th element
	of the vector $\kappa\in\R^k$ is the kurtosis of the $i$-th source~$\alpha_i$.
	Importantly, matricization of the $i$-th column~$a_i$ of the matrix $A := D\odot D$
	is exactly the $i$-th atom, i.e., $\mat(a_i) = d_id_i^{\top}$. Therefore, one
	can construct the desirable basis from the column space of the matrix $A$
	(see Appendix~\ref{app-ica-cum4} for more details).
	This also intuitively explains the need for Assumption~\ref{ass-lnd},
	which basically ensures that $A$ has full column rank (as opposed to $D$). 
	In general, this approach is 
	common in the overcomplete literature \citep[see, e.g.,][]{LatEtAl2007,BhaEtAl2014,AnaEtAl2015,MaEtAl2016}
	and can be used as the first step of our algorithm.
	However, the generalized covariance-based construction has better computational (see Section~\ref{sec-runtime})
	and sample complexities.

\subsection{Step II: Estimation of the Atoms}
	
	We now discuss the recovery  of one atom
	$d_id_i^{\top}$, for some $i\in[k]$, given a basis of the subspace $W$ (Section~\ref{sec-sdp}). 
	We then
	provide a deflation procedure
	to recover all atoms $d_id_i^{\top}$ (Section~\ref{sec-deflation}).

\subsubsection{The Semi-Definite Program}
\label{sec-sdp}
	
	Given matrices $H_1, H_2, \dots, H_s$ which span the subspace $W$ defined in~\eqref{def-w}
	we formulate
	the following \emph{semi-definite program (SDP)}:
	\begin{equation}
		\label{def-sdp}
		\setlength\abovedisplayskip{.1mm}
		\begin{aligned}
			B^{\ast}_{sdp} := &\argmax_{B\in\scal_p} \; \inner{G,B} \\
			& B \in \spn\cbra{H_1, H_2, \dots, H_s}, \\
			& \tr(B) = 1, \\
			& B \succeq 0.
		\end{aligned}
		\vspace{-3mm}
	\end{equation}
	We expect that the optimal solution (if it exists and is unique)
	$B^{\ast}_{sdp}$ coincides with one of the atoms $d_id_i^{\top}$
	for some $i\in[k]$.
	This is not always the case,
	but we conjecture based on the experimental
	evidence that one of the atoms is recovered with high probability 
	when $k \le p^2/4$ (see Figure~\ref{fig-pt}) and prove a weaker result
	(Theorem~\ref{thm:main}).
	The matrix $G\in\R^{p\times p}$ determines which of
	the atoms $d_id_i^{\top}$ is the optimizer
	and its choice is discussed when we construct a deflation procedure 
	(Section~\ref{sec-deflation}; see also Appendix~\ref{sec-choose-g}).

	\paragraph{Intuition.}
	Since generalized covariances $H_1$,\dots,$H_s$ 
	span the subspace $W$,
	%, defined as
	%$W := \spn\cbra{d_1d_1^{\top}, d_2d_2^{\top}, \dots, d_kd_k^{\top}}$, 
	the constraint set of~\eqref{def-sdp} is:
	\begin{equation}
	\label{def-kcal}
	\setlength\abovedisplayskip{.5mm}
	\kcal := \cbra{B \in W : \tr(B) = 1, B \succeq 0}.
	\vspace{-2mm}
	\end{equation}
	It is not difficult to show (see Appendix~\ref{app-proof-lemma-extreme-points})
	that under Assumption~\ref{ass-lnd}
	the atoms $d_id_i^{\top}$ are extreme
	points of this set $\kcal$:
	\begin{lemma}
		\label{lem-extreme-points}
		Let the atoms $d_1d_1^{\top}$, $d_2d_2^{\top}$, $\dots$, $d_kd_k^{\top}$
		be linearly independent. Then they are extreme points
		of the set $\kcal$ defined in~\eqref{def-kcal}.
	\end{lemma}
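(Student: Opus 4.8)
The plan is to realize $\kcal$ as the intersection of a linear subspace with the \emph{spectraplex} and to exploit the classical fact that rank-one matrices are extreme points of the latter. Concretely, set $\pcal := \cbra{B \in \scal_p : \tr(B) = 1,\ B \succeq 0}$, so that $\kcal = \pcal \cap W$. I would first verify that every atom lies in $\kcal$: by Assumption~\ref{ass-scaling} we have $\tr(d_id_i^\top) = \norm{d_i}_2^2 = 1$, clearly $d_id_i^\top \succeq 0$, and $d_id_i^\top \in W$ by the very definition of $W$ in~\eqref{def-w}; hence $d_id_i^\top \in \kcal$.

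The key reduction is the elementary principle that if $x$ is an extreme point of a convex set $C$ and $x$ also lies in an affine set $A$, then $x$ remains an extreme point of the smaller set $C \cap A$: any representation $x = \lambda B_1 + (1-\lambda)B_2$ with $B_1, B_2 \in C \cap A \subseteq C$ and $\lambda \in (0,1)$ forces $B_1 = B_2 = x$ by extremality in $C$. Applying this with $C = \pcal$ and $A = W$, it suffices to prove that each $d_id_i^\top$ is an extreme point of $\pcal$.

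To establish extremality in $\pcal$, I would assume a representation $d_id_i^\top = \lambda B_1 + (1-\lambda)B_2$ with $B_1, B_2 \in \pcal$ and $\lambda \in (0,1)$, and argue that $B_1$ and $B_2$ are forced to be rank one. The mechanism is positivity: for any $u$ orthogonal to $d_i$ one has $u^\top(d_id_i^\top)u = 0$, hence $\lambda\,u^\top B_1 u + (1-\lambda)\,u^\top B_2 u = 0$; since both terms are nonnegative they must vanish, and a PSD matrix whose quadratic form vanishes on $u$ annihilates $u$. Thus $\spn\cbra{d_i}^\perp \subseteq \ker B_1 \cap \ker B_2$, so $B_1 = a_1 d_id_i^\top$ and $B_2 = a_2 d_id_i^\top$ for scalars $a_1, a_2$; the unit-trace constraint together with $\tr(d_id_i^\top)=1$ then pins down $a_1 = a_2 = 1$, giving $B_1 = B_2 = d_id_i^\top$. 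This proves extremality in $\pcal$, and hence in $\kcal$.

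The argument is short, and the only mildly technical point is the standard fact that a PSD matrix annihilates every vector on which its quadratic form vanishes; I expect that to be the main thing to state carefully. It is worth noting that linear independence of the atoms (Assumption~\ref{ass-lnd}) is not actually needed for this direction---each individual atom is extreme by the spectraplex structure alone---its role being rather to guarantee that the $k$ atoms are \emph{distinct} points of $\kcal$ and that $W$ has the expected dimension, which matters for the subsequent recovery statements rather than for extremality per se.
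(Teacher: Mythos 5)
Your proof is correct, but it follows a genuinely different route from the paper's. The paper works entirely inside the subspace $W$: it writes $A, B \in \kcal$ in the atom basis, $A = \sum_{i} \alpha_i d_id_i^{\top}$ and $B = \sum_{i} \beta_i d_id_i^{\top}$, substitutes into the convex combination $d_jd_j^{\top} = \lambda A + (1-\lambda)B$, and appeals to the linear independence of the atoms to derive a contradiction; positive semidefiniteness of $A$ and $B$ is never exploited beyond membership in $\kcal$. You instead never decompose over $W$ at all: you prove each atom is extreme in the full spectraplex $\pcal := \cbra{B \in \scal_p : \tr(B)=1,\ B \succeq 0}$ via the kernel argument (for $B \succeq 0$, $u^{\top}Bu = 0$ forces $Bu = 0$), and then observe that extremality is inherited by any subset of $\pcal$ containing the point, in particular by $\kcal = \pcal \cap W$. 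Your route buys two things. First, it shows that Assumption~\ref{ass-lnd} is not needed for extremality itself, only for the atoms being distinct points of $\kcal$ and for the downstream recovery analysis --- a point you correctly flag. Second, it is airtight precisely where the paper's argument is thin: since $d_jd_j^{\top}$ is itself one of the basis atoms, linear independence only forces $\lambda\alpha_i + (1-\lambda)\beta_i = \delta_{ij}$, which is not by itself a contradiction (for instance $\alpha = e_j + v$ and $\beta = e_j - \tfrac{\lambda}{1-\lambda}v$ satisfy it for any $v$ with $\sum_i v_i = 0$); ruling out such perturbations requires exactly the positivity mechanism your proof supplies. What the paper's formulation buys in exchange is that it stays in the coefficient parametrization $B = \sum_i \beta_i d_id_i^{\top}$ that is reused later for the dual analysis of the program, but as a self-contained proof of the lemma, yours is the more complete and more general one.
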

	
	If the program~\eqref{def-sdp} has a unique solution,
	the optimizer $B^{\ast}_{sdp}$ must be an extreme point
	due to the compactness of the convex set $\kcal$.
	If the set~\eqref{def-kcal} does not have other extreme points
	except for the atoms $d_id_i^{\top}$, $i\in[k]$,
	then the optimizer is guaranteed to be one of the atoms.
	This might not be the case if the set $\kcal$ contains extreme
	points different from the atoms.
	This might explain why the phase transition (at the rate $k\le p^2/4$)
	happens and could potentially be related
	to the phenomenon of 
	polyhedrality of spectrahedra\footnote{
		The spectrahedron is a set formed by an intersection 
		of the positive semi-definite cone with linear constraints, e.g. the set $\kcal$.
		Importantly, all polyhedra are spectrahedra, but not all spectrahedra are polyhedra. 
	}
	\citep{BhaEtAl2015}.
	
	Before diving into the analysis of this SDP, let us
	present its convex relaxation which enjoys certain
	desirable properties.

\subsubsection{The Convex Relaxation}
\label{sec-relaxation}
	
	Let us rewrite~\eqref{def-sdp} in an equivalent form.
	The constraint $B \in W := \spn\cbra{d_1d_1^{\top}, \dots, d_kd_k^{\top}}$
	is equivalent to the fact that $B$ is orthogonal to any matrix from
	the orthogonal complement (null space) of $W$. 
	Let the matrices
	$\cbra{F_1, F_2, \dots, F_{m-k}}$, where $m:=p(p+1)/2$,
	form a basis of the null space $\ncal(W)$.\footnote{
		Note that a basis of $\ncal(W)$ can be easily computed 
		given matrices $H_1,\dots,H_s$.
	} 
	Then the program~\eqref{def-sdp}
	takes an equivalent formulation:
	\begin{equation}
		\label{def-sdp-equiv}
		\setlength\abovedisplayskip{.5mm}
		\begin{aligned}
			B^{\ast}_{sdp} := &\argmax_{B\in\scal_p} \; \inner{G,B} \\
			& \inner{B,F_j} = 0, \quad \text{for all}\quad j\in[m-k], \\
			& \tr(B) = 1, \\
			& B \succeq 0.
		\end{aligned}
		\vspace{-4mm}
	\end{equation}
	
	In the presence of (e.g., finite sample) noise, 
	the subspace $W$ can only be estimated approximately
	(in the first step).
	Therefore, rather than keeping the hard first constraint, we introduce the 
	relaxation
	\begin{equation}
		\label{def-sdp-relax}
		\setlength\abovedisplayskip{.5mm}
		\begin{aligned}
			B^{\ast} := \argmax_{B\in\scal_p} & \; \inner{G,B} - \frac{\mu}{2} \sum_{j\in[m-k]} \inner{B,F_j}^2 \\
			& \tr(B) = 1, \; B \succeq 0,
		\end{aligned}
		\vspace{-3mm}
	\end{equation}
	where $\mu>0$ is a regularization parameter
	which helps to adjust to an expected level of noise.
	Importantly,
	the relaxation~\eqref{def-sdp-relax}
	can be solved efficiently, e.g., via the fast iterative shrinkage-thresholding algorithm \citep[FISTA;][]{BecTeb2009}
	and the majorization-maximization principle
	\citep[see, e.g.,][]{HutLan2004}.
	See Appendix~\ref{app-sdp-fista} for details.

\subsubsection{Deflation}
\label{sec-deflation}
	
	The semi-definite program~\eqref{def-sdp},
	or its relaxation~\eqref{def-sdp-relax}, 
	is designed to estimate only some one atom $d_id_i^{\top}$.
	To estimate all other atoms we need a deflation procedure.
	In general, there is no easy and straightforward way 
	to perform deflation in the overcomplete setting,
	but we discuss possible approaches below.

	\paragraph{Clustering.}
	Since the matrix $G$ determines which atom is found,
	it is natural to repeatedly resample this matrix 
	a multiple of $k$ times and then cluster the obtained
	atoms into $k$ clusters. This approach generally works well
	except in the cases where either (a) some of the atoms,
	say $d_id_i^{\top}$ and $d_jd_j^{\top}$,
	are relatively close (e.g., in terms of angle in the 
	space of all symmetric matrices)
	to each other,
	or (b) one or several atoms were not properly
	estimated.
	In the former case, one could increase the number of 
	times $G$ is resampled, and the program is solved,
	but that might require very high number of repetitions.
	The latter issue is more difficult to fix since 
	a single wrong atom could significantly perturb
	the overall outcome.

	\paragraph{Adaptive Deflation.}
	Alternatively, one could adapt the constraint set iteratively to exclude from the search
	all the atoms found so far.
	For that, one can update the constraint set so that the subspace
	$W$ 
	is replaced with the subspace that is
	spanned by all the atoms except for the ones which were already found.
	The most natural way to implement this is to add the found
	atoms to a basis of the null space of $W$, which is straightforward
	to implement with the relaxation~\eqref{def-sdp-relax}.
	Similar to other deflation approaches,
	a poor estimate of an atom obtained in an earlier
	deflation step 
	of such adaptive deflation
	can propagate this error
	leading to an overall poor result.

	\paragraph{Semi-Adaptive Deflation.}
	We found that taking advantage of both presented
	deflation approaches leads to the best result in practice.
	In particular, we combine these approaches by first performing
	clustering and keeping only good clusters (with low variance over the cluster) and then continuing with the adaptive deflation approach.
	We assume this \textbf{semi-adaptive deflation} approach 
	for all the experiments presented in Section~\ref{sec-exps}.

\subsubsection{Identifiability}
\label{sec-theory}
	
	In general, there are two types of identifiability 
	of probabilistic models:
	(a) statistical and (b) algebraic.
	The \emph{statistical} identifiability addresses whether the parameters of the model can be
	identified for given distributions. In particular, it is well known 
	that the ICA model is not identifiable if (more than one of) the sources are Gaussian
	\citep{Com1994} and issues also arise when the sources
	are close to Gaussian \citep{SokEtAl2014}. 
	These results also extend to the overcomplete case 
	that we consider.
	However, we do not address
	these questions here and assume that the models we work with are statistically
	identifiable. Instead, we are interested whether our approach is \emph{algebraically}
	identifiable, i.e., whether our algorithm correctly recovers the 
	parameters of the model.
	In particular, we address the following question:
	\emph{When is the solution $B^{\ast}_{sdp}$ of the program~\eqref{def-sdp}
		is one of the atoms $d_id_i^{\top}$, $i\in[k]$?}
	
	We address this question in theory and in practice and focus on the
	population (infinite number of samples) case, where we assume that
	an exact estimate of the subspace $W$ is given
	and, therefore, one can use the representation
	$W:= \spn\cbra{d_1d_1^{\top}, \dots, d_kd_k^{\top}}$
	without loss of generality.
	Therefore, for the theoretical analysis purposes we assume that atoms $d_id_i^{\top}$ are known,
	we consider the following program instead
	\begin{equation}
		\label{def-sdp-theory}
		\setlength\abovedisplayskip{.5mm}
		\begin{aligned}
			B^{\ast}_{sdp} := &\argmax_{B\in\scal_p} \; \inner{G,B} \\
			& B \in \spn\cbra{d_1d_1^{\top}, d_2d_2^{\top}, \dots, d_kd_k^{\top}}, \\
			& \tr(B) = 1, \\
			& B \succeq 0.
		\end{aligned}
		\vspace{-5mm}
	\end{equation}
	
	\paragraph{Phase Transition.}
	\begin{figure}[t]
		\centering
		\includegraphics[width=0.7\linewidth]{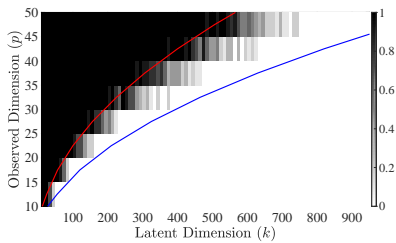}
		\vspace{-3mm}
		\caption{Phase transition of the program~\eqref{def-sdp-theory}.}
		\label{fig-pt}
		\vspace{-5mm}
	\end{figure}
	
	In Figure~\ref{fig-pt}, we present the phase transition plot
	for the program~\eqref{def-sdp-theory} obtained by solving
	the program multiple times for different settings.
	In particular, for every pair $(p,k)$ we solve the program
	$n_{rep}:=50$ times and assign to the respective point
	the value equal to the fraction of successful solutions
	(where the optimizer was one of the atoms).
	
	Given a fixed pair $(p,k)$, every instance of the program~\eqref{def-sdp-theory}
	is constructed as follows. We first sample a mixing matrix
	$D\in\R^{p\times k}$
	so that every mixing component is from the standard
	normal distribution as described in 
	Appendix~\ref{app-sampling}; and we sample a matrix 
	$G\in\R^{p\times p}$ from the standard normal distribution.
	We then construct the constraint set of the program~\eqref{def-sdp-theory}
	by setting every matrix $H_i = d_id_i^{\top}$ for all $i\in[k]$,
	where $s=k$.
	We solve every instance of this problem with the CVX toolbox \citep{GraEtAl2006} using
	the SeDuMi solver \citep{SeDuMi}.

	We consider the observations dimensions $p$ from $10$ to $50$ with 
	the interval of $5$ and we vary the number of atoms
	from $10$ to $1000$ with the interval of $10$.
	The resulting phase transition plots are presented
	in Figure~\ref{fig-pt}.
	The \textbf{blue line} on this plot corresponds to the curve
	$k=p(p+1)/2$, which is the largest possible latent dimension
	of all symmetric matrices $\scal_p$.
	The \textbf{red line} on this plot corresponds to the curve
	$k = p^2/4$. Since above the red line
	we observe 100\% successful recovery (black),
	\emph{we conjecture that the phase transition happens
		around $k=p^2/4$.}

	\paragraph{Theoretical Results.}
	Interestingly,
	an equivalent conjecture, $k < p^2/4$, was made for the ellipsoid fitting problem
	\citep{SauEtAl2012,SauEtAl2013} and the question remains open to our best knowledge.\footnote{
		In Appendix~\ref{app-ellipsoid-fitting},
		we recall the formulation of the ellipsoid fitting problem and
		slightly improve the results of \citet{SauEtAl2012,SauEtAl2013}.
	}
	In fact, we show close relation between successful solution (recovery of an atom)
	of our program~\eqref{def-sdp-theory} and the ellipsoid fitting problem.
	In particular, a successful solution of our problem implies
	that the feasibility of its Lagrange dual program
	is equivalent to the ellipsoid fitting problem
	% since the ellipsoid fitting problem
	% is nearly equivalent to the feasibility of the Lagrange dual of the program~\eqref{def-sdp-theory}. The derivations can be found in 
	(see Appendix~\ref{app-dual}).
	Moreover, using this connection, % to the ellipsoid fitting problem,
	we prove the following:
	\begin{theorem}\label{thm:main}
		Let $\varepsilon > 0$. Consider a regime with $p$ tending to to infinity, and with $k$ varying according to the bound $k < (2 - \varepsilon) p \log p$. As above, let the $d_i$ be random unit vectors and let $G = uu^\top$ for a random unit vector $u$. Then with high probability\footnote{Throughout, ``with high probability'' indicates probability tending to $1$ as $p \to \infty$.}, the matrix $d_i d_i^\top$ for which $d_i^\top G d_i$ is largest is the unique maximizer of the program~\eqref{def-sdp-theory}. %~\ref{prog:sdp}.
	\end{theorem}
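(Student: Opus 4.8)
The plan is to certify optimality of a single atom by constructing an explicit dual certificate and to read uniqueness off its rank. Write $* := \argmax_i (u^\top d_i)^2$ for the candidate index. Since program~\eqref{def-sdp-theory} maximizes the linear objective $\inner{G,B}$ over the spectrahedron $\kcal = \cbra{B \in W : \tr(B)=1,\, B \succeq 0}$, its Lagrangian dual introduces a multiplier $\eta \in \R$ for the trace constraint and a matrix $N$ in the null space $\ncal(W) = W^\perp$ for the membership constraint $B \in W$. A symmetric matrix
\[ Z := \eta I - G + N, \qquad N \in \ncal(W), \]
that is positive semidefinite with $Z d_* = 0$ certifies that $d_* d_*^\top$ is optimal; indeed for any primal optimum $B'$ one has $\inner{Z,B'} = \eta\tr(B') - \inner{G,B'} + \inner{N,B'} = 0$, hence $Z B' = 0$ and $\operatorname{range}(B') \subseteq \ker Z$. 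If in addition $\operatorname{rank}(Z) = p-1$, then $\ker Z = \spn\cbra{d_*}$ forces $B' = d_* d_*^\top$, giving the \emph{unique} maximizer. So the theorem reduces to producing such a $Z$ with high probability.

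I would first pin down the free parameters. Because $N \in \ncal(W)$ we have $d_i^\top N d_i = \inner{N, d_i d_i^\top} = 0$ for every $i$, hence $d_i^\top Z d_i = \eta - (u^\top d_i)^2$. Taking $i=*$ and demanding $Z d_* = 0$ forces $\eta = (u^\top d_*)^2 = \max_i (u^\top d_i)^2$, which is exactly why the maximizing atom is the one selected; and $Z \succeq 0$ already requires $d_i^\top Z d_i \ge 0$, i.e. $\eta \ge (u^\top d_i)^2$, so no other atom can be certified. Moreover, for a PSD matrix $d_*^\top Z d_* = 0$ automatically yields $Z d_* = 0$, so that constraint is free. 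Restricting to $d_*^\perp$ via $P := I - d_* d_*^\top$, the task becomes: find $N \in \ncal(W)$ so that $Z = \eta I - uu^\top + N$ is positive \emph{definite} on $d_*^\perp$, with prescribed values $\tilde d_i^\top Z \tilde d_i = \eta - (u^\top d_i)^2$ on the projected atoms $\tilde d_i = P d_i$. This is precisely an ellipsoid-fitting feasibility problem in dimension $p-1$ with $k-1$ points, matching the connection advertised in Appendix~\ref{app-dual}.

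I would then build $Z$ explicitly rather than invoke a general ellipsoid-fitting theorem, since the regime $k < (2-\varepsilon)p\log p$ sits far below the conjectured $p^2/4$ threshold and admits a direct argument. The natural candidate starts from $\eta P - P uu^\top P$, which already carries the correct values on $d_*$ and in the $u$ direction, and adds the minimum-norm correction $N \in \ncal(W)$ enforcing the $k$ constraints $d_i^\top N d_i = 0$ while realizing $Z d_* = 0$; solving the associated linear system expresses $N$ through the Gram structure of the atoms. Positive definiteness on $d_*^\perp$ is then verified by bounding the spectral norm of this random correction, which is of typical order $k/p^2$, and checking that it stays below the baseline gain $\eta = \max_i (u^\top d_i)^2 \approx 2\log p/p$ coming from the best atom. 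Equating these two scales, via extreme-value control of the $k$ correlations $(u^\top d_i)^2$ together with a matrix-Bernstein / union-bound estimate, is what produces the threshold $k \lesssim 2 p\log p$.

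The main obstacle is exactly this final positive-definiteness verification: controlling the spectral norm of the correction $N$ built to satisfy $k$ quadratic constraints, and showing the smallest eigenvalue of $Z$ on the $(p-1)$-dimensional space $d_*^\perp$ stays \emph{strictly} positive with high probability. The delicate balance is between the gain $\eta \approx 2\log p/p$ from the leading atom and the accumulated fluctuation $\approx k/p^2$ of the remaining $k-1$ atoms entering $N$; the rank-one structure $G = uu^\top$ is what keeps $\eta I - G$ with a single negative direction and makes this balance feasible up to $k \approx 2p\log p$. Establishing strict rather than weak positivity, as required for the rank-$(p-1)$ uniqueness claim, needs a quantitative lower-tail bound on that smallest eigenvalue rather than a mere feasibility statement.
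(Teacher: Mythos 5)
Your dual setup is exactly the paper's own reduction (Lemma~\ref{lemma-ellipsoid} in Appendix~\ref{app-dual}): an atom $d_*d_*^\top$ is optimal iff there exists $Z \succeq 0$ with $d_i^\top Z d_i = (u^\top d_*)^2 - (u^\top d_i)^2$ for all $i$, and your observations that $\eta$ is forced to equal $\max_i (u^\top d_i)^2$ and that $Zd_*=0$ comes for free from PSD-ness are correct. The gap is in the construction of $Z$, which is where all the work lies. Writing $Z = \eta I - uu^\top + N$ with $N \in \ncal(W)$, the correction $N$ cannot have spectral norm $O(k/p^2)$: since $u^\top Z u = \eta - 1 + u^\top N u \ge 0$, you need $u^\top N u \ge 1-\eta$, and since $u$ is nearly orthogonal to $d_*$ (indeed $\|Pu\|^2 = 1-\eta \approx 1$, as $\eta \approx 2\log p/p \to 0$), this unit-size negative direction survives the restriction to $d_*^\perp$. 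So $\|N\| = \Omega(1)$, and the balance ``$k/p^2$ versus $2\log p/p$'' that produces your threshold is not the actual mechanism; that it lands on $2p\log p$ is a coincidence. Equivalently, viewed as ellipsoid fitting, the prescribed values $\eta - (u^\top d_i)^2$ are wildly non-uniform: for the runner-up atoms they are smaller than the typical value $\approx 2\log p/p$ by an order-one relative factor (e.g.\ $c_2/c_1 \approx 1$ in the notation of Lemma~\ref{lemma:c}), whereas Theorem~\ref{thm:ellipsoid-2} and the pseudo-inverse construction behind it only tolerate relative deviations of size $p^{-1/2+\delta}$ around a common value. No single perturbative fit of the kind you describe can satisfy constraints that force $Z$ to nearly vanish along the polynomially many atoms whose $c_i$ is close to $c_1$ while staying $\approx \eta I$ elsewhere.

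The paper's proof exists precisely to sidestep this difficulty, and the device it uses is absent from your proposal: instead of one certificate with non-uniform values, it certifies many \emph{uniform-valued} linear inequalities and combines them. For each subset $S \ni 1$ with $|S| \le (1-\gamma)p$, an ellipsoid fitted (via Theorem~\ref{thm:ellipsoid-2}) through the projections of the atoms outside $S$ onto the orthogonal complement of $\spn\rbra{\cbra{d_i : i \in S}\cup\cbra{u}}$ yields the valid inequality $\sum_{i\in S}\alpha_i \le 0$ for every feasible point of Program~\ref{prog:alpha} (Lemma~\ref{lemma:ineq}); here all fitted values equal $1$, so the ellipsoid-fitting technology applies. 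The paper then draws $k^{11}$ random subsets, including atom $i$ with probability $c_i/c_2$, and expresses the objective vector $c$ as a conic combination of the indicators $\one_S$ (plus the sets $\{1\}$ and $\{1,i\}$), with nonnegativity of the coefficients verified by concentration. The non-uniform weights $c_i$ are thus realized by how often atom $i$ appears in $S$, not by a non-uniform certificate, and the threshold $k < (2-\varepsilon)p\log p$ arises from the requirement $|S| \approx k/c_2 \approx k/(2\log k) \le (1-\gamma)p$ (Lemma~\ref{lemma:S-size}) --- a mechanism entirely different from your spectral-norm balance. (Uniqueness is also handled differently, by perturbing $c_1$ rather than by a rank-$(p-1)$ certificate, though that part of your plan would be fine if a certificate existed.)
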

	
	\begin{figure*}[t]
		\centering
		\begingroup
		\setlength{\tabcolsep}{0pt} % Default value: 6pt
		\renewcommand{\arraystretch}{.5} % Default value: 1
		\begin{tabular}{c c c c}
			\includegraphics[width=.29\textwidth]{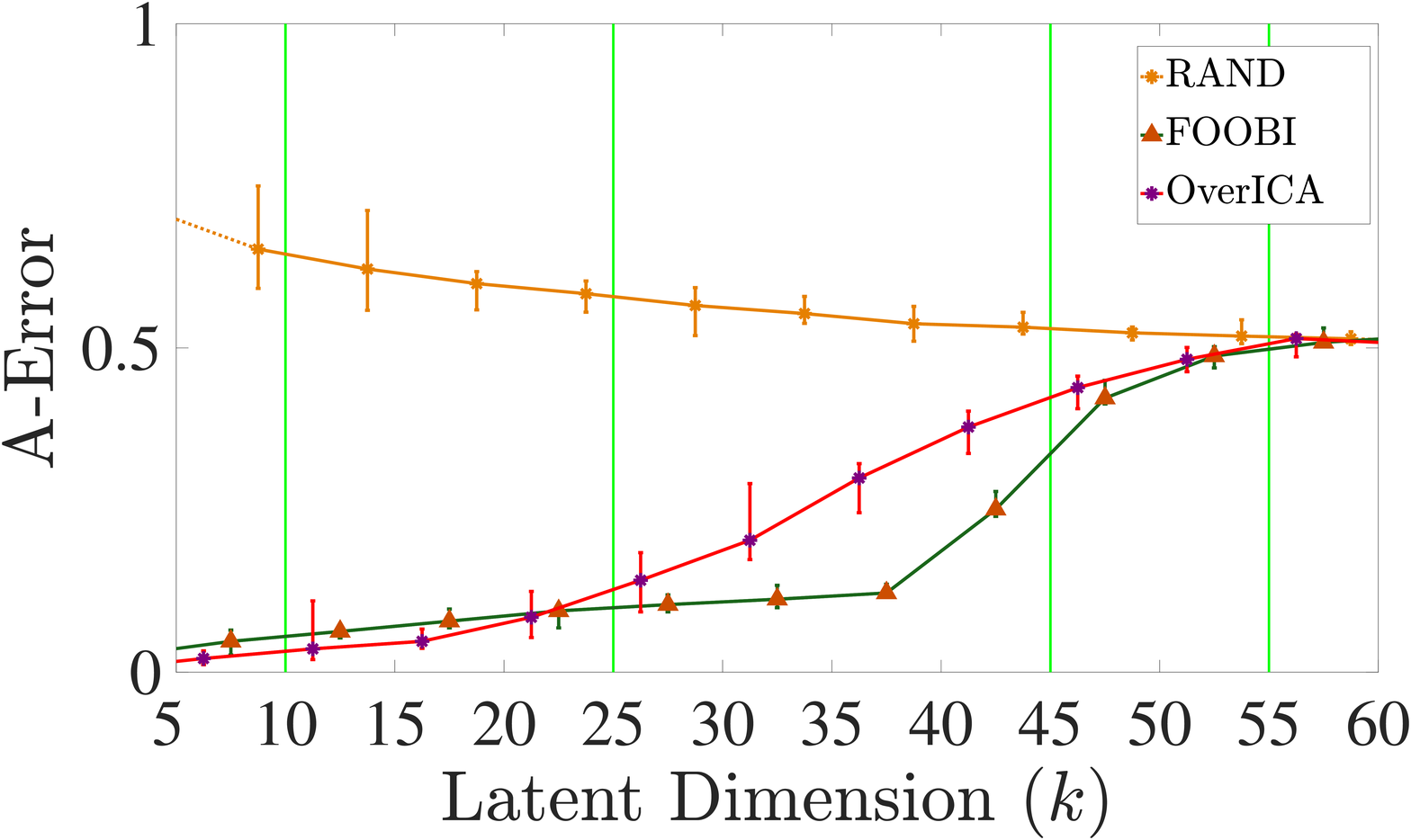}
			&
			\includegraphics[width=.29\textwidth]{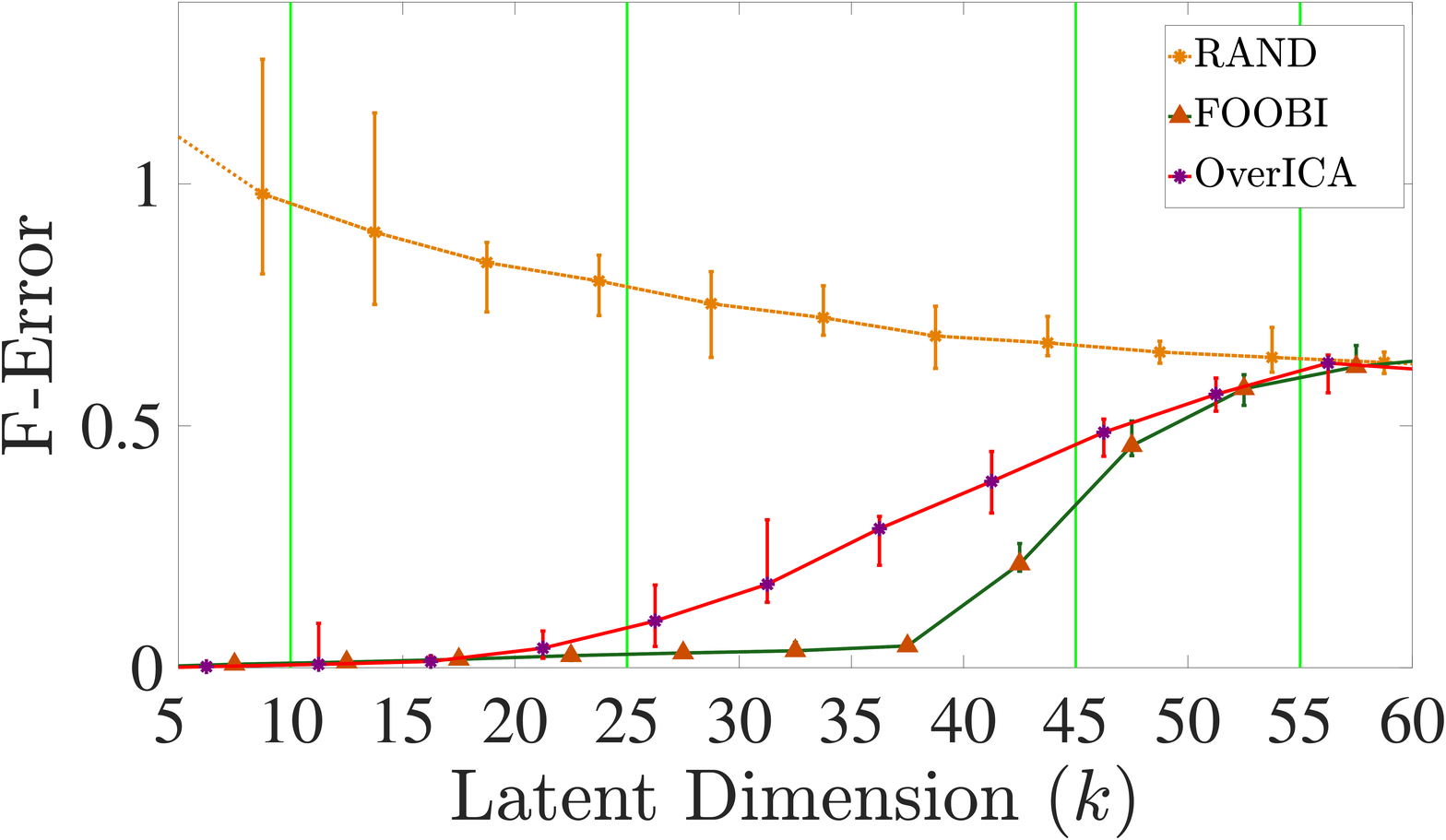}
			&
			\includegraphics[width=.19\textwidth]{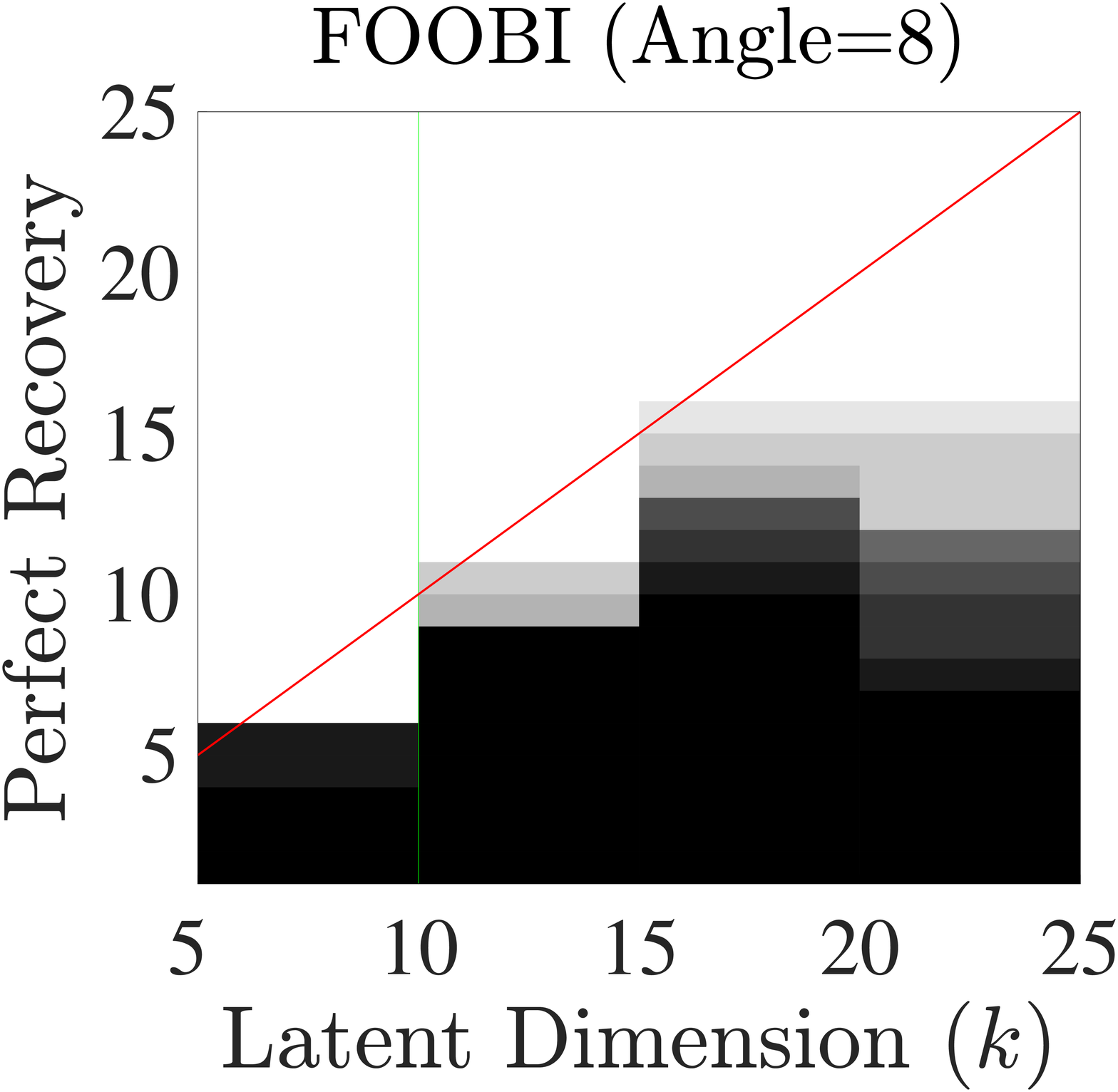}
			&
			\includegraphics[width=.19\textwidth]{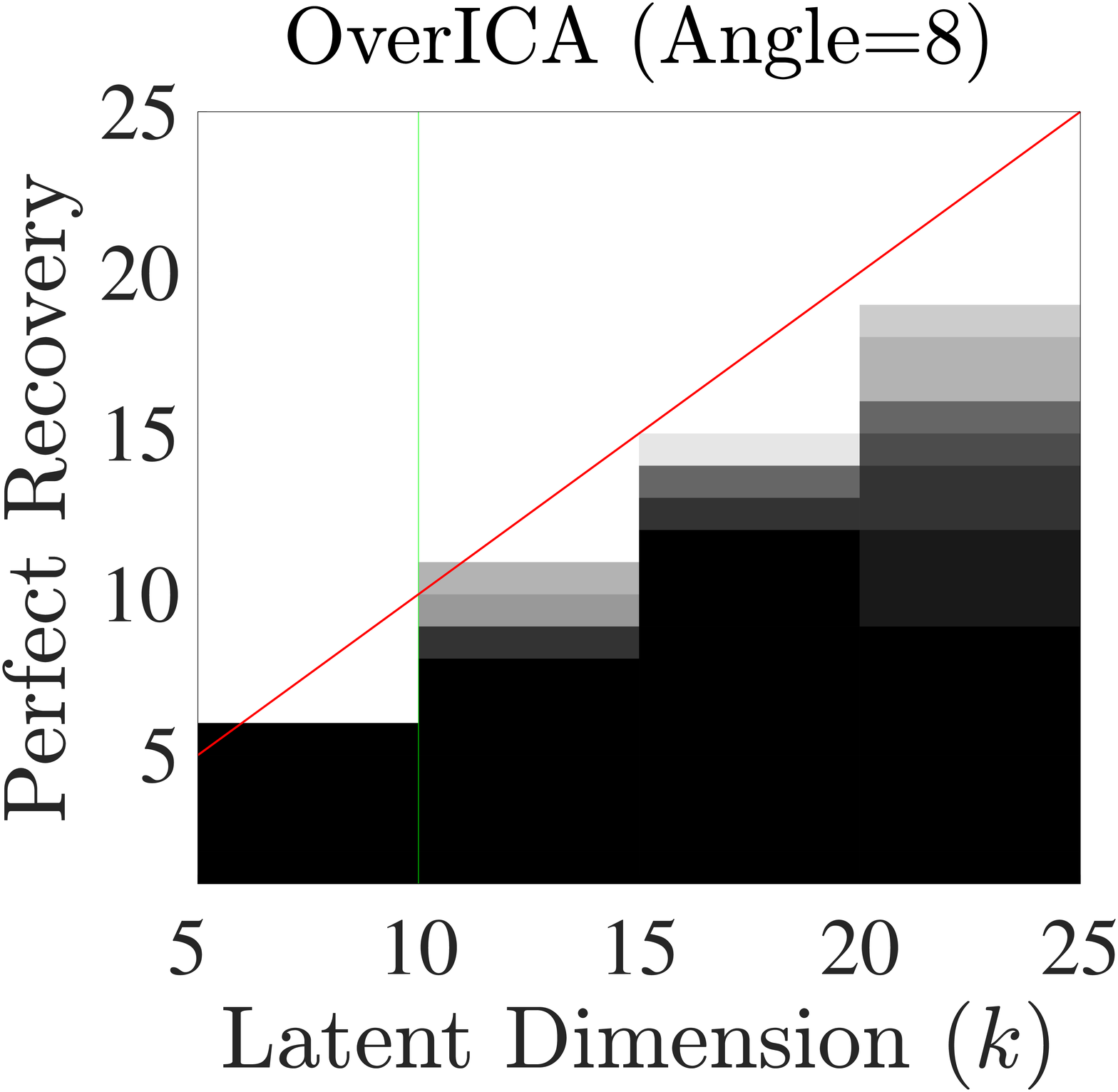}
		\end{tabular}
		\endgroup
		\vspace{-3mm}
		\caption{
			A proof of concept in the asymptotic regime.
			See explanation in Section~\ref{sec-pexps}.
		}
		\label{fig-pop}
		\vspace{-2mm}
	\end{figure*}
	
	\begin{figure*}[!t]
		\centering
		%\vspace{-3mm}
		\begingroup
		\setlength{\tabcolsep}{0pt} % Default value: 6pt
		\renewcommand{\arraystretch}{.5} % Default value: 1
		\begin{tabular}{c@{\hskip 2mm} c@{\hskip 2mm} c@{\hskip 2mm} c}
			\includegraphics[width=.23\textwidth]{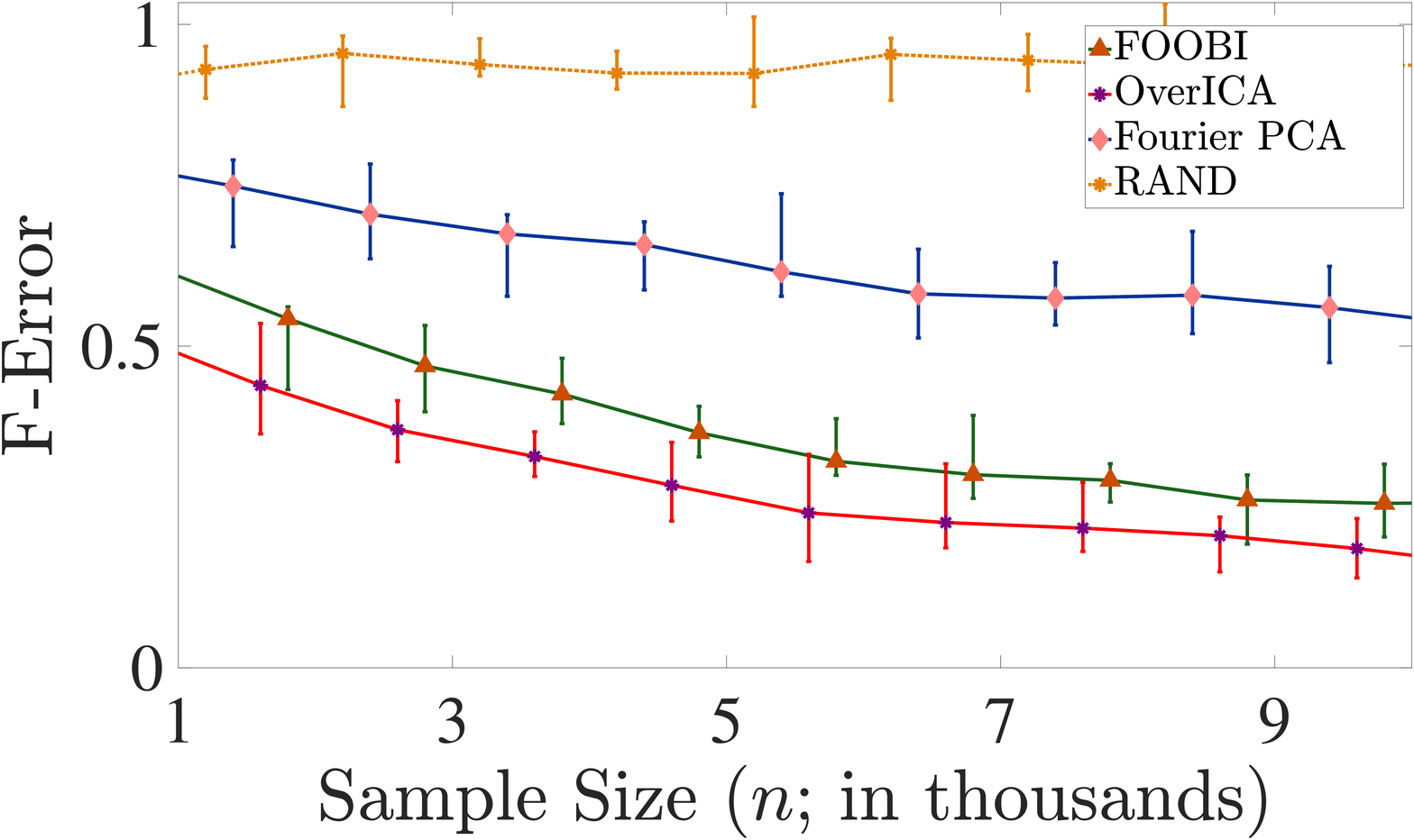}
			& 
			\includegraphics[width=.23\textwidth]{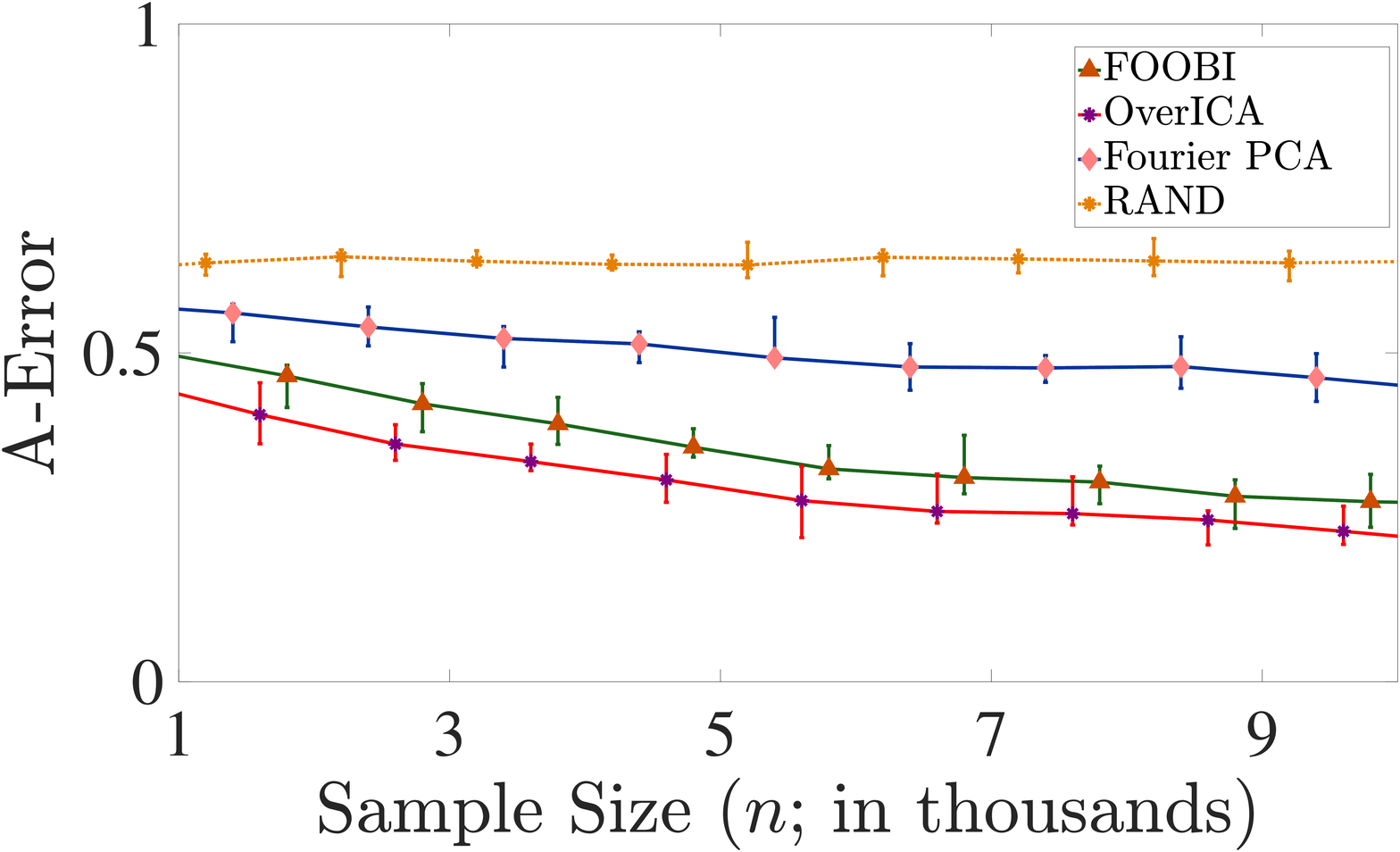}
			&
			\includegraphics[width=.23\textwidth]{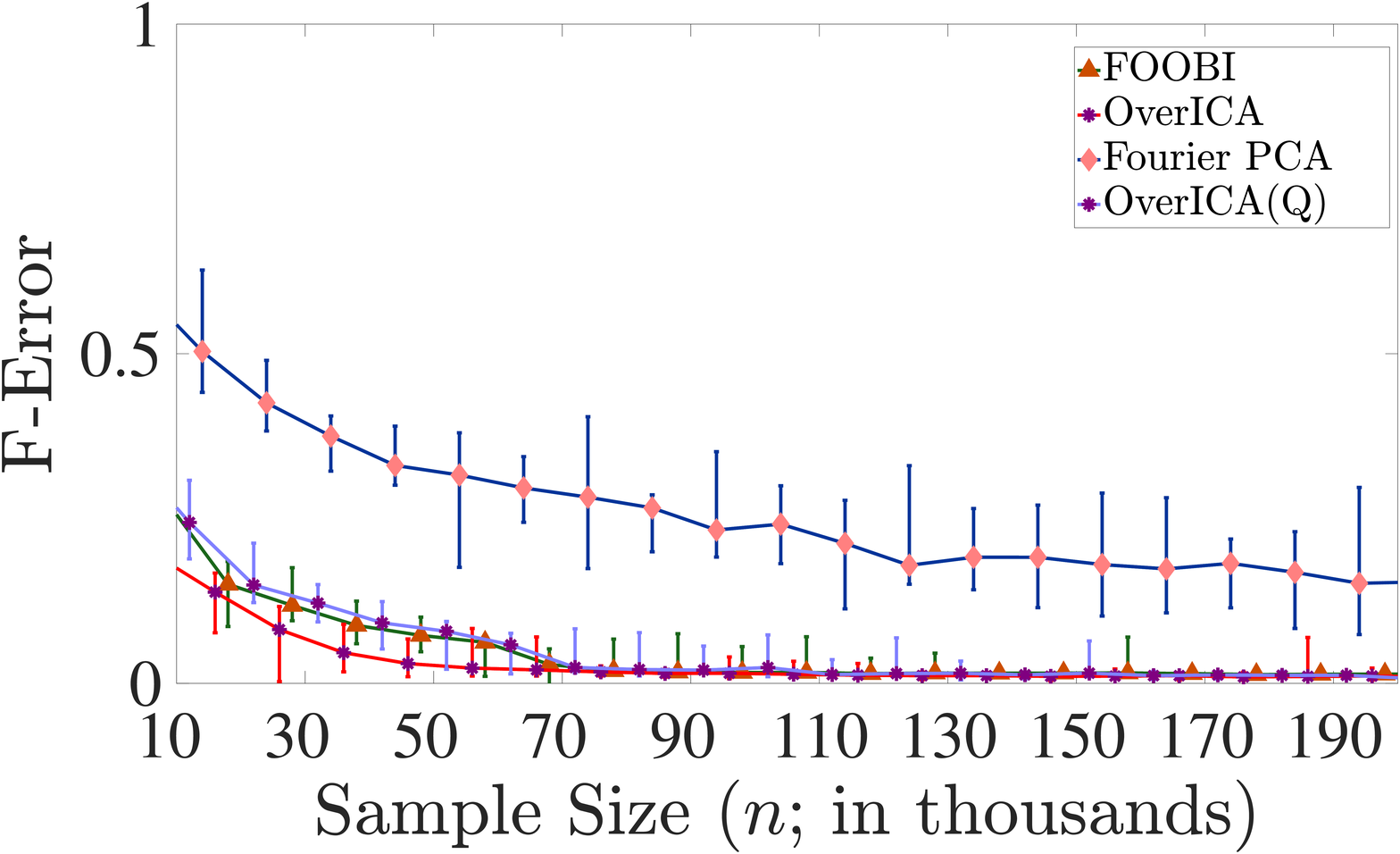}
			&
			\includegraphics[width=.23\textwidth]{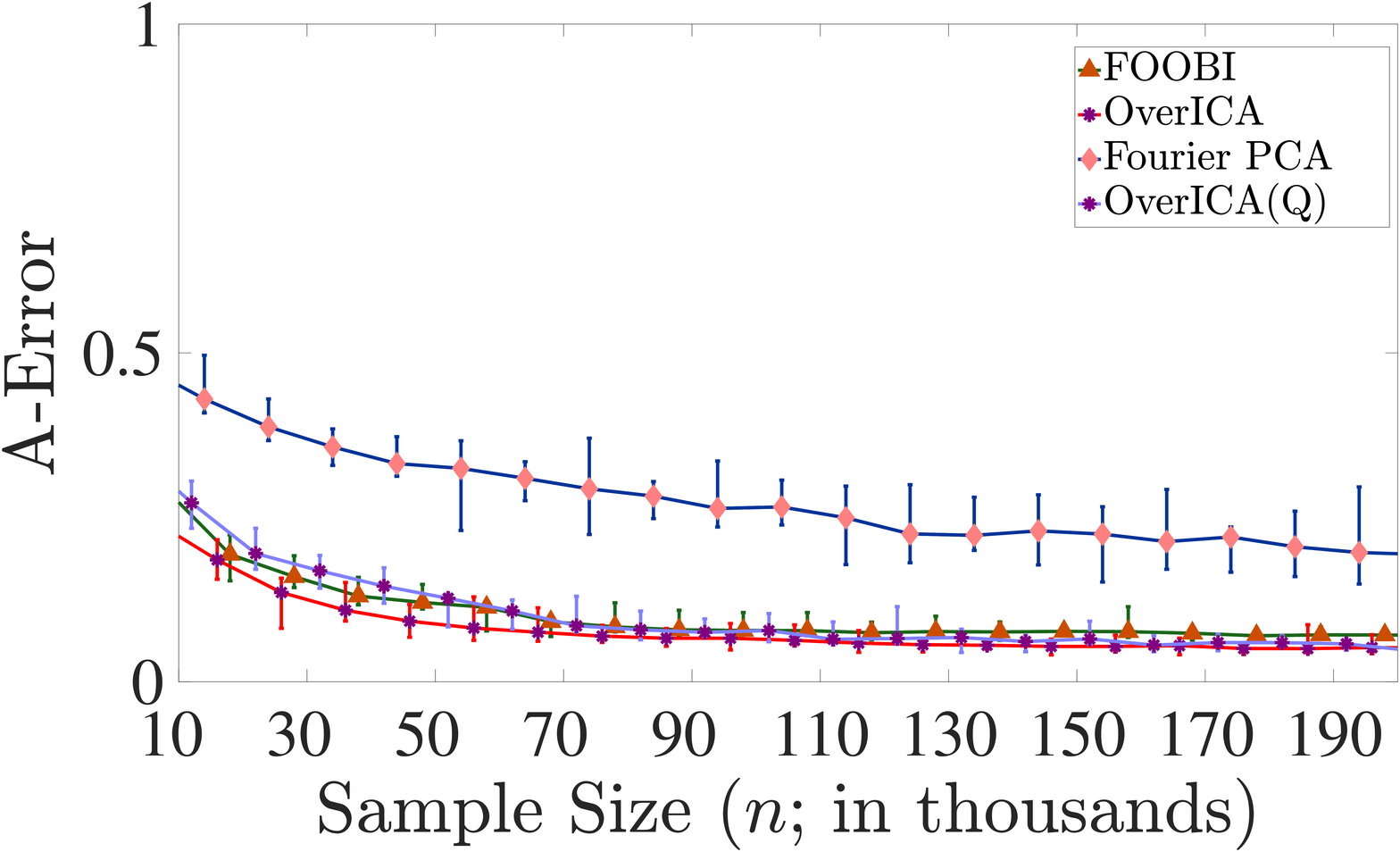}
			\\
			\includegraphics[width=.23\textwidth]{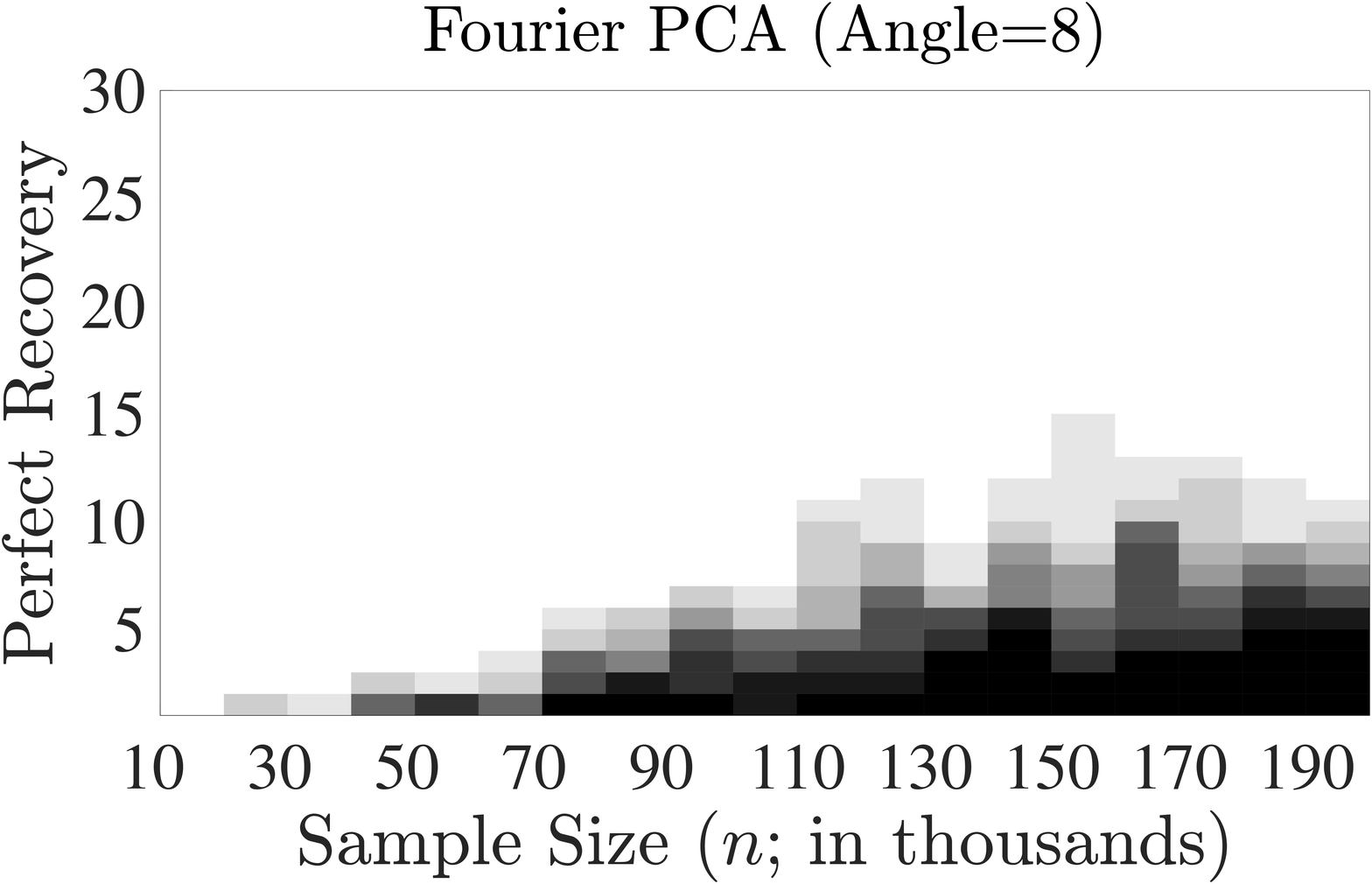}
			&
			\includegraphics[width=.23\textwidth]{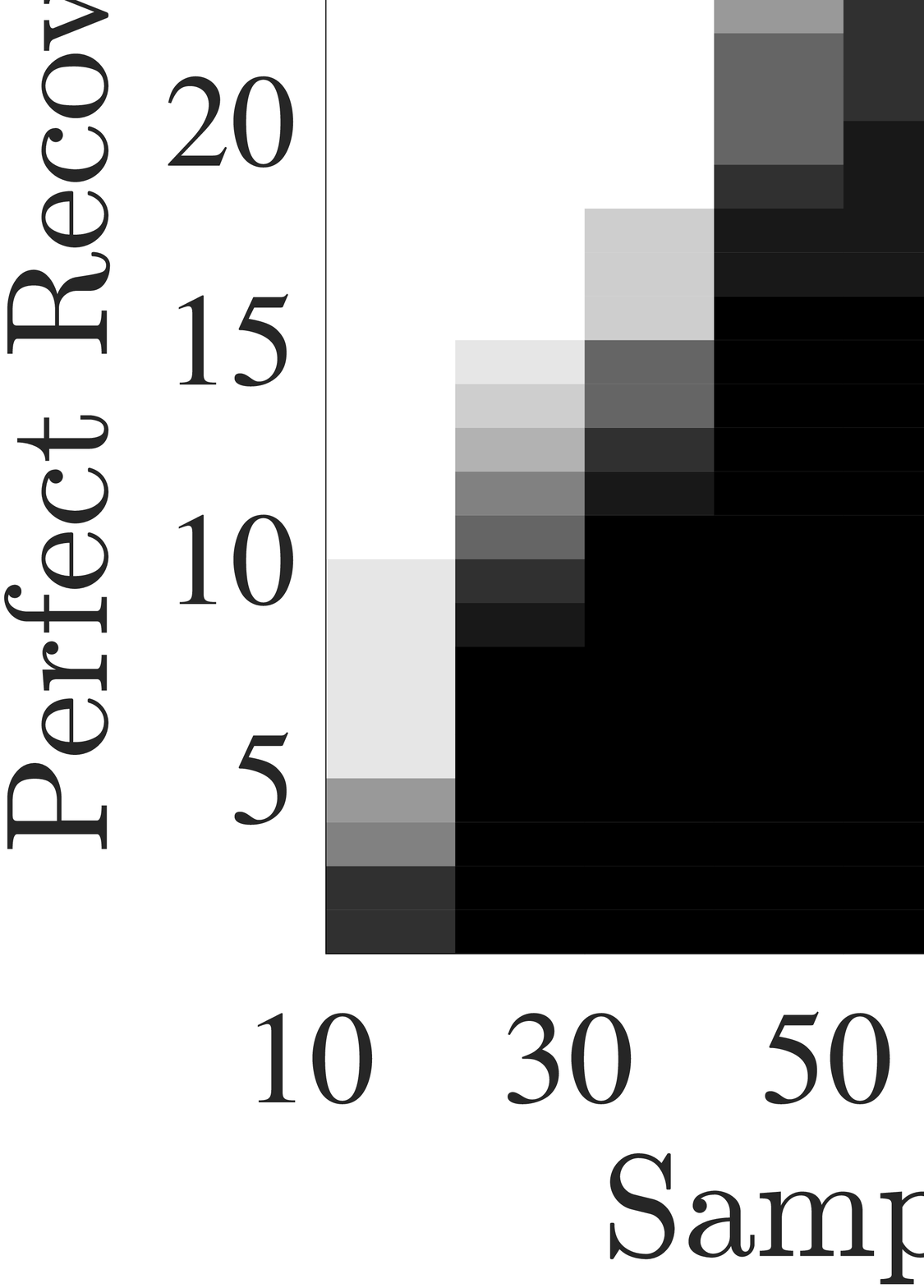}
			&
			\includegraphics[width=.23\textwidth]{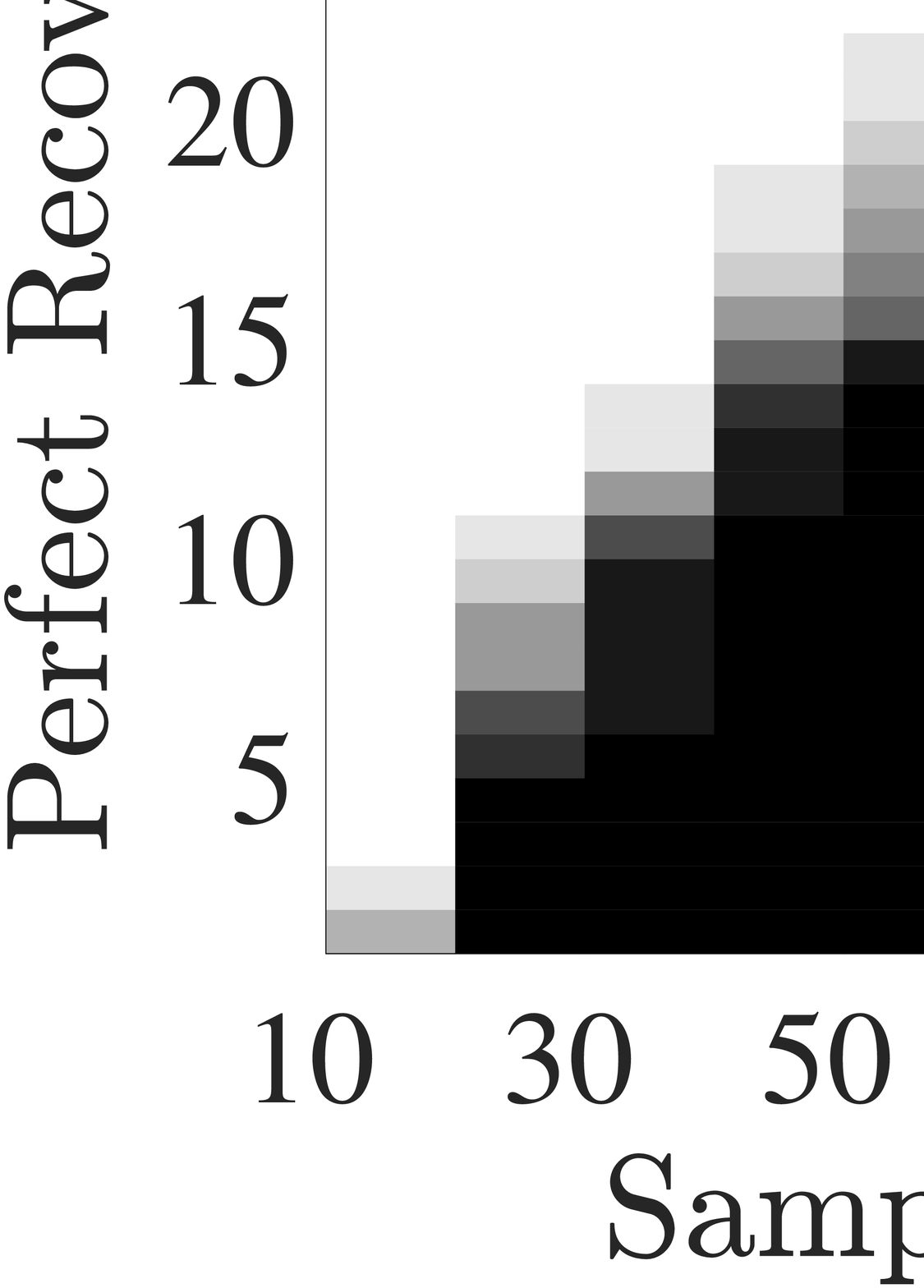}
			&
			\includegraphics[width=.23\textwidth]{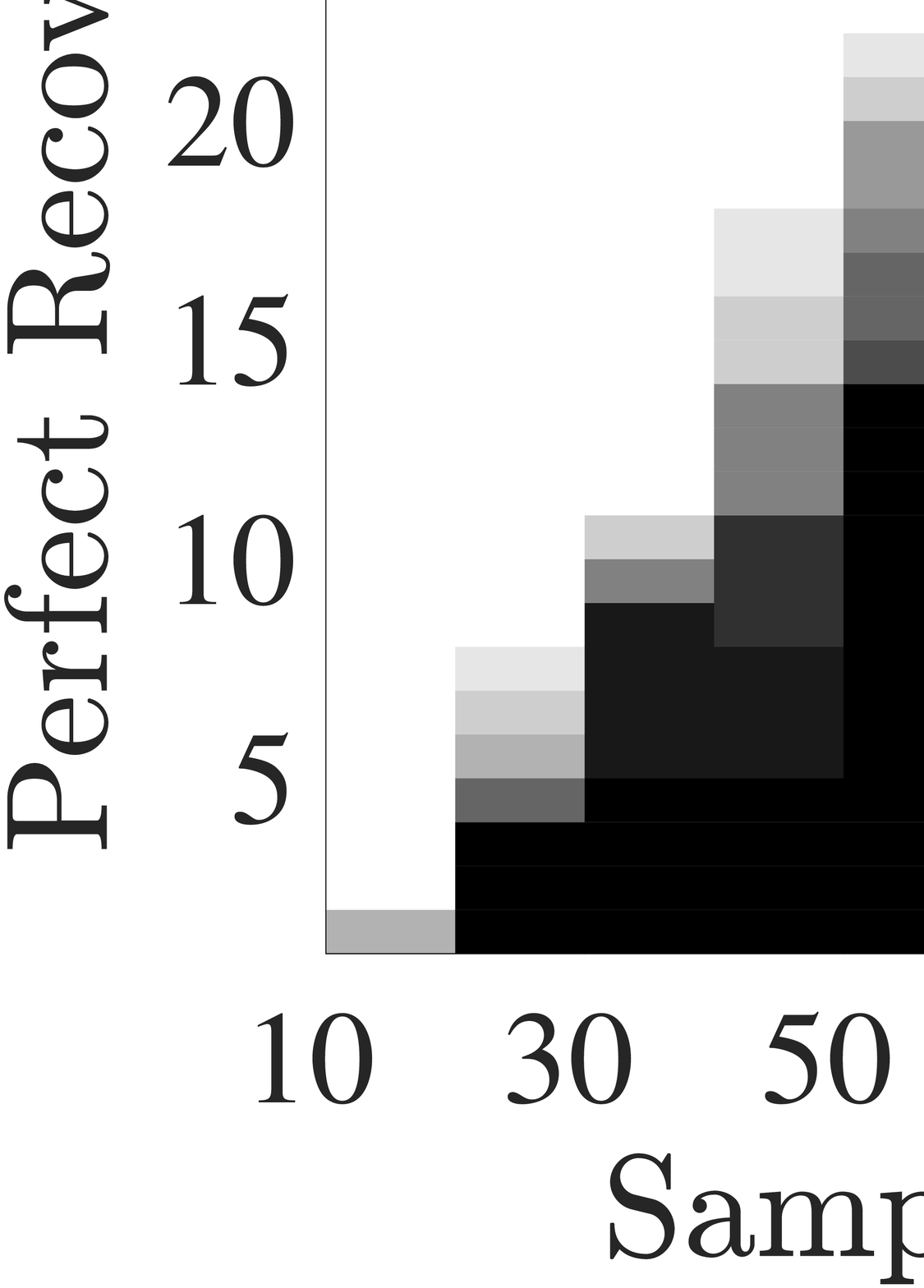}
		\end{tabular}
		\endgroup
		\vspace{-3mm}
		\caption{
			Comparison in the finite sample regime.
			See explanation in Section~\ref{sec-exps-fs}.
		}
		\vspace{-.4cm}
		\label{fig-fs}
	\end{figure*}

	\section{Experiments}
	\label{sec-exps}
	
	It is difficult to objectively evaluate
	unsupervised learning algorithms
	on real data in the absence of ground truth parameters.
	Therefore, we first perform comparison on synthetic data.
	All our experiments can be reproduced with the publicly available code:
	\url{https://github.com/anastasia-podosinnikova/oica}.

\subsection{Synthetic Data: Population Case}
\label{sec-pexps}
	
	As a proof of concept, this simple experiment (Figure~\ref{fig-pop})
	imitates the infinite sample case. 
	Given a ground truth mixing matrix $D$,
	we construct a basis of the subspace $W$
	directly from
	the matrix $A := D\odot D$ (see Appendix~\ref{app-ica-cum4}).
	This leads to a noiseless estimate of the subspace.
	We then evaluate the performance of the second step
	of our OverICA algorithm and compare it with the 
	second step of FOOBI.
	We fix the
	observed dimension $p=10$ and vary the latent dimension from $k=5$ to $k=60$ in steps of
	$5$. For every pair $(p,k)$, we repeat the experiment
	$n_{rep}=10$ times and display the minimum, median, 
	and maximum values. Each time we sample
	the mixing matrix $D$ with mixing components
	from the standard normal distribution
	(see Appendix~\ref{app-samling-mm}).
	Note that we tried
	different sampling methods and distributions of the
	mixing components, but did not observe any significant
	difference in the overall result. 
	See Appendix~\ref{app-sampling-population} for further
	details on this sampling procedure.
	
	The error metrics (formally defined in  Appendix~\ref{app-errors}) are:
	(a) f-error
	 is essentially the relative Frobenius
	norm of the mixing matrices with properly
	permuted mixing components (lower is better);
	(b) a-error measures the angle
	deviations of the estimated mixing components
	vs the ground truth (lower is better); and (c) ``perfect'' recovery 
	rates, which show for every $i\in[k]$ the fraction
	of perfectly estimated $i$ components. 
	We say that a mixing component is ``perfectly'' recovered
	if the cosine of the angle between this component $d_i$ and its ground truth equivalent $d_{\pi(i)}$ is at least $0.99$, i.e., $\cos(d_i,\wh{d}_{\pi(i)}) \ge 0.99$. Note that the respective angle is approximately equal to 8.
	Then the black-and-white perfect recovery
	plots (in Figure~\ref{fig-pop})
	show if $i\le k$ (on the y-axis)
	components were perfectly recovered
	(black)
	for the given latent dimension~$k$
	(x-axis). These black vertical bars
	cannot exceed the red line $i=k$,
	but the closer they approach this line, 
	the better.
	The vertical green lines correspond to 
	$k=p=10$, $k=p^2/4=25$, $k=p(p-1)/2$, 
	and $k=p(p+1)/2$.
	Importantly, we see that  OverICA
	 works better or comparably 
	to  FOOBI  in the regime $k< p^2/4$. 
	Performance of OverICA 
	starts to deteriorate near the regime 
	$k \approx p^2/4$ and beyond, which is in accord with 
	our theoretical results in Section~\ref{sec-theory}.
	Note that to see whether the algorithms work better than random,
	we display the errors of a randomly sampled mixing matrix
	(RAND; see Appendix~\ref{app-samling-mm}).
	
	\subsection{Synthetic Data: Finite Sample Case}
	\label{sec-exps-fs}
	
	With these synthetic data we evaluate performance of overcomplete
	ICA algorithms in the presence of finite sample noise 
	but absence of model misspecification.
	In particular, we sample synthetic data in the observed dimension $p=15$ from
	the ICA model
	with uniformly distributed (on $[-0.5, 0.5]$) $k=30$
	sources for different sample sizes $n$ 
	taking values from 
	$n=1,000$ to $n=10,000$
	in steps of $1,000$ (two left most plots in the top line of Figure~\ref{fig-fs})
	and values from
	$n=10,000$ to $n=210,000$
	in steps of $10,000$
	(two right most plots in the top line of Figure~\ref{fig-fs};
	see also Figure~\ref{fig-fs-more} in Appendix for log-linear scale).
	Note that the choice of dimensions $p=15$ and $k=30$
	corresponds to the regime $k< p^2/4\approx56$
	of our guarantees.
	We repeat the experiment $n_{rep} :=10$ times
	for every $n$ where we every time resample 
	the (ground truth) mixing matrix 
	(with the sampling procedure described in 
	Appendix~\ref{app-samling-mm}).
	See further explanation %of this experiment
	in Appendix~\ref{app-sampling-finite}.
	
	We compare the Fourier PCA algorithm 
	\citep{GoyEtAl2014},
	the FOOBI algorithm \citep{LatEtAl2007},
	OverICA from Algorithm~\ref{alg-overica},
	and a version of the OverICA algorithm 
	where the first step is replaced with the 
	construction based on the fourth-order cumulant, a.k.a.
	quadricovariance
	(OverICA(Q); see Appendix~\ref{app-ica-cum4}).
	Note that we can not compare with the reconstruction ICA
	algorithm by \citet{LeEtAl2011} because it estimates the de-mixing
	(instead of mixing) matrix.\footnote{
		In the complete invertible case, the \emph{de-mixing matrix} would be the 
		inverse of the mixing matrix. In the overcomplete regime, one cannot
		simply obtain the mixing matrix from the de-mixing matrix.
	}
	Similarly to Section~\ref{sec-pexps},
	we measure the Frobenius error (f-error),
	the angle error (a-error), and the perfect recovery
	for the angle of $8$.
	We observe that the generalized covariance-based OverICA
	algorithm performs slightly better which we believe is
	due to the lower sample complexity.
	Fourier PCA on the contrary performs with larger error,
	which is probably due to the higher sample complexity
	and larger noise resulting from estimation using  
	fourth-order generalized cumulants.
	
	\begin{table}[t]
	\caption{
		Computational complexities
		($n$ is the sample size, $p$ is the observed dimension, $k$ is the latent dimension,
		$s$ is the number of generalized covariances, usually $s=O(k)$).	
	} 
	\vspace{-2mm}
	\label{tab-compl}
	\begin{center}
		\begin{tabular}{| l | l | l |}
			\hline
			Procedure & Memory & Time \\
			\hline
			%& & \\
			GenCov          & $O(p^2s)$ & $O(snp^2)$ \\
			CUM               & $O(p^4)$   & $O(np^4+k^2p^2)$ \\ \hline
			FOOBI             & $O(p^4k^2 + k^4)$ & $O(np^4+k^2p^4+k^6)$ \\
			OverICA          & $O(sp^2)$ & $O(nsp^2)$ \\
			OverICA(Q)     & $O(p^4)$   & $O(np^4+k^2p^2)$ \\
			Fourier PCA    & $O(p^4)$   & $O(np^4)$\\%&&\\
			\hline
		\end{tabular}
		\vspace{-5mm}
	\end{center}
\end{table}
\subsection{Computational Complexities}
\label{sec-runtime}
	
	In Table~\ref{tab-compl}, we summarize the timespace complexities of the considered overcomplete
	ICA algorithms and two sub-procedures they use:
	generalized covariances (GenCov; used by OverICA) from Section~\ref{sec-subspace} and the forth-order cumulant (CUM; used by OverICA(Q) and FOOBI; see Appendix~\ref{app-ica-cum4}) 
	(see Appendix~\ref{app-compl}).
	Importantly, we can see that our OverICA algorithm has a significantly lower complexity. 
	In Appendix~\ref{app-compl}, we present runtime comparisons of these algorithms.

\subsection{Real Data: CIFAR-10 Patches}
	Finally, we estimate the overcomplete mixing matrix
	of data formed of patches of the 
	CIFAR-10 dataset \citep[see, e.g.,][]{KriEtAl2014}.
	In particular, we
	transform the images into greyscale and then 
	form $7$-by-$7$ patches for every 
	interior point (at least 3 pixels from the boundary)
	of every image from the training batch~1
	of the CIFAR-10 dataset. 
	This results in $6,760,000$ patches each of dimension $p=49$.
	We perform the estimation 
	of the mixing matrix for $k=150$ latent mixing components.
	The resulting atoms are presented in Figure~\ref{fig-cifar-k150}.
	Note that since ICA is scale (and therefore sign) invariant,
	the sign of every component can be arbitrary flipped.
	We present the obtained components in the scale where black and white
	corresponds to the extreme positive or negative values
	and we observe that these peaks are concentrated in rather pointed areas
	(which is a desirable property of latent components).
	Note that the runtime of this whole procedure was  around~$2$ hours
	on a laptop.
	Due to high timespace complexities (see Section~\ref{sec-runtime}),
	we cannot perform similar estimation neither with FOOBI nor with Fourier
	PCA algorithms.

\begin{figure}[t]
	\vspace{-2mm}
	\hspace{-5mm}\includegraphics[width=1.18\linewidth,left,trim= 150 0 0 0,clip]{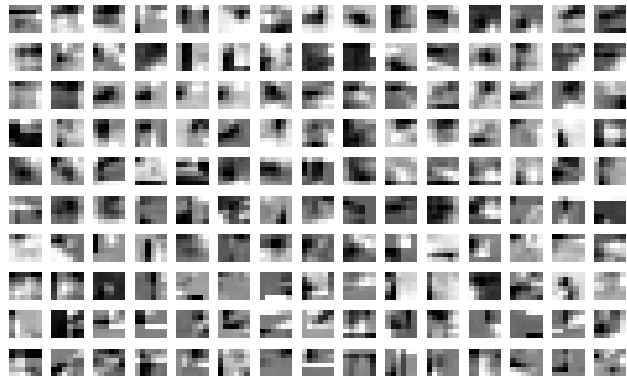}
	\vspace{-8mm}
	\caption{
		Mixing components obtained
		from $7$-by-$7$ patches, i.e., $p=49$, 
		of the CIFAR-10 dataset ($k=150$, i.e., overcomplete).
		ICA does not preserve non-negativity and the 
		signs of ICA mixing components can be arbitrarily flipped
		due to the scaling unidentifiability;
		here black and white correspond to the extreme positive and extreme negative values. 
		The colorbar limits of every image are the same and the signs are alligned to have positive scalar product
		with the first component.
	}
	\vspace{-5mm}
	\label{fig-cifar-k150}
\end{figure}

\section{Conclusion}
	We presented a novel ICA algorithm for
	estimation of the latent overcomplete 
		mixing matrix.
	Our algorithm also works in the (under-)complete setting, enjoys lower computational
	complexity, and comes with theoretical guarantees,
	which is also confirmed by experiments.

\section*{Acknowledgements}
	A. Podosinnikova was partially supported by DARPA grant \#W911NF-16-1-0551.
	A. Podosinnikova and D. Sontag were partially supported by NSF CAREER
	award \#1350965. 
	This work was supported in part by NSF CAREER Award CCF-1453261 and a grant from the MIT NEC Corporation.
	Part of this work was done while A. S. Wein was at the Massachusetts Institute of Technology. A. S. Wein received Government support under and awarded by DoD, Air Force Office of Scientific Research, National Defense Science and Engineering Graduate (NDSEG) Fellowship, 32 CFR 168a. A. S. Wein is also supported by NSF grant DMS-1712730 and by the Simons Collaboration on Algorithms and Geometry.

\bibliographystyle{plainnat}
\bibliography{lit}
\cleardoublepage

\begin{appendices}

	\section{Apendix: Technical Details}
	
	\subsection{Matricization and Vectorization}
	\label{app-mat-vec}
	
	One can vectorize a matrix by stacking its columns and
	one can matricize a vector by performing the reverse of
	the vectorization operation. Below we formalize these notions.
	
	\subsubsection{Vectorization}
	
	Given a matrix $X\in\R^{m\times n}$, we define its \emph{vectorization}
	as $x := \vect(X)\in\R^{mn}$ such that
	\begin{equation}
	\label{def-vectorization}
	x_{(i-1)m+j} := X_{ij}, \quad \text{for all} \quad i\in[m], \; j\in[n].
	\end{equation}
	
	We also use in this paper the fact that the vectorization of a
	rank matrix $X := a b^{\top}$ is equal to the Khatri-Rao product
	(See Appendix~\ref{app-khatri-rao}) of the vectors $a$ and $b$,
	i.e.,
	\begin{equation}
	\label{krprod-rank1}
	\vect(ab^{\top}) = a \odot b.
	\end{equation}
	
	\subsubsection{Matricization}
	
	We define the \emph{matricization} operation as the inverse of the vectorization operation, 
	i.e. the matricization of a vector $x\in\R^{mn}$ is a matrix 
	$X := \mat(x) \in\R^{m \times n}$ such that
	\begin{equation}
	\label{def-matricization}
	X_{ij} := x_{(i-1)m + j}, \quad \text{for all} \quad i\in[m], \; j\in[n].
	\end{equation}

	\subsection{The Kharti-Rao Product}
	\label{app-khatri-rao}
	
	The \emph{Khatri-Rao product} of two matrices $A\in\R^{n \times k}$
	and $B\in\R^{m \times k}$, with columns $b_j$,
	for $j\in[k]$, is an $(nm\times k)$-matrix  $A\odot B$ of the form:
	\begin{equation}
	\label{def-khatri-rao}
	A\odot B := \begin{pmatrix}
	A_{11} b_1 & A_{12} b_2 & \dots & A_{1k} b_k \\
	A_{21} b_1 & A_{22} b_2 & \dots & A_{2k} b_k \\
	A_{31} b_1 & A_{32} b_2 & \dots & A_{3k} b_k \\
	\dots & \dots & \dots & \dots \\
	A_{n1} b_1 & A_{n2} b_2 & \dots & A_{nk} b_k
	\end{pmatrix}.
	\end{equation}
	Note that although the Khatri-Rao product of two vectors 
	coincide with the Kronecker product of these vectors,
	the two products are different in general.
	
	Moreover, as we mentioned in Appendix~\ref{app-mat-vec},
	the vectorization of a rank-1 matrix is equal to the 
	Khatri-Rao product of respective vectors.

	\section{Apendix: Independent Component Analysis}
	
	\subsection{The Fourth-Order Cumulant and Kurtosis}
	\label{app-cum-kurt}
	
	\subsubsection{The Fourth-Order Cumulant}
	\label{app-def-cum}
	
	Given a $p$-valued zero-mean random vector $x$, its
	\emph{fourth-order cumulant} is the symmetric 
	$(p\times p\times p\times p)$-tensor $\ccal_x$ such that
	\begin{equation}
	\label{def-cum4}
	\begin{aligned}
	&{ [\ccal_x^{(4)}] }{}_{i_1i_2i_3i_4} := \cum[x_{i_1}, x_{i_2}, x_{i_3}, x_{i_4}] \\
	&:= \E[ x_{i_1} x_{i_2} x_{i_3} x_{i_4}] 
	- \E[x_{i_1}x_{i_2}] \E[x_{i_3} x_{i_4}] \\
	&- \E[x_{i_1}x_{i_3}]\E[x_{i_2}x_{i_4}]
	- \E[x_{i_1}x_{i_4}]\E[x_{i_2}x_{i_4}].
	\end{aligned}
	\end{equation}
	If $x$ is not zero-mean, this definition is instead
	applied to the variable $\tilde{x} := x - \E(x)$.
	
	\subsubsection{Kurtosis}
	\label{app-def-kurt}
	
	The \emph{kurtosis} of a univariate zero-mean random variable $\alpha$
	is the number $\kappa_{\alpha}$ such that
	\begin{equation}
	\label{def-kurtosis}
	\kappa_{\alpha} := \cum[\alpha,\alpha,\alpha,\alpha]
	= \E[\alpha^4] - 3\E[\alpha^2]\E[\alpha^2].
	\end{equation}
	Note that:
	\begin{itemize}
		\item[-]
		If $\alpha$ is from the standard normal distribution 
		then $\kappa_{\alpha} = 0$;
		
		\item[-]
		If $\alpha$ is from the uniform distribution such that
		$\E(\alpha) = 0$ and $\var(\alpha) = 1$ then $\kappa_{\alpha} = -1.2$;
		
		\item[-]
		If $\alpha$ is from the Laplace distribution such that
		$\E(\alpha) = 0$ and $\var(\alpha) = 1$ then $\kappa_{\alpha} = 3$.
	\end{itemize}

	\subsection{The Fourth-Order Cumulant of the ICA Model}
	\label{app-ica-cum4}

	In this section, we recall the form of the
	fourth-order cumulant of the ICA model
	originally utilized by \citet{LatEtAl2007}
	for the FOOBI algorithm.
	This cumulant can be used for the construction of subspace $W$
	as an alternative to the procedure presented in Section~\ref{sec-subspace}.
	
	The formal definition of the fourth-order cumulant can be found in Appendix~\ref{app-def-cum}.
	By the multi-linearity and independence properties of cummulants \citep[see, e.g.,][Chapter 5]{ComJut2010},
	the fourth-order cumulant of the ICA model~\eqref{ica} is the tensor 
	$$
	\ccal_x^{(4)} = \sum_{i=1}^k \kappa_{\alpha_i} d_i \otimes d_i \otimes d_i \otimes d_i,
	$$
	where $\otimes$ stands for the outer product and $\kappa_{\alpha_i}$ is the 
	kurtosis of the $i$-th source (see Appendix~\ref{app-def-kurt} for the definition).
	As \citet{LatEtAl2007} show, the flattening of this tensor\footnote{
		The flattening of a fourth-order tensor can be defined by analogy of the vectorization of a matrix.
		In case of the ICA model, the order of indices is indifferent due to symmetry. 
		See an example of a flattening in \citet{LatEtAl2007}. 
	}
	is a matrix $C \in \R^{p^2 \times p^2}$ such that
	\begin{equation}
	\label{flattening-cum4}
	C = (D\odot D) \diag(\kappa) (D \odot D)^{\top},
	\end{equation}
	where $\odot$ stands for the Khatri-Rao product (recall the definition is Appendix~\ref{app-khatri-rao})
	and the $i$-th element of the vector $\kappa\in\R^k$ is the kurtosis $\kappa_{\alpha_i}$ of the $i$-th
	source $\alpha_i$.
	
	This expression~\eqref{flattening-cum4}
	is the key for the subspace construction.
	To see that, let us have a look at the matrix $A \in \R^{p^2 \times k}$ such that
	\begin{equation}
	\label{def-a}
	A : = D \odot D.
	\end{equation}
	The $i$-th column of this matrix $a_i := A_{:,i} = d_i \odot d_i$
	and its matricization $A_i := \mat(a_i) = \mat( d_i \odot d_i ) = d_i d_i^{\top}$
	is equal to the $i$-th atom. Therefore, using this flattening-matricization
	trick one can easily obtain an estimate of the subspace $W$.
	Indeed, a basis of $W$ can be constructed as a matricization of a basis
	of the column space of $C$. The latter can be obtained, e.g., via the eigen decomposition
	or singular value decomposition of $C$. In particular,
	let
	$C = U \Sigma V$
	be the SVD of $C$. Then a basis of the subspace $W$ can be constructed as
	$H_i := \mat(u_i)$ for every $i\in[k]$, where $u_i$ is the $i$-th left singular vector,
	and it holds that
	\begin{equation}
	\label{subspace-from-cum4}
	W = \spn\cbra{\mat(u_1), \dots, \mat(u_k)}.
	\end{equation}
	This exact construction is used in a practical implementation
	of the FOOBI algorithm and can be used as a replacement of the Step I in 
	the OverICA Algorithm~\ref{alg-overica}.

	\section{The Semi-Definite Program}
	
	\subsection{Algorithm for the Relaxation~\eqref{def-sdp-relax} }
	\label{app-sdp-fista}

	This section applies the FISTA algorithm by~\cite{BecTeb2009}
	for finding the solution of the convex relaxation~\eqref{def-sdp-relax}.
	
	\subsubsection{FISTA}
	
	Let $\kcal$ be a set of all symmetric PSD matrices with unit trace, 
	i.e. $\kcal := \cbra{B \in \scal_p : B \succeq 0, \tr(B) = 1}$,
	where $\scal_p$ denotes the set of all symmetric matrices in $\R^{p\times p}$.
	Let
	$g(B) := \iota_{\kcal}(B)$ be the indicator function of the set~$\kcal$ and
	let the negative objective of the problem~\eqref{def-sdp-relax} be
	$$
	f(B) := - \inner{G,B} + \frac{\mu}{2} \sum_{j\in[m-k+t]} \inner{B,F_j}^2.
	$$
	We then can solve the problem 
	$$\min_{B \in \R^{p\times p}} f(B) + g(B)$$ 
	with FISTA~\cite{BecTeb2009}. 
	The gradient of the differentiable part of the objective is
	$$
	\nabla f(B) = - G^{\top} + \mu \sum_{j\in[m-k+t]} \tr(F_jB)F_j^{\top},
	$$ 
	where  we used the fact that its Lipschitz constant is $L=\mu$. 
	This is summarized in Algorithm~\ref{alg-fista}.  The 
	projection on $\kcal$ can be computed roughly in $O(p^3)$ time
	(see Section~\ref{app-projection-k}).
	\begin{algorithm}[!ht]
		\caption{FISTA for~\eqref{def-sdp-relax}}
		\label{alg-fista}
		\begin{algorithmic}[1]
			\STATE Input: $Y^{(1)} = B^{(0)} \in\scal_p$, $z_1=1$
			\WHILE {not converged or $n>n_{max}$}
			\STATE $B^{(n)} = \proj_{\kcal}\sbra{Y^{(n)}-\frac{1}{L}\nabla f(Y^{(n)})}$
			\STATE $z_{n+1} = \frac{1}{2}\rbra{1+\sqrt{1+4z_n^2}}$
			\STATE $Y^{(n+1)} = B^{(n)} + \rbra{\frac{z_{n}-1}{z_{n+1}}}(B^{(n)}-B^{(n-1)})$
			\STATE $n\leftarrow n+1$
			\ENDWHILE
			\STATE Output: $B^{\ast} = B^{(n)}$ 
		\end{algorithmic}
	\end{algorithm}
	
	\subsubsection{Projection onto $\boldsymbol{\kcal}$}
	\label{app-projection-k}
	
	The projection onto the set
	$\kcal := \cbra{B \in \scal_p : B \succeq 0, \tr(B) = 1}$
	of a symmetric matrix $B$ can 
	be computed by first computing the eigendecomposition of this matrix
	$B = V\Lambda V^{\top}$ and then projecting its eigenvalues 
	$\lambda = \diag(\Lambda)$ onto the probability simplex $\Delta_p$. Then
	the projection is obtained as $\proj_{\kcal}(B) = V \diag[ \proj_{\Delta_p}(\lambda) ] V^{\top}$. 
	Note that the probability simplex is defined as
	$\Delta_p := \cbra{x\in\R^p : \|x\|_1 = 1, x \succcurlyeq 0}$
	and the projection onto the probability simplex can be computed 
	in linear time \citep[see, e.g.,][]{DucEtAl2008}.

	\subsubsection{Majorization-Minimization} 
	
	We observe in practice that
	our problem benefits significantly from the majorization-minimization 
	approach \citep[see, e.g.,][]{HutLan2004}, i.e. earlier stopping of the procedure in Algorithm~\ref{alg-fista}
	and restarting it again with a matrix $B^{(0)}_{next}$ obtained
	from the largest eigenvector of the matrix $B^{\ast}_{prev}$.
	We experimentally found that performing rather large number of
	majorization minimization steps (e.g., approx. 50) but with the relatively small
	maximal number of iterations $n_{max}=100$ gives the best experimental
	performance in terms of convergence speed and runtime. In particular, this is the setting we used for the 
	experiments presented in Section~\ref{sec-exps}.

	\subsubsection{The Choice of $G$} 
	\label{sec-choose-g}
	
	The choice of the matrix $G$ is an important part
	of our deflation procedure. We found experimentally
	that choosing $G$ that belongs to the subspace $W^{(t)}$
	is beneficial. The latter is also without loss of generality,
	since if $G$ has a component that does not belong to the 
	subspace, the inner product $\inner{G,B}$ of the objective
	would not be affected by that part as it would be orthogonal to $B$.
	
	In particular, let $F^{(t)} := \{f_j^{(t)}, j\in[m-k+t]\}$, where $f_j^{(t)} := \vect(F_j^{(t)})$,
	be a basis of the orthogonal complement at the $t$-th 
	deflation step.
	Let $H^{(t)} := \{h_i^{(t)}, i\in[k-t]\}$ be a basis of the orthogonal complement 
	of $F^{(t)}$, i.e. a basis of $W^{(t)}$. Let $u^{(t)}$ be the first left singular vector of $H^{(t)}$,
	then we set 
	$G^{(t)} := u^{(t)} u^{(t)\top}$.

	\subsection{Theory}
	
	We first recall the ellipsoid fitting problem
	and results related to the proof of our main
	Theorem~\ref{thm:main}.
	Note that we also prove slightly stronger 
	result for the ellipsoid fitting problem
	in Theorem~\ref{thm:ellipsoid-2}.
	
	We then proceed as follows. In Appendix~\ref{app-proof-lemma-extreme-points},
	we prove Lemma~\ref{lem-extreme-points}
	about extreme points of the constraint set of the 
	program~\eqref{def-sdp-theory}.
	Then, in Appendix~\ref{app-dual},
	we derive the dual of the program~\eqref{def-sdp-theory}
	and show its close relation to the ellipsoid fitting problem.
	Finally, in Appendix~\ref{app-proof-main},
	we prove Theorem{thm:main}
	by constructing a appropriate ellipsoid.

	\subsubsection{Ellipsoid Fitting}
	\label{app-ellipsoid-fitting}

	% statement of ellipsoid fitting
	Ellipsoid fitting is the following elementary geometric question: given $k$ points $v_1,\ldots,v_k \in \R^p$, does there exist an ellipsoid passing exactly through them? That is, does there exist a matrix $Y \succeq 0$ with $v_i^\top Y v_i = 1$?
	
	We consider this problem in an average-case regime in which $p \to \infty$ and the $v_i$ are chosen independently from some distribution. \cite{SauEtAl2013} considers the case where the $v_i$ are standard Gaussians, and obtains the following:
	\begin{theorem}[\cite{SauEtAl2013}] Suppose $k \le p^{6/5-\varepsilon}$ for some fixed $\varepsilon > 0$. Then with very high probability\footnote{We take ``very high probability'' to indicate probability converging to $1$ at a rate faster than any inverse polynomial, as $p \to \infty$; ``high probability'' simply indicates probability converging to $1$ at any rate.} over $v_1,\ldots,v_k \in \R^p$ drawn independently from $\ncal(0,I)$, there exists an ellipsoid passing through those points. 
	\end{theorem}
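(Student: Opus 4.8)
The plan is to exhibit a fitting ellipsoid explicitly via a least-squares (minimum Frobenius-norm) construction and then certify that it is positive semidefinite with very high probability. Since $v_i \sim \ncal(0,I)$ has $\norm{v_i}^2 \approx p$, the natural centre is $\frac1p I$: writing $Y = \frac1p(I + Z)$, the requirement $v_i^\top Y v_i = 1$ becomes the linear condition $\inner{v_iv_i^\top, Z} = r_i$, where $r_i := p - \norm{v_i}^2$ are mean-zero with $\var(r_i) = 2p$. Thus it suffices to produce a symmetric $Z$ solving this linear system with $\lambda_{\mathrm{min}}(Z) > -1$, since then $Y = \frac1p(I+Z) \succeq 0$ and $v_i^\top Y v_i = \frac1p(\norm{v_i}^2 + r_i) = 1$ exactly. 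I would take $Z$ to be the minimum-norm solution, $Z = \sum_{i=1}^k \mu_i\, v_iv_i^\top$ with $\mu = M^{-1} r$ and $M_{ij} := (v_i^\top v_j)^2$; this is well defined once $M$ is invertible, which (as shown below) holds whp and requires only $k \le p(p+1)/2$.

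First I would analyze the kernel matrix $M$. Its diagonal entries $\norm{v_i}^4$ concentrate around $p^2$, while its off-diagonal entries $(v_i^\top v_j)^2$ have mean $\approx p$, so $M$ concentrates around $M_0 := (p^2 - p) I + p\,\one\one^\top$. I would invert $M_0$ by Sherman--Morrison and then control $M^{-1}$ by a Neumann expansion in the fluctuation $\tilde R := M - M_0$, which converges provided $\norm{M_0^{-1}}_{\mathrm{op}}\,\norm{\tilde R}_{\mathrm{op}} < 1$. Since $\norm{M_0^{-1}}_{\mathrm{op}} \approx p^{-2}$, this needs the random-matrix bound $\norm{\tilde R}_{\mathrm{op}} \lesssim p\sqrt{k} \ll p^2$, established via matrix concentration (a moment or matrix-Bernstein argument for the symmetric matrix with entries $(v_i^\top v_j)^2 - p$, together with $\chi^2$ control of the diagonal). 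This yields both invertibility of $M$ and the estimate $\mu_i \approx r_i/p^2$ up to explicitly controlled corrections.

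Next I would bound the operator norm of $Z = V\,\diag(\mu)\,V^\top$, where $V := (v_1,\dots,v_k) \in \R^{p\times k}$. The crude inequality $\norm{Z}_{\mathrm{op}} \le \norm{Z}_F = \sqrt{r^\top M^{-1} r} \approx \sqrt{2k/p}$ already certifies the construction for $k \lesssim p$, but this is far from $p^{6/5}$; the improvement comes from a genuine operator-norm analysis that exploits cancellation in the signed sum $\sum_i \mu_i v_iv_i^\top$. Combining the Gaussian bound $\norm{V}_{\mathrm{op}} \lesssim \sqrt{k}$ (Bai--Yin / Davidson--Szarek) with the entrywise control of $\mu$ established above and a moment method for the weighted random rank-one sum, I expect to show $\norm{Z}_{\mathrm{op}} \to 0$ whenever $k \le p^{6/5-\varepsilon}$, whence $\lambda_{\mathrm{min}}(Z) > -1$ and $Y$ is a valid ellipsoid. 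Finally, all the underlying deviation events---the $\chi^2$ tails for $\norm{v_i}^2$ and $r_i$, and the operator-norm bounds for $V$ and $\tilde R$---admit super-polynomially small failure probabilities by Gaussian concentration (Borell--TIS, Hanson--Wright), upgrading the conclusion to the ``very high probability'' claimed.

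The hard part will be the operator-norm bound on $Z$: the weights $\mu = M^{-1} r$ depend on all of $v_1,\dots,v_k$ simultaneously through $M^{-1}$, so $Z$ is not a sum of independent terms, and the off-diagonal coupling in $M$ (its $p\,\one\one^\top$ component) is large enough that the one-step approximation $\mu \approx r/p^2$ is inadequate. It is precisely the need to track the corrections to $M^{-1}$ away from $p^{-2} I$ that limits the guarantee to $p^{6/5}$ rather than the conjectured $p^2/4$.
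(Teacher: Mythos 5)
Your construction is exactly the one used by the paper (and by Saunderson et al., whose result this is): center the quadratic form at $I/p$ and correct it by a term $\sum_i \mu_i v_i v_i^\top$ whose coefficients solve the linear system with kernel matrix $M_{ij} = (v_i^\top v_j)^2$; your $Z$ is precisely $p\,\mathcal{A}^\dagger h$ in the paper's notation, with $h_i = 1 - \|v_i\|^2/p = r_i/p$. The paper then certifies positive semidefiniteness by combining two estimates: the chi-squared tail bound $\|h\|_\infty \le 2p^{-1/2+\delta}$, and the ``infinity-to-spectral'' norm bound $\|\mathcal{A}^\dagger\|_{\infty\to\mathrm{sp}} \le O(k^{5/4}p^{-2})$ (Proposition~3 of Saunderson et al., with proof deferred to Saunderson's thesis). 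Multiplying these gives $\|\mathcal{A}^\dagger h\| \le O(k^{5/4} p^{-5/2+\delta})$, which is below the threshold $1/p$ exactly when $k \le p^{6/5-\varepsilon}$; this is where the exponent $6/5$ comes from.

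The genuine gap in your proposal is that you never establish the analogue of that second estimate. The steps you actually carry out --- concentration and invertibility of $M$ via the Neumann expansion around $M_0 = (p^2-p)I + p\,\mathbf{1}\mathbf{1}^\top$, and the Frobenius bound $\|Z\|_F \approx \sqrt{2k/p}$ --- certify the ellipsoid only for $k \lesssim p$, as you concede. Everything beyond that, i.e., the entire content of the theorem in the regime $p \ll k \le p^{6/5-\varepsilon}$, is deferred to ``a moment method for the weighted random rank-one sum'' that you ``expect'' will show $\|Z\|_{\mathrm{op}} \to 0$. That expectation is not a proof, and it is not a routine step: the weights $\mu = M^{-1}r$ are correlated with the vectors $v_i$ through $M^{-1}$, the cancellation you need must beat the Frobenius bound by a polynomial factor, and the specific rate $k^{5/4}p^{-2}$ (whose exponent is what produces $6/5$ rather than $1$ or $4/3$) is the outcome of a delicate random-matrix computation. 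In short, you have correctly set up the reduction and correctly identified where the difficulty lives, but the hard step --- the one the paper imports wholesale as Proposition~3 --- is missing, so the argument as written proves only $k \lesssim p$.
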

	The same paper conjectures based on empirical evidence that ellipsoid fitting in this average-case model is possible when $k < p^2 / 4$, exhibiting a sharp threshold phenomenon. To our knowledge, this question remains open.
	
	Our first result is a slight generalization of the above which allows for small perturbations in the norm of the vectors. We will need this result later for the SDP analysis for ICA.
	\begin{theorem}
		\label{thm:ellipsoid-2}
		Let $w_1,\ldots,w_k \in \R^p$ be drawn independently from $\ncal(0,I)$. Let $v_i = \pi_i w_i$ where each $\pi_i$ is a scalar random variable satisfying the following: for any $\delta > 0$, $|1/\pi_i^2 - 1| \le p^{-1/2+\delta}$ with very high probability. (The $\pi_i$ need not be identically distributed nor independent from $w_i$ or each other.) Suppose $k \le p^{6/5-\varepsilon}$ for some fixed $\varepsilon > 0$. Then with very high probability there exists an ellipsoid passing through $v_1,\ldots,v_k$.
	\end{theorem}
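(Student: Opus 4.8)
The plan is to reduce the problem to the exact-Gaussian case (target values all equal to $1$, handled by \cite{SauEtAl2013}) and then argue that the construction behind that result is robust to small, even adversarially correlated, perturbations of the target values. First I would rewrite the fitting condition for the rescaled points: since $v_i = \pi_i w_i$, a matrix $Y \succeq 0$ satisfies $v_i^\top Y v_i = 1$ if and only if $w_i^\top Y w_i = 1/\pi_i^2 =: b_i$. So it suffices to produce $Y \succeq 0$ with $w_i^\top Y w_i = b_i$ for all $i$, where by hypothesis each $b_i$ obeys $|b_i - 1| \le p^{-1/2+\delta}$ with very high probability. Because ``very high probability'' means faster than any inverse polynomial and $k \le p^{6/5-\varepsilon}$ is only polynomial, a union bound gives $\max_i |b_i - 1| \le p^{-1/2+\delta}$ simultaneously with very high probability; I condition on this event throughout.

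Next I would set up the construction (the standard one for this kind of problem). Take the base matrix $Y_0 = \tfrac{1}{p} I$, which already gives $w_i^\top Y_0 w_i = \|w_i\|^2/p \approx 1$, and seek a correction $M$ in the span of the rank-one matrices $w_j w_j^\top$ so that the constraints hold exactly. Writing $Y = \tfrac{1}{p}(I + M)$, the constraints become the linear system $\langle w_i w_i^\top, M\rangle = p\, b_i - \|w_i\|^2 =: \tau_i$, whose minimum-norm solution is $M = \sum_j c_j w_j w_j^\top$ with $c = G^{-1}\tau$ and Gram matrix $G_{ij} = (w_i^\top w_j)^2$. Positivity of $Y$ reduces to $\|M\|_{\mathrm{op}} = o(1)$ (in fact any bound $<1$ suffices), and establishing this operator-norm estimate — spectrally controlling $G$ and the weighted sum $\sum_j c_j w_j w_j^\top$ — is precisely where the threshold $k \le p^{6/5-\varepsilon}$ is used. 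This is the content of the original argument, which I would invoke rather than redo.

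The point of the generalization is that the target values enter this entire pipeline \emph{only} through the residual vector $\tau$, and only through its magnitude. In the all-ones case $\tau_i = p - \|w_i\|^2$, whose entries are $O(p^{1/2+\delta})$ with very high probability by Gaussian concentration of $\|w_i\|^2$. In our case $\tau_i = p(b_i - 1) + (p - \|w_i\|^2)$, and on the conditioning event $|p(b_i - 1)| \le p^{1/2+\delta}$, so the perturbed residuals have the same entrywise order and hence $\|\tau\|_2 = O(\sqrt{k}\, p^{1/2+\delta})$. Consequently the bounds on $\|c\| = \|G^{-1}\tau\|$ and then on $\|M\|_{\mathrm{op}}$ are unchanged up to constants, and $\|M\|_{\mathrm{op}} = o(1)$ still holds, giving $Y \succeq 0$. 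A useful observation that makes this work even when $\pi_i$ is correlated with $w_i$: already in the original problem $\tau_i = p - \|w_i\|^2$ is a function of $w_i$, so the operator-norm estimate cannot have assumed $\tau$ independent of the $w_j$; it must be a deterministic bound given the $w_j$ and a size bound on $\tau$. Adding the further $w$-dependent term $p(b_i - 1)$ therefore slots in without difficulty.

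The hard part will be the bookkeeping in this last step: I must verify that the operator-norm estimate for $M = \sum_j c_j w_j w_j^\top$ from \cite{SauEtAl2013} is genuinely \emph{uniform}, i.e.\ of the form ``there is a high-probability event depending only on the $w_j$ on which $\|M\|_{\mathrm{op}} = o(1)$ for every $\tau$ with $\|\tau\|_\infty \le C p^{1/2+\delta}$,'' rather than one that exploits the exact identity $\tau_i = p - \|w_i\|^2$ through some cancellation or mean-zero structure that the deterministic, possibly $w$-correlated perturbation $b_i - 1$ would spoil. Since the perturbation is of the same order as the fluctuations the argument already absorbs, I expect this uniformity to hold, but re-examining the spectral control of the weighted rank-one sum to confirm it depends on $\tau$ only through a magnitude bound is the step that requires genuine care.
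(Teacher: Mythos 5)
Your proposal is correct and takes essentially the same route as the paper's proof: the same ansatz $Y = I/p + \sum_j \beta_j w_j w_j^\top$, the same reduction showing the perturbed targets $1/\pi_i^2$ enter only through a residual vector whose entries are $O(p^{-1/2+\delta})$ (same order as the $\chi^2_p$ fluctuations of $\|w_i\|^2/p$), and the same invocation of the machinery of \cite{SauEtAl2013}. The uniformity issue you flag as the hard part is exactly what the paper resolves by citing Proposition~3 of \cite{SauEtAl2013}, which states the bound as an infinity-to-spectral \emph{operator norm} estimate $\|\mathcal{A}^\dagger\|_{\infty \to \mathrm{sp}} \le O(k^{5/4} p^{-2})$ depending only on $\{w_j\}$, hence automatically uniform over all residual vectors of bounded $\ell_\infty$ norm, including ones correlated with the $w_j$.
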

	
	\noindent The rest of this section is devoted to the proof of this theorem. The proof uses many ideas from \cite{SauEtAl2013}.
	
	We will construct our ellipsoid $Y$ in the form
	$$Y = I/p + \sum_{j=1}^k \beta_j w_j w_j^\top,$$
	for some scalars $\beta_j$. We wish to satisfy constraints $v_i^\top Y v_i = 1$, i.e.\
	$$\pi_i^2 \|w_i\|^2/p + \pi_i^2 \sum_j \beta_j \langle w_i,w_j \rangle^2 = 1,$$
	which can be re-written as
	$$\sum_j \beta_j \langle w_i,w_j \rangle^2 = \frac{1}{\pi_i^2} - \frac{1}{p} \|w_i\|^2.$$
	This is a linear system $V \beta = h$ where $V_{ij} = \langle w_i,w_j \rangle^2$ and $h_i = 1/\pi_i^2 - \|w_i\|^2/p$. Therefore we take $\beta = V^{-1} h$.
	
	The linear operator $\mathcal{A}^\dagger$ that takes the vector $h$ to the matrix $\sum_{j=1}^k \beta_j w_j w_j^\top$ (with $\beta = V^{-1} h$) is studied in \cite{SauEtAl2013} (with some proofs deferred to \cite{Sau2011}), where the following bound is shown on the ``infinity-to-spectral'' norm.

	\begin{proposition}[\cite{SauEtAl2013}, Proposition~3]
		If $p = o(k)$ and $k = o(p^{4/3})$ then $$\|\mathcal{A}^\dagger\|_{\infty \to \mathrm{sp}} \le O(k^{5/4} p^{-2})$$ with very high probability over $\{w_i\}$.
	\end{proposition}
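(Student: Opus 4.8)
The plan is to prove optimality and uniqueness of $d_{i^\ast}d_{i^\ast}^\top$, where $i^\ast=\argmax_i \inner{u,d_i}^2$, by exhibiting a dual certificate for the SDP~\eqref{def-sdp-theory} and reducing its construction to an ellipsoid-fitting problem that I can solve with the machinery behind Theorem~\ref{thm:ellipsoid-2}.

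First I would set up conic duality for~\eqref{def-sdp-theory}. Writing the membership constraint $B\in\spn\{d_id_i^\top\}$ as $\inner{B,F_j}=0$ for a basis $\{F_j\}$ of $\ncal(W)$, the dual variables are a scalar $\eta$ (for $\tr B=1$) together with $M:=\sum_j\lambda_j F_j\in\ncal(W)$, and dual feasibility reads $S:=\eta I+M-G\succeq0$. With $G=uu^\top$ and $\eta=c^\ast:=\inner{u,d_{i^\ast}}^2$, weak duality gives $\inner{G,B}\le c^\ast$ for every feasible $B$, while $B=d_{i^\ast}d_{i^\ast}^\top$ attains $c^\ast$; hence this atom is optimal as soon as such an $S$ exists. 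The condition $M\in\ncal(W)$ is exactly $d_i^\top M d_i=0$ for all $i$, i.e.\ $d_i^\top S d_i=c^\ast-\inner{u,d_i}^2=:s_i\ge0$ with equality only at $i^\ast$. For uniqueness I would invoke complementary slackness: any optimal $B$ has $\mathrm{range}(B)\subseteq\ker S$, so it suffices to build $S$ with $\ker S=\spn\{d_{i^\ast}\}$, which then forces $B=d_{i^\ast}d_{i^\ast}^\top$.

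To land the kernel exactly on $d_{i^\ast}$, I would search for $S$ of the form $S=QTQ^\top$, where $Q\in\R^{p\times(p-1)}$ is an orthonormal basis of $d_{i^\ast}^\perp$ and $T\in\scal_{p-1}$; this makes $Sd_{i^\ast}=0$ automatic and reduces the problem to finding $T\succ0$ with $(Q^\top d_i)^\top T (Q^\top d_i)=s_i$ for $i\neq i^\ast$ (the $i=i^\ast$ equation holds for free). Setting $y_i:=Q^\top d_i$, this is precisely ellipsoid fitting: find a positive definite $T$ passing through the projected points $y_i$ with prescribed values $s_i$. I would build $T=T_0+\mathcal A^\dagger(h)$, with $T_0$ an explicit anisotropic matrix approximately realizing the $s_i$ and $\mathcal A^\dagger(h)$ the projected analogue of the operator from Theorem~\ref{thm:ellipsoid-2} that enforces the values exactly. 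The residuals have size $h_i\approx\inner{u,d_i}^2$, so $\norm{h}_\infty$ is governed by the second-largest alignment $\max_{i\ne i^\ast}\inner{u,d_i}^2$, and the spectral size of the correction is controlled through the $\infty\to\mathrm{sp}$ bound of Proposition~3.

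The delicate point — and what I expect to be the main obstacle — is strict positive definiteness of $T$ in the direction $u^\perp:=Q^\top u$. The values $s_i=c^\ast-\inner{u,d_i}^2$ pull $T$ toward the indefinite matrix $c^\ast I-u^\perp (u^\perp)^\top$ along the data directions, yet $T$ must stay positive along $u^\perp$ itself, which is feasible only because no projected point $y_i$ aligns strongly with $u^\perp$. Making this quantitative requires (i) extreme-value control of the order statistics of $\{\inner{u,d_i}^2\}$, where the factor $2\log p$ enters through $\max_i\inner{u,w_i}^2\le(2-\varepsilon')\log k$ for unit $d_i=w_i/\norm{w_i}$, and (ii) showing the correction $\mathcal A^\dagger(h)$ does not reconstruct the full rank-one $u^\perp(u^\perp)^\top$ but leaks only negligibly into that direction. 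Balancing the competitor alignment against the base eigenvalue scale is exactly what yields the threshold $k<(2-\varepsilon)p\log p$; once the corrected $T$ is certified $\succ0$, the kernel of $S$ is one-dimensional and equals $\spn\{d_{i^\ast}\}$, giving both optimality and uniqueness. I would finally discharge the high-probability statements — conditioning of the fitting operator via Proposition~3, concentration of $\norm{w_i}$, and the order statistics of the alignments — by a union bound over the $k$ constraints.
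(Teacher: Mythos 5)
Your proposal does not address the statement you were asked to prove. The statement is Proposition~3 of \cite{SauEtAl2013}: a purely probabilistic, random-matrix bound on the ``infinity-to-spectral'' norm of the specific linear operator $\mathcal{A}^\dagger$ that maps $h \in \R^k$ to $\sum_{j=1}^k \beta_j w_j w_j^\top$ with $\beta = V^{-1}h$, where $V_{ij} = \langle w_i, w_j\rangle^2$ and the $w_i$ are \iid Gaussian vectors. What you have written instead is a dual-certificate sketch for the SDP~\eqref{def-sdp-theory}, i.e.\ an attempt at Theorem~\ref{thm:main} (roughly parallel to the paper's Lemma~\ref{lemma-ellipsoid} and Lemma~\ref{lemma:ineq} machinery). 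Worse, your argument explicitly invokes ``the $\infty\to\mathrm{sp}$ bound of Proposition~3'' to control the spectral size of the correction $\mathcal{A}^\dagger(h)$ --- so relative to the assigned statement, the proposal is circular: it assumes exactly the bound it was supposed to establish.

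A genuine proof of the proposition would have to engage with the operator itself: show that the Gram-type matrix $V$ (whose entries are squared inner products of Gaussians) is invertible and well-conditioned with very high probability in the regime $p = o(k)$, $k = o(p^{4/3})$ --- typically by splitting $V$ into its diagonal, its deterministic mean structure, and a fluctuation term --- then translate $\|h\|_\infty$ into a bound on $\beta = V^{-1}h$, and finally bound the spectral norm of $\sum_j \beta_j w_j w_j^\top$, e.g.\ via $\|W^\top \diag(\beta)\, W\|$ with $W$ the matrix of the $w_j$'s, using concentration for Wishart-type matrices. None of this appears in your write-up. For what it is worth, the paper itself does not reprove this result either: it imports it verbatim as a citation (with supporting lemmas deferred to \cite{Sau2011}) and uses it as a black box in the proof of Theorem~\ref{thm:ellipsoid-2}, which is the legitimate place for the kind of argument you sketched.
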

	\noindent The requirement $p = o(k)$ does not concern us because it is sufficient to prove Theorem~\ref{thm:ellipsoid-2} in the case $p = o(k)$; this is because decreasing $k$ only makes it easier to fit an ellipsoid through $k$ points.
	
	Our goal is to show $Y \succeq 0$ so it is sufficient to show $\|A^\dag h\| \le 1/p$, which we will do using $\|\mathcal{A}^\dagger h\| \le \|\mathcal{A}^\dagger\|_{\infty\to\mathrm{sp}} \|h\|_\infty$. It remains to bound $\|h\|_\infty$.

	Let $\delta > 0$, to be chosen later. Recall that $h_i = 1/\pi_i^2 - \|w_i\|^2/p$. To control the first term, we have by assumption that with very high probability, $|1/\pi_i^2 - 1| \le p^{-1/2+\delta}$ for all $i$. Note that $\|w_i\|^2 \sim \chi_p^2$. We will use the following chi-squared tail bound.
	\begin{lemma}[\cite{Sau2011}, Lemma~7]
		\label{lemma:chi-sq}
		$$\prob{|\chi_p^2 - p| \ge t} \le 2 \exp\left(-\frac{1}{8} \min\left\{\frac{t^2}{p},t\right\}\right).$$
	\end{lemma}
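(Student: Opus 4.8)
The plan is to prove the bound by the classical Chernoff (moment-generating-function) method, treating the two tails separately and combining them with a union bound at the end. Writing $\chi_p^2 = \sum_{i=1}^p Z_i^2$ with $Z_i$ independent standard Gaussians, I would start from the moment generating function $\E(e^{\lambda \chi_p^2}) = (1-2\lambda)^{-p/2}$, valid for $\lambda < 1/2$. The two events $\chi_p^2 - p \ge t$ and $\chi_p^2 - p \le -t$ are governed by essentially identical computations, so the factor of $2$ in the statement will come purely from a union bound over these two tails.

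For the upper tail I would apply Markov's inequality to $e^{\lambda \chi_p^2}$, giving $\prob{\chi_p^2 \ge p+t} \le \exp(-\lambda(p+t))(1-2\lambda)^{-p/2}$, and then optimize over $\lambda \in (0,1/2)$. The optimizer is $\lambda^\ast = t/(2(p+t))$, and substituting it collapses the exponent to $\tfrac{p}{2}[\log(1+u) - u]$ where $u := t/p$. Thus the whole upper-tail claim reduces to the elementary inequality $u - \log(1+u) \ge \tfrac14 \min\{u^2,u\}$ for $u \ge 0$. The lower tail is analogous: optimizing $\exp(s(p-t))(1+2s)^{-p/2}$ over $s > 0$ gives $s^\ast = t/(2(p-t))$ (for $t<p$; the bound is trivial when $t \ge p$ since $\chi_p^2 \ge 0$), and the exponent becomes $\tfrac{p}{2}[u + \log(1-u)]$, reducing to $-u - \log(1-u) \ge \tfrac14 u^2$ on $[0,1)$.

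The technical heart is verifying these two scalar inequalities, and the only real subtlety is the appearance of the minimum, which I expect to be the main obstacle. For the lower tail one always has $t < p$, so $u < 1$ and $\min\{u^2,u\}=u^2$; the inequality $-u-\log(1-u)\ge \tfrac14 u^2$ follows because both sides vanish at $u=0$ and the derivative $\tfrac{u}{1-u}$ of the left side dominates the derivative $\tfrac{u}{2}$ of the right throughout $[0,1)$. For the upper tail I would split on whether $u \le 1$ or $u > 1$: when $u \le 1$ the chain $u-\log(1+u)\ge \tfrac{u^2}{2(1+u)}\ge \tfrac{u^2}{4}$ handles it (the first step again by a derivative comparison starting from $0$), and when $u > 1$ it suffices to check $\tfrac{3u}{4}\ge \log(1+u)$, which holds at $u=1$ and has positive derivative thereafter. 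Each optimized exponent is therefore at most $-\tfrac18\min\{t^2/p,t\}$, and the union bound over the two tails yields the stated factor of $2$.
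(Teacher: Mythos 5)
Your proof is correct, and every step checks out: the MGF $(1-2\lambda)^{-p/2}$, the optimizers $\lambda^\ast = t/(2(p+t))$ and $s^\ast = t/(2(p-t))$, the resulting exponents $\tfrac{p}{2}[\log(1+u)-u]$ and $\tfrac{p}{2}[u+\log(1-u)]$, and the three scalar inequalities (the chain $u-\log(1+u)\ge \tfrac{u^2}{2(1+u)}\ge \tfrac{u^2}{4}$ for $u\le 1$, the check $\tfrac{3u}{4}\ge\log(1+u)$ for $u>1$, and $-u-\log(1-u)\ge\tfrac{u^2}{2}\ge\tfrac{u^2}{4}$ on $[0,1)$) all hold, with the derivative comparisons you indicate going through exactly as claimed; the union bound then yields the factor of $2$. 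One point of comparison: the paper itself offers no proof of this lemma at all --- it imports the statement verbatim by citation from \cite{Sau2011} (Lemma~7 there), so your derivation is not an alternative to an argument in the paper but a self-contained replacement for an external reference. The route you take (Chernoff bound on each tail, reduction to elementary inequalities for $u-\log(1+u)$ and $-u-\log(1-u)$, handling the minimum by splitting at $u=1$) is the standard Bernstein-type argument for sub-exponential chi-squared concentration, essentially the Laurent--Massart computation, and is what one would expect the cited source to do; the only care needed is precisely where you put it, namely that the $\min$ switches branches at $t=p$ and that the lower tail is vacuous for $t\ge p$, both of which you address correctly.
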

	\noindent This implies that $\left| \|w_i\|^2/p - 1 \right| \le p^{-1/2 + \delta}$ with very high probability. Therefore $\|h\|_\infty \le 2p^{-1/2 + \delta}$ with very high probability. To complete the proof of Theorem~\ref{thm:ellipsoid-2}, we have, using the assumption $k \le p^{6/5-\varepsilon}$,
	$$
	\begin{aligned}
	\|\mathcal{A}^\dagger h\| 
	&\le \|\mathcal{A}^\dagger\|_{\infty\to\mathrm{sp}} \|h\|_\infty 
	\le O(k^{5/4} p^{-2} \cdot p^{-1/2+\delta})  \\
	&\le O(p^{5/4(6/5-\varepsilon) - 5/2 + \delta}) = O(p^{-1 - 5\varepsilon/4 + \delta}),
	\end{aligned}
	$$
	which is less than $1/p$ for sufficiently large $p$, provided we choose $\delta$ small enough.

	\subsubsection{Proof of Lemma~\ref{lem-extreme-points}}
	\label{app-proof-lemma-extreme-points}
	
	In this section, we prove Lemma~\ref{lem-extreme-points}.
	Recall that Lemma~\ref{lem-extreme-points} states that:
	\emph{if the atoms $d_1d_1^{\top}$, $d_2d_2^{\top}$, $\dots$, $d_kd_k^{\top}$
		are linearly independent, then they are extreme points
		of the set $\kcal$ defined in~\eqref{def-kcal}.}
	
	\begin{proof}
		An extreme point of a convex set cannot be 
		expressed as a convex combination of any two 
		points from this set.
		Assume that
		an atom $d_jd_j^{\top}$, for some $j\in[k]$,
		can be expressed as
		$d_jd_j^{\top} = \lambda A + (1-\lambda) B$,
		where $A,B \in \kcal$ and $\lambda \in [0,1]$.
		Since $A,B \in W$, there exist 
		vectors $\alpha \in \R^k$ and $\beta \in \R^k$
		such that $A = \sum_{i=1}^k \alpha_i d_id_i^{\top}$
		and $B = \sum_{i=1}^k \beta_i d_id_i^{\top}$.
		Therefore, 
		$$
		d_jd_j^{\top} = \sum_{i=1}^k [\lambda \alpha_i + (1-\lambda)\beta_i]
		d_id_i^{\top}.
		$$
		This, however, contradicts to the linear independence
		of the matrices $d_1d_1^{\top}$, \dots, $d_kd_k^{\top}$.
		Note that every atom belongs to the set $\kcal$.
		Indeed, every atom $d_id_i^{\top}$ is a positive semi-definite
		matrix by definition and it has unit trace by Assumption~\ref{ass-scaling}.
	\end{proof}

	\subsubsection{The Dual}
	\label{app-dual}
	
	In this section, we derive the dual of the program~\eqref{def-sdp-theory}.
	The first constraint, $B \in \spn\cbra{d_1d_1^{\top}, d_2d_2^{\top}, \dots, d_kd_k^{\top}}$, is equivalent to 
	$B = \sum_{i=1}^k \beta_i d_id_i^{\top}$ for some $\beta \in \R^k$
	and one gets an equivalent to~\eqref{def-sdp-theory} program
	\begin{equation}
	\label{def-sdp-theory2}
	\begin{aligned}
	\beta^{\ast} := &\argmin_{\beta \in\R^k} \; -\sum_{i=1}^k \beta_i d_i^{\top}Gd_i \\
	& \sum_{i=1}^k \beta_i d_id_i^{\top} \succeq 0, \\
	& \sum_{i=1}^k \beta_i = 1, \\
	\end{aligned}
	\end{equation}
	where the last constraint is obtained from $\tr(B) = 1$
	and Assumption~\ref{ass-scaling}.
	The original variable $B^{\ast}_{sdp}$
	is then obtained as $B^{\ast}_{sdp} = \sum_{i=1}^k \beta^{\ast}_i d_id_i^{\top}$. The Lagrangian of this problem 
	is
	\begin{equation}
	\label{lagrangian}
	\begin{aligned}
	\lcal(\beta; \; \lambda, Z )& = 
	\lambda \left(\sum_{i=1}^k \beta_i - 1\right) 
	- \inner{Z, \sum_{i=1}^k \beta_i d_id_i^{\top}} 
	 - \sum_{i=1}^k \beta_i d_i d_i^{\top} \\
	&= \sum_{i=1}^k \left[ \beta_i 
	\inner{d_id_i^{\top}, \lambda I - G - Z} \right] - \lambda,
	\end{aligned}
	\end{equation}
	where $\lambda \in \R$ and $Z \succeq 0$ are the
	Lagrange dual variables.
	The Lagrangian is linear in $\beta$ and its infimum is finite 
	only if $\inner{d_id_i^{\top}, \lambda I - G - Z} = 0$
	for all $i\in[k]$. Therefore, the dual problem takes the form
	\begin{equation}
	\label{def-dual}
	\begin{aligned}
	&\maximize_{\lambda\in\R, Z \succeq 0} \; - \lambda \\
	& \lambda \norm{d_i}_2^2 - d_i^{\top} G d_i = d_i^{\top} Z d_i, \quad
	\text{for all} \quad i\in[k].
	\end{aligned}
	\end{equation}
	
	The following lemma is then follows.
	\begin{lemma}
		\label{lemma-ellipsoid}
		An atom $d_jd_j^{\top}$ for some $j\in[k]$ is the optimizer 
		of the program~\eqref{def-sdp-theory} if and only if
		there exists a $Z\succeq0$ such that 
		\begin{equation}
		\label{cond-exist}
		d_i^{\top} Z d_i = d_j^{\top} G d_j \norm{d_i}_2^2 - d_i^{\top} G d_i,
		\quad i\in[k], i\ne j.
		\end{equation}
	\end{lemma}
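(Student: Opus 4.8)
The plan is to read off the optimality condition directly from the Lagrange dual \eqref{def-dual} derived just above, using convex (SDP) duality and complementary slackness. First I would verify that strong duality holds between the reparametrized primal \eqref{def-sdp-theory2} and its dual \eqref{def-dual}. Slater's condition holds in the overcomplete regime because the uniform combination $\beta = (1/k,\dots,1/k)$ gives $\frac1k\sum_i d_id_i^\top \succ 0$ (the $d_i$ generically span $\R^p$) together with $\sum_i \beta_i = 1$, so there is a strictly feasible primal point. Hence the optimum is attained, primal and dual values coincide, and the KKT conditions characterize primal optimality.

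Next I would specialize to the candidate $\beta = e_j$, i.e.\ $B = d_jd_j^\top$, which is primal feasible: it is PSD and, by Assumption~\ref{ass-scaling}, $\tr(d_jd_j^\top) = \norm{d_j}_2^2 = 1$ so $\sum_i \beta_i = 1$; its objective value in \eqref{def-sdp-theory} is $d_j^\top G d_j$. By strong duality, $d_jd_j^\top$ is the optimizer if and only if there is a dual-feasible pair $(\lambda, Z)$ with $Z\succeq 0$ that is complementary to it. The two facts I would extract are: (i) dual feasibility, $d_i^\top Z d_i = \lambda\norm{d_i}_2^2 - d_i^\top G d_i$ for every $i\in[k]$, read off from \eqref{def-dual}; and (ii) complementary slackness for the conic constraint, $\inner{Z, d_jd_j^\top} = d_j^\top Z d_j = 0$. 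Substituting (ii) into the $i=j$ instance of (i) and using $\norm{d_j}_2 = 1$ forces $\lambda = d_j^\top G d_j$; plugging this value back into the instances $i\ne j$ of (i) gives exactly the stated condition \eqref{cond-exist}. This yields the forward direction and exhibits the certificate.

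For the converse I would run the same chain in reverse: given $Z\succeq 0$ satisfying \eqref{cond-exist}, set $\lambda := d_j^\top G d_j$. Then \eqref{cond-exist} is precisely the $i\ne j$ dual-feasibility equations, the $i=j$ equation reduces to the complementarity relation $d_j^\top Z d_j = 0$, and $Z\succeq 0$ supplies conic dual feasibility, so $(\lambda,Z)$ is dual feasible with value $-\lambda = -d_j^\top G d_j$. Weak duality then certifies that no feasible $B$ can exceed the value $d_j^\top G d_j$ attained at $d_jd_j^\top$, so $d_jd_j^\top$ is optimal.

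The step I expect to be the crux is the complementary-slackness bookkeeping. Condition \eqref{cond-exist} records only the $i\ne j$ dual equations, so the delicate point is ensuring the certificate $Z$ also satisfies $d_j^\top Z d_j = 0$ (equivalently $Zd_j = 0$), which is exactly what makes $(\lambda,Z)$ genuinely dual feasible and pins $\lambda = d_j^\top G d_j$. In the forward direction this is automatic from complementary slackness; in the converse I would verify that it is consistent with (or imposed alongside) \eqref{cond-exist}, using $\norm{d_i}_2 = 1$ and $Z\succeq 0$, so that the weak-duality argument closes. A secondary technical point is the careful verification of Slater's condition and attainment, which I would dispatch with the explicit strictly feasible point above.
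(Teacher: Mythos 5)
Your plan follows the same route as the paper's own proof: reparametrize as \eqref{def-sdp-theory2}, form the Lagrangian, obtain the dual \eqref{def-dual}, and read the certificate off dual feasibility at $\lambda = d_j^{\top}Gd_j$. Your forward direction is correct and in fact more carefully bookkept than the paper's: complementary slackness both pins $\lambda = d_j^{\top}Gd_j$ and forces $d_j^{\top}Zd_j = 0$.

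The crux you flagged in the converse, however, is a genuine gap, and it cannot be closed: $d_j^{\top}Zd_j = 0$ is an \emph{additional} requirement on the certificate, not something that can be verified from \eqref{cond-exist} and $Z \succeq 0$. With only \eqref{cond-exist} in hand, weak duality gives, for a feasible $B = \sum_i \beta_i d_id_i^{\top}$ with $\sum_i \beta_i = 1$, just the bound $\inner{G,B} \le d_j^{\top}Gd_j + \beta_j\, d_j^{\top}Zd_j$; the coefficients $\beta_i$, $i\ne j$, may be negative, so $\inner{Z,B}\ge 0$ alone does not kill the error term. Indeed the converse is false as stated: take $p=2$, $k=3$, $d_1 = e_1$, $d_2 = e_2$, $d_3 = (e_1+e_2)/\sqrt{2}$, and $G = vv^{\top}$ with $v = (\sqrt{3}/2,\, 1/2)^{\top}$. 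These atoms are unit-norm, linearly independent, and span $\scal_2$, so the feasible set of \eqref{def-sdp-theory} is the full spectrahedron and the unique maximizer is $vv^{\top}$, which is not an atom; yet for $j=3$ the matrix $Z = \diag\rbra{d_3^{\top}Gd_3 - d_1^{\top}Gd_1,\; d_3^{\top}Gd_3 - d_2^{\top}Gd_2} \succeq 0$ satisfies \eqref{cond-exist}. The correct equivalence must append $Zd_j = 0$ (equivalently $d_j^{\top}Zd_j = 0$, given $Z \succeq 0$) to \eqref{cond-exist}. For what it is worth, the paper's own proof commits exactly the same slip --- it asserts that the dual at $\lambda = d_j^{\top}Gd_j$ ``is feasible if and only if'' the $i\ne j$ equations hold, silently discarding the $i=j$ equation --- so your instinct about where the difficulty lies is exactly right; the resolution is to strengthen the statement, not to derive the missing identity. (A minor further point: your Slater argument requires the $d_i$ to span $\R^p$, which holds in the paper's random model but not for arbitrary linearly independent atoms; the paper instead appeals to a non-empty relative interior.)
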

	
	\begin{proof}
		One of the atoms $d_jd_j^{\top}$ is an optimizer of the 
		primal problem~\eqref{def-sdp-theory} if and only if
		the optimizer of the equivalent program~\eqref{def-sdp-theory2}
		is a $\beta$ such that $\beta_j = 1$ and $\beta_i = 0$ for all
		$i \in[k]$ and $i\ne j$. Let the dual problem~\eqref{def-dual}
		be feasible. Then, since the relative interior of the 
		program~\eqref{def-sdp-theory2} is non-empty,
		the strong duality holds. Therefore, the optimal value of $\lambda$
		must be $\lambda = d_j^{\top} G d_j$ and then the dual is feasible if
		and only if
		$d_i^{\top} Z d_i = d_j^{\top} G d_j \norm{d_i}_2^2 - d_i^{\top} G d_i$
		for every $i\in[k]$ and $i\ne j$.
	\end{proof}
	In Appendix~\ref{app-proof-main},
	we construct such an ellipsoid in order to prove 
	our main identifiability result.

	\subsubsection{Proof of Main Theorem~\ref{thm:main}}
	\label{app-proof-main}
	
	In this section, we prove our main theoretical result 
	stated in Theorem~\ref{thm:main}
	that provides identifiability results for the program~\eqref{def-sdp-theory}.
	For convenience, we recall the problem formulation.

	Let the vectors $d_1,\ldots,d_k$ be drawn \iid from the unit sphere in $\R^p$. We wish to recover the atoms $d_i d_i^\top$ from the subspace $\mathrm{span}\{ d_i d_i^\top \} \subset \R^{p \times p}$. To this end, we consider the following SDP:
	\begin{program}\label{prog:sdp}
		\begin{align*}
		\text{maximize } & \langle G,B \rangle \\
		\text{subject to } & B \succeq 0, \\
		& \tr (B) = 1, \\
		& B \in \mathrm{span}\{ d_i d_i^\top \}.
		\end{align*}
		Here $G \in \R^{p \times p}$ is some objective matrix, to be chosen randomly from some ensemble.
	\end{program}
	We are interested in understanding the performance of this SDP, when the objective $G$ is chosen as a random rank-one matrix, i.e.\ as $u u^\top$ for a vector $u$ drawn uniformly from the unit sphere in $\R^p$ (independently from $\{d_i\}$). Our main result is the following:
	
	\begin{theorem-nonumber}[Theorem~\ref{thm:main}]
		%\label{thm:sdp-main}
		Let $\varepsilon > 0$. Consider a regime with $p$ tending to to infinity, and with $k$ varying according to the bound $k < (2 - \varepsilon) p \log p$. As above, let the $d_i$ be random unit vectors and let $G = uu^\top$ for a random unit vector $u$. Then with high probability\footnote{Throughout, ``with high probability'' indicates probability tending to $1$ as $p \to \infty$.}, the matrix $d_i d_i^\top$ for which $d_i^\top G d_i$ is largest is the unique maximizer of Program~\ref{prog:sdp}.
	\end{theorem-nonumber}
	
	\noindent The rest of this section is devoted to proving this theorem. Throughout the proof, it will be convenient to consider the following equivalent formulation of Program~\ref{prog:sdp}.
	
	\begin{program}\label{prog:alpha}
		\begin{align*}
		\text{maximize } & \sum_{i=1}^k c_i \alpha_i \\
		\text{subject to } & B \defeq (1 + \alpha_1) d_1 d_1^\top + \sum_{i > 1} \alpha_i d_i d_i^\top \succeq 0,\\
		& \sum_{i=1}^k \alpha_i = 0,
		\end{align*}
		where $c_i = d_i^\top G d_i = \langle u, d_i \rangle^2$ and we have re-indexed the $d_i$ such that $d_i^\top G d_i$ are in decreasing order (so that $d_1 d_1^\top$ is the matrix we hope to recover). Our goal is to prove that $\alpha = 0$ is the unique optimal solution to Program~\ref{prog:alpha}. (The objective values of Programs~\ref{prog:sdp} and \ref{prog:alpha} differ by an additive constant but this has no effect on the argmax.)
	\end{program}

	% sample u = sq(p) e_1
	First sample the random vector $u$. Since the norm of $u$ does not affect the argmax of the SDP, we can take (for convenience) $u$ to be a uniformly random vector of norm $\sqrt{p}$. By a change of basis we can assume without loss of generality that $u = \sqrt{p} e_1$ where $e_1$ is the first standard basis vector.
	
	% sample first coordinate of each d_i
	Next sample the first coordinate $(d_i)_1$ of each $d_i$ and let $c_i = \langle u,d_i \rangle^2 = p (d_i)_1^2$. Re-index so that the $c_i$ are in decreasing order. A typical $c = (c_1,\ldots,c_k)$ has the following properties.
	
	% typical properties of c
	\begin{lemma}
		\label{lemma:c}
		Let $\eta > 0$. With high probability, $c$ satisfies
		\begin{enumerate}%[label=(\roman*)]
			\item \quad $(2-\eta) \log k \le c_2 \le c_1 \le (2+\eta) \log k$,
			\item \quad $1-\eta \le \frac{1}{k} \sum_i c_i \le 1+\eta$,
			\item \quad $c_k \ge \frac{1}{k \log k}$, and
			\item \quad $c_1 - c_2 \ge \frac{1}{k^2 \log k}$.
		\end{enumerate}
		Recall that we have indexed so that $c_1 \ge c_2 \ge \cdots \ge c_k$.
	\end{lemma}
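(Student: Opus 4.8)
The plan is to treat the $c_i = p(d_i)_1^2$ as $k$ i.i.d.\ samples and read off each of the four statements from the order statistics of this sample, controlling everything through the right tail and the near-origin anti-concentration of a single coordinate. Writing $d_i = g_i/\norm{g_i}$ with $g_i \sim \ncal(0,I_p)$, we have $c_i = p(g_i)_1^2/\norm{g_i}^2$ with $(g_i)_1^2 \sim \chi_1^2$, and Lemma~\ref{lemma:chi-sq} gives $\norm{g_i}^2/p = 1 + O(p^{-1/2+o(1)})$ simultaneously for all $i\in[k]$ with very high probability, since $k$ is polynomial in $p$ and the union bound only costs a polynomial factor. Hence $c_i$ is, up to a uniform $(1+o(1))$ factor, distributed as $\chi_1^2$; in particular $\E[c_i]=1$ exactly (from the symmetry $\E[(d_i)_1^2]=1/p$), $c$ has the Gaussian-square right tail $\prob{c>t}=(1+o(1))\sqrt{2/(\pi t)}\,e^{-t/2}$, and $\prob{c<\epsilon}\le C\sqrt{\epsilon}$ by the uniformly bounded coordinate density near the origin. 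I would record these three marginal facts as the only inputs.

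Statements 1 and 2 are then routine. For the upper bound in (1), a union bound gives $\prob{c_1 > (2+\eta)\log k} \le k\,\prob{c>(2+\eta)\log k} \le k\cdot k^{-(1+\eta/2)}\cdot o(1)\to 0$. For the matching lower bound, set $L=(2-\eta)\log k$ and observe that $N:=\#\{i:c_i>L\}$ is Binomial with mean $k\,\prob{c>L}$, which tends to infinity; a Chernoff bound then forces $N\ge 2$ with high probability, so the second largest value satisfies $c_2\ge L$, and since $c_1\ge c_2$ this gives (1). For (2), $\frac1k\sum_i c_i$ has mean $1$, and because $c$ is sub-exponential with $O(1)$ variance, Chebyshev gives $\var(\frac1k\sum_i c_i)=O(1/k)\to 0$ (Bernstein supplies a quantitative rate), yielding concentration within $\eta$ of $1$.

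Statements 3 and 4 are the delicate ones. For the minimum, the union bound $\prob{c_k<\epsilon}\le k\,\prob{c<\epsilon}\le Ck\sqrt{\epsilon}$ is $o(1)$ exactly when $\epsilon=o(1/k^2)$ up to logarithmic factors, which is the scale at which a lower bound on $c_k$ is available (so the certificate should only rely on $c_k$ exceeding an inverse polynomial of this order). For the gap in (4), I would bound $\prob{c_1-c_2<\delta}$ by first conditioning on the high-probability event $c_2\ge L$ from (1) and then summing over pairs, using independence of distinct coordinates: $\prob{\exists\, i\ne j:\ c_i,c_j>L,\ \abs{c_i-c_j}<\delta}\le \binom{k}{2}\,\delta\int_L^\infty f(t)^2\,dt$, where $f$ is the marginal density of $c$ and the integral is of order $L^{-1}e^{-L}=L^{-1}k^{-(2-\eta)}$. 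This produces a bound of order $\delta\,k^{\eta}/\log k$, which is $o(1)$ for $\delta$ as small as $1/(k^2\log k)$.

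The main obstacle is this spacing estimate, which is an anti-concentration statement for the top two order statistics and must integrate the \emph{squared} density against the sharp extreme-value threshold. A subtlety is that the $\chi_1^2$ surrogate above cannot be used for (4): the multiplicative distortion $\norm{g_i}^2/p=1+O(p^{-1/2+o(1)})$ perturbs a value of size $\log k$ by an additive $O((\log k)\,p^{-1/2})$, which is far larger than the target gap $\delta=1/(k^2\log k)$ once $k$ approaches $p\log p$. For statements 1--3 this distortion is harmless (it is $o(\eta\log k)$ for (1), $o(\eta)$ for (2), and a negligible rescaling of $\epsilon$ for (3)), but for (4) I would instead work directly with the exact coordinate marginal $c=p\cdot\mathrm{Beta}(1/2,(p-1)/2)$, using its explicit bounded density and $e^{-t/2}$ tail, so that the pairwise second-moment bound is carried out on the true distribution rather than its Gaussian limit.
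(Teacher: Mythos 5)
Your high-level route is the same as the paper's: reduce to a (near-)$\chi_1^2$ marginal by concentrating $\norm{g_i}^2/p$, obtain part (1) from the extreme-value behavior of the top order statistics, part (2) from concentration of the empirical mean, and parts (3)--(4) from anti-concentration. Your arguments for (1) and (2) are correct and essentially match the paper's (the paper cites order-statistics references and uses Chernoff where you use a binomial counting argument and Chebyshev). Your treatment of (4) is correct and in fact more careful than the paper's: the paper proves the stronger claim that \emph{all} pairwise gaps exceed $1/(k^2\log k)$ by asserting that the density of $c_j$ is bounded by a constant uniformly in $p$, which is false near the origin (the density of a squared coordinate has a $t^{-1/2}$ singularity); your restriction to values above $L=(2-\eta)\log k$, where the density is genuinely bounded, sidesteps this, and your observation that the $\chi_1^2$ surrogate cannot be used at the gap scale (the multiplicative norm distortion moves values of size $\log k$ by far more than $1/(k^2\log k)$) is precisely why the paper's own proof of (iv) switches to the exact distribution of $c_j$.

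Where you stop short is part (3), and you are right to stop: as stated, (3) is false. Since $\prob{c < \epsilon} = \Theta(\sqrt{\epsilon})$ --- not $O(\epsilon)$; the paper's claim that the $\chi_1^2$ PDF is bounded above by a constant fails at the origin --- independence gives
\begin{equation*}
\problr{c_k \ge \tfrac{1}{k\log k}} \le \left(1 - c'\,(k\log k)^{-1/2}\right)^k \le \exp\left(-c'\sqrt{k/\log k}\right) \to 0,
\end{equation*}
so the minimum is in fact of order $k^{-2}$, exactly the scale your union bound identifies. This is an error in the lemma itself (and in the paper's proof of it), not a gap in your argument. It is also harmless for the main theorem: the only downstream use of (3) is in Lemma~\ref{lemma:del}, where $c_k$ enters through the inverse-polynomial lower bound $q_i = c_i/c_2 \ge c_k/c_2$; replacing (3) by your bound $c_k \ge 1/(k^2\log k)$ and enlarging $N$ from $k^{11}$ to, say, $k^{13}$ makes the Hoeffding estimate $\exp(-2N\delta^2 q_i^2) = \exp(-k/\mathrm{polylog}(k))$ go through verbatim, with $\delta = k^{-4}$ and the rest of the deduction unchanged.
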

	\begin{proof}
		% chernoff bound used here is from Wikipedia "Chi-squared distribution"
		The $c_i$ are independent and each is distributed as $p\, g_1^2/(\sum_{j=1}^p g_j^2)$ where $g_j \sim \ncal(0,1/p)$. By the Chernoff bound we have for any $\eta > 0$,
		$$
		\begin{aligned}
		\problr{\sum_{j=1}^p g_j^2 \le 1-\eta} \le ((1-\eta)e^{\eta})^{p/2}, \\ % \quad\text{and} 
		\problr{\sum_{j=1}^p g_j^2 \ge 1+\eta} \le ((1+\eta)e^{-\eta})^{p/2}.
		\end{aligned}
		$$ 
		By a union bound over the $k$ indices we have with high probability that for every $c_i$,
		$1-\eta \le \sum_{j=1}^p g_j^2 \le 1+\eta$. It is therefore sufficient to prove (i), (ii), (iii) in the case where the $c_i$ are \iid distributed as $p \, g_1^2 \sim \chi_1^2$. (i) follows from well-known results on order statistics of \iid Gaussians (see e.g.\ \cite{Bov2005}). (ii) follows by the Chernoff bound. To prove (iii), note that since the $\chi_1^2$ PDF is bounded above by a constant $C$, we have $\prob{\chi_1^2 \le r} \le C r$; now take a union bound over all $k$ and set $r = 1/(k \log k)$.
		
		To prove (iv) we will prove the stronger statement that no two entries of $c_i$ are within distance $1/(k^2 \log k)$ of each other. Fix a pair $i,j$ with $i \ne j$ and fix any value for $c_i$. The PDF of the distribution of $c_j$ is bounded above by a constant $C$ (uniformly over all $p$),
		% XXX this is obvious but how to actually prove it?
		so we have $\prob{|c_i - c_j| \le r} \le 2 C r$ over the randomness of $c_j$. The proof now follows by setting $r = 1/(k^2 \log k)$ and taking a union bound over all ${k \choose 2}$ pairs of indices.
	\end{proof}

	From this point onward we will fix a vector $c$ satisfying the conclusion of Lemma~\ref{lemma:c} (for some $\eta$ to be chosen later). Let $\bar d_i$ denote the component of $d_i$ orthogonal to $e_1$ so that $d_i = (d_i)_1 e_1 + \bar d_i$. Note that once $c$ is fixed, $\bar d_i$ is a uniformly random vector on the sphere of radius $\sqrt{1 - (d_i)_1^2}$. The following key lemma shows how to prove various inequalities on $\alpha$ that are valid for any feasible solution to Program~\ref{prog:alpha}.
	
	% lemma: can prove a single inequality
	\begin{lemma}\label{lemma:ineq}
		Let $\gamma > 0$. Fix $c$ satisfying the conclusion of Lemma~\ref{lemma:c} with some parameter $\eta$. For any set $S \subseteq [k]$ of size at most $(1-\gamma)p$, with $1 \in S$, it holds with very high probability (over the randomness of $\{\bar d_i\}$) that every feasible point for Program~\ref{prog:alpha} satisfies $\sum_{i \in S} \alpha_i \leq 0$.
	\end{lemma}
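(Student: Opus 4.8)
The plan is to produce a \emph{dual certificate}: a positive semidefinite test matrix $M$ that, paired against the feasibility constraint $B \succeq 0$, forces $\sum_{i \in S}\alpha_i \le 0$ for every feasible point at once. Suppose we can find $M \succeq 0$ with $M d_i = 0$ for every $i \in S$ and $d_i^\top M d_i = 1$ for every $i \notin S$. Since $1 \in S$ we have $d_1^\top M d_1 = 0$, so for any feasible $\alpha$, writing $B = d_1 d_1^\top + \sum_i \alpha_i d_i d_i^\top$, the constraint $\langle M,B\rangle \ge 0$ reads $\langle M,B\rangle = d_1^\top M d_1 + \sum_i \alpha_i\, d_i^\top M d_i = \sum_{i \notin S}\alpha_i \ge 0$. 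Combined with $\sum_i \alpha_i = 0$ this gives $\sum_{i\in S}\alpha_i = -\sum_{i\notin S}\alpha_i \le 0$, which is the claim. Thus the entire lemma reduces to the \emph{existence} of such an $M$, a deterministic consequence of the fitting statement below.

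To build $M$, let $P$ be the orthogonal projection onto $V := \spn\{d_i : i \in S\}^\perp$. With probability one the $|S| \le (1-\gamma)p$ vectors $\{d_i\}_{i\in S}$ are linearly independent, so $\dim V = p - |S| \ge \gamma p =: p'$. Seeking $M$ in the form $M = P N P$ with $N \succeq 0$ automatically guarantees $M d_i = 0$ for $i \in S$ and $M \succeq 0$, and reduces the remaining requirement to $\tilde d_i^\top N \tilde d_i = 1$ for the projected points $\tilde d_i := P d_i$, $i \notin S$, because $d_i^\top M d_i = (P d_i)^\top N (P d_i) = \tilde d_i^\top N \tilde d_i$. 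This is precisely an ellipsoid-fitting problem inside $V \cong \R^{p'}$ for the $k' := k - |S| \le k$ points $\{\tilde d_i : i \notin S\}$.

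I would then invoke Theorem~\ref{thm:ellipsoid-2}, which was designed exactly for this perturbed-norm situation. Conditioned on $\{d_i\}_{i\in S}$ (which fixes $V$) and on the already-fixed $c$, the $\tilde d_i$ for $i \notin S$ are independent, each a uniformly oriented vector of $V$ whose squared norm $\|P d_i\|^2 = 1 - \|P_{V^\perp} d_i\|^2$ concentrates around $p'/p$. Rescaling by $\sqrt p$ and coupling each point to a Gaussian of matching direction (absorbing the norm discrepancy into the scalar $\pi_i = \|\sqrt p\,\tilde d_i\|/\|w_i\|$), the points $\sqrt p\,\tilde d_i$ realize the model $v_i = \pi_i w_i$ with $w_i \sim \ncal(0,I_{p'})$, where the chi-squared concentration of Lemma~\ref{lemma:chi-sq} supplies the required fluctuation bound $|1/\pi_i^2 - 1| \le (p')^{-1/2+\delta}$. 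The dimension budget is comfortable: since $p' \ge \gamma p$ and $k' \le k < (2-\varepsilon)p\log p$, we have $k' \ll (p')^{6/5 - \varepsilon'}$ for any fixed $\varepsilon' < 1/5$, so Theorem~\ref{thm:ellipsoid-2} yields the fitting matrix $N$, hence $M$, with very high probability.

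The main obstacle is the conditioning-and-concentration bookkeeping of the third paragraph rather than the reduction itself. One must verify that after conditioning on $\{d_i\}_{i\in S}$ the residual points are genuinely independent uniform directions in $V$, and that their norms concentrate tightly enough (relative fluctuations at the $p^{-1/2+\delta}$ scale) to meet the hypothesis on $\pi_i$ in Theorem~\ref{thm:ellipsoid-2}; by contrast the algebraic reduction to ellipsoid fitting and the dimension count $k' \ll (p')^{6/5-\varepsilon'}$ are routine. Note also that the ``very high probability'' guarantee is what the statement of the lemma advertises, precisely so that a later union bound over the relevant sets $S$ can be taken.
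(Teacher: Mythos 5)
Your reduction is in essence the paper's own argument: the paper likewise constructs a PSD quadratic form $Y$ that vanishes on a projected-out subspace and equals $1$ on the projections of the $d_i$, $i \notin S$, then pairs it with $B \succeq 0$ to get $0 \le \langle Y,B\rangle = \sum_{i\notin S}\alpha_i = -\sum_{i\in S}\alpha_i$. The genuine gap is in your choice of subspace. You project onto $V = \spn\{d_i : i\in S\}^{\perp}$, whereas the paper projects onto $\mathcal{D}^{\perp}$ with $\mathcal{D} = \spn(\{d_i : i \in S\}\cup\{e_1\})$, i.e.\ it also removes the direction of $u$. This extra direction is not a technicality. The lemma is stated conditionally on $c$, so after conditioning, the first coordinate $(d_i)_1 = \sqrt{c_i/p}$ of every $d_i$ is \emph{deterministic} and only $\bar d_i$ is random. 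Your projected points are $\tilde d_i = P d_i = (d_i)_1 P e_1 + P\bar d_i$: each carries the same deterministic bias along the fixed nonzero vector $Pe_1$ (of length up to $\sqrt{(2+\eta)(\log k)/p}\,\|Pe_1\|$), and moreover the covariance of $P\bar d_i$ inside $V$ is anisotropic (the variance in the direction $Pe_1$ is deflated by the factor $1-\|Pe_1\|^2$). So your claim that, conditioned on $\{d_i\}_{i\in S}$ and $c$, the $\tilde d_i$ are ``uniformly oriented vectors of $V$'' is false, and the points cannot be written as $\pi_i w_i$ with $w_i$ Gaussian and $\pi_i$ scalar: a common additive bias perturbs the \emph{direction}, not just the norm, while the hypothesis of Theorem~\ref{thm:ellipsoid-2} tolerates only radial perturbations $|1/\pi_i^2 - 1| \le p^{-1/2+\delta}$. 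As written, the invocation of Theorem~\ref{thm:ellipsoid-2} is therefore not justified.

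The fix is precisely the paper's choice: put $e_1$ into the projected-out span. Then $\mathcal{D}^{\perp} \subseteq e_1^{\perp}$, so $P_{\mathcal{D}^\perp} d_i = P_{\mathcal{D}^\perp}\bar d_i$ has no deterministic component; since $\bar d_i$ is conditionally uniform on a sphere of $e_1^{\perp}$ and $\mathcal{D}^{\perp}$ is a fixed subspace of $e_1^{\perp}$ (it depends only on $\{d_j\}_{j\in S}$, independent of $\bar d_i$ for $i \notin S$), the projection is rotationally invariant in $\mathcal{D}^{\perp}$. Writing $d_i$ in Gaussian-normalized form then yields exactly the model $v_i = \pi_i \tilde v_i$ with $\tilde v_i$ Gaussian and $1/\pi_i^2 = (d_i)_1^2 + \|\tilde x_i\|^2 + \|\tilde v_i\|^2$, whose deviation from $1$ is $O(p^{-1/2+\delta})$ by the chi-squared bound (Lemma~\ref{lemma:chi-sq}) together with $(d_i)_1^2 \le (2+\eta)(\log k)/p$. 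Everything else in your write-up --- the certificate algebra giving $\sum_{i\in S}\alpha_i \le 0$, the condition $Md_i = 0$ for $i\in S$, the rescaling by $\sqrt p$, and the dimension count $k < (2-\varepsilon)p\log p \ll (p')^{6/5-\varepsilon'}$ with $p' \ge \gamma p/2$ --- matches the paper's proof and is sound.
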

	\begin{proof}
		Let $\mathcal{D} = \mathrm{span}(\{d_i \mid i \in S\} \cup \{e_1\})$, and for $i \not\in S$, let $v_i = P_{\mathcal{D}^\perp} d_i$, the orthogonal projection onto $\mathcal{D}^\perp$. We will show how to use Theorem~\ref{thm:ellipsoid-2} to construct an ellipsoid $Y$ on the subspace $\mathcal{D}^\perp$ that passes through the vectors $\{ v_i \mid i \not\in S \}$; we extend the quadratic form $Y$ to the entire space $\R^p$ by acting as $0$ on $\mathcal{D}$. Then since $Y \succeq 0$, we have for any feasible point $B$ of the Program~\ref{prog:alpha}:
		$$ 
		\begin{aligned}
		0 \leq \langle Y,B \rangle 
		&= \sum_{i \not\in S} \alpha_i d_i^\top Y d_i 
		= \sum_{i \not\in S} \alpha_i v_i^\top Y v_i \\
		&= \sum_{i \not\in S} \alpha_i = -\sum_{i \in S} \alpha_i ,
		\end{aligned}
		$$
		\noindent which yields the desired inequality.
		
		It remains to show that (with very high probability) we can construct the ellipsoid $Y$. Choose an orthonormal basis so that the first coordinate is still $e_1$ (parallel to $u$), the first $|S| + 1$ coordinates span $\mathcal{D}$, and the remaining $p' = p - |S| - 1$ coordinates span $\mathcal{D}^\perp$. In this basis, write $d_i = [(d_i)_1 \; x_i^\top \; v_i^\top]^\top$ with $x_i \in \R^{|S|}$ and $v_i \in \R^{p'}$. With $\tilde x_i \sim \ncal(0,I_{|S|}/p)$ and (independently) $\tilde v_i \sim \ncal(0,I_{p'}/p)$ we have $v_i = \tilde v_i/\sqrt{(d_i)_1^2 + \|\tilde x_i\|^2 + \|\tilde v_i\|^2} = \pi_i \tilde v_i$ where $\pi_i = \left((d_i)_1^2 + \|\tilde x_i\|^2 + \|\tilde v_i\|^2\right)^{-1/2}$. In order to invoke Theorem~\ref{thm:ellipsoid-2} and complete the proof, we need to show that for any $\delta > 0$, $|1/\pi_i^2-1| \le (p')^{-1/2+\delta}$ with very high probability. We have $1/\pi_i^2 = (d_i)_1^2 + \|\tilde x_i\|^2 + \|\tilde v_i\|^2 \sim (d_i)_1^2 + \frac{1}{p}\chi_{p-1}^2$. The result now follows by combining the facts $p' \ge \frac{1}{2} \gamma \,p$ and $(d_i)_1^2 = c_1/p \le (2+\eta) (\log k) / p$ with the chi-squared tail bound (Lemma~\ref{lemma:chi-sq}).
	\end{proof}

	% outline for combining inequalities
	We will choose a collection $\mathcal{S}$ (depending on $c$ but not $\{\bar d_i\}$) of sets $S$ to which we will apply Lemma~\ref{lemma:ineq}. The idea will be to combine the constraints from Lemma~\ref{lemma:ineq} to produce the constraint $\sum_{i=1}^k c_i \alpha_i \le 0$, showing that $\alpha = 0$ is an optimal solution to Program~\ref{prog:alpha}. (We will later argue why it is the unique optimum.)
	
	% construct random sets S
	We can construct a random $S \subseteq [k]$ by including each $i \ge 2$ independently with probability $q_i = c_i/c_2$ (and always including index 1). (Recall that we have indexed so that the $c_i$ are decreasing.) Let the collection $\mathcal{S}$ consist of $N = k^{11}$ subsets constructed independently by the above process.
	
	% S is not too big
	In order to apply Lemma~\ref{lemma:ineq}, we need to check that each $S \in \mathcal{S}$ has size at most $(1-\gamma)p$.
	\begin{lemma}
		\label{lemma:S-size}
		Suppose $k \le p^{6/5-\varepsilon}$ for some fixed $\varepsilon > 0$. There exist $\eta > 0$ and $\gamma > 0$ (both depending on $\varepsilon$) so that the following holds. Fix $c$ satisfying the conclusion of Lemma~\ref{lemma:c} with parameter $\eta$. With high probability, every $S \in \mathcal{S}$ satisfies $|S| \le (1-\gamma)p$.
	\end{lemma}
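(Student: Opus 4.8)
The plan is to regard $|S|$ as a sum of independent indicators, show that its mean lies strictly below $p$, and then control all $N = k^{11}$ sets simultaneously via a Chernoff tail bound followed by a union bound.

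First I would write $|S| = 1 + \sum_{i=2}^k \one[i\in S]$, where, with $c$ held fixed, the indicators are mutually independent with $\prob{i\in S} = q_i = c_i/c_2$ and index $1$ is always present; hence $\E|S| = 1 + c_2^{-1}\sum_{i=2}^k c_i$. Substituting Lemma~\ref{lemma:c} — part~(ii) for $\sum_{i=2}^k c_i \le \sum_{i=1}^k c_i \le (1+\eta)k$ and part~(i) for $c_2 \ge (2-\eta)\log k$ — gives
$$\E|S| \;\le\; 1 + \frac{(1+\eta)\,k}{(2-\eta)\log k}.$$
To convert this into a bound of the form $(1-2\gamma)p$ — the step where the logarithmic gain $c_2 \approx 2\log k$ is decisive — I would use that throughout the proof of Theorem~\ref{thm:main} the bound $k < (2-\varepsilon)p\log p$ is available. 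For $k > p$ this gives $\log k > \log p$, hence $k/\log k < (2-\varepsilon)p$ and $\E|S| \le 1 + \frac{(1+\eta)(2-\varepsilon)}{2-\eta}p$; for $k \le p$ the monotonicity of $x/\log x$ gives instead $\E|S| \le 1 + \frac{(1+\eta)}{2-\eta}\cdot\frac{p}{\log p} = o(p)$. Taking $\eta = \eta(\varepsilon)$ small enough to push the leading coefficient below $1$ and setting $\gamma := \varepsilon/8$ then yields $\E|S| \le (1-2\gamma)p$ for all large $p$, which fixes the promised $\eta(\varepsilon)$ and $\gamma(\varepsilon)$.

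With the mean pinned below $(1-2\gamma)p$, a single set can violate $|S| \le (1-\gamma)p$ only through an upward deviation of at least $\gamma p$; since $\mathrm{Var}(|S|) \le \E|S| \le p$, a Chernoff bound for sums of independent indicators gives $\prob{|S| > (1-\gamma)p} \le \exp(-\Omega(\gamma^2 p)) = \exp(-\Omega(p))$. A union bound over the $N = k^{11}$ independently sampled sets then caps the total failure probability at $k^{11}\exp(-\Omega(p))$. This is exactly where the stated hypothesis $k \le p^{6/5-\varepsilon}$ enters: it makes $N = k^{11} = p^{O(1)}$ polynomial in $p$, so the exponential factor dominates and the probability tends to $0$, giving with high probability that every $S \in \mathcal{S}$ satisfies $|S| \le (1-\gamma)p$.

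The one genuinely delicate point is the passage from $\E|S| \lesssim k/(2\log k)$ to a value strictly below $p$: here Lemma~\ref{lemma:c}(i)'s factor of $2\log k$ is indispensable, and it is precisely this factor that makes the mean fall below $p$ in the $k = O(p\log p)$ regime of Theorem~\ref{thm:main} — the same place the threshold $2p\log p$ originates. The remaining ingredients (independence of the indicators, the mean computation, the Chernoff tail, and the polynomial union bound afforded by $k \le p^{6/5-\varepsilon}$) are all routine once $\eta$ and $\gamma$ are tied to $\varepsilon$.
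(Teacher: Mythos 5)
Your proposal is correct and takes essentially the same route as the paper's proof: bound $\E|S| \le 1 + \frac{(1+\eta)k}{(2-\eta)\log k}$ via Lemma~\ref{lemma:c}, use $k < (2-\varepsilon)p\log p$ to push this below $(1-\Omega(1))p$ for $\eta$ small enough, then apply a concentration inequality to each $|S|$ and a union bound over the polynomially many sets in $\mathcal{S}$ (the paper uses Hoeffding with deviation $k/\log^2 k$ before converting to a bound in $p$, while you convert first and then use a Bernstein-type bound with deviation $\gamma p$ --- an immaterial reordering). You also correctly identified, just as the paper's proof does, that the argument actually relies on the hypothesis $k < (2-\varepsilon)p\log p$ inherited from Theorem~\ref{thm:main} rather than the bound $k \le p^{6/5-\varepsilon}$ stated in the lemma, which on its own would not suffice.
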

	\begin{proof}
		% this version of Hoeffding is from Wikipedia "Hoeffding's inequality" -> "General case"
		For each $S \in \mathcal{S}$ we have $\EE|S| = 1 + \sum_{i > 1} \frac{c_i}{c_2} \le \sum_{i \ge 1} \frac{c_i}{c_2} \le \frac{(1+\eta)k}{c_2} \le \frac{(1+\eta)k}{(2-\eta)\log k}$ using Lemma~\ref{lemma:c}. By Hoeffding's inequality, for any $t \ge 0$, $\prob{|S| - \EE|S| \ge tk} \le \exp(-2 k t^2)$. Letting $t = 1/\log^2 k$ and taking a union bound over all $S \in \mathcal{S}$ we have that with high probability, every $S \in \mathcal{S}$ satisfies $|S| \le \frac{(1+\eta)k}{(2-\eta)\log k} + \frac{k}{\log^2 k} = (1+o(1))\frac{(1+\eta)k}{(2-\eta)\log k}$. Using the hypothesis $k \le (2-\varepsilon) p \log p$ and taking $\eta$, $\gamma$ small enough, this yields $|S| \le (1+o(1))\frac{(1+\eta)(2-\varepsilon)p \log p}{(2-\eta)\log((2-\varepsilon)p \log p)} \le (1-\gamma)p$ for sufficiently large $p$.
	\end{proof}
	
	% each i picked (approx) correct number of times
	Let $n_i$ denote the number of sets $S \in \mathcal{S}$ in which $i$ appears. The following lemma shows concentration of the $n_i$.
	\begin{lemma}
		\label{lemma:del}
		Let $\delta = k^{-4}$. Fix $c$ satisfying the conclusion of Lemma~\ref{lemma:c} with some parameter $\eta > 0$. With high probability, for all $i$ we have $(1-\delta)q_i \le \frac{n_i}{N} \le (1+\delta)q_i$.
	\end{lemma}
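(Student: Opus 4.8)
The plan is to recognize that $n_i$ is, by construction, a sum of $N$ independent indicator variables, one per set $S \in \mathcal{S}$, each equal to $1$ precisely when $i \in S$. Since the sets in $\mathcal{S}$ are generated i.i.d., we have $n_i \sim \Binom(N, q_i)$ for each $i \ge 2$, while for $i = 1$ the index is always included, so $q_1 = 1$ and $n_1 = N$ deterministically and the claimed bound holds trivially. The whole statement is therefore a routine multiplicative concentration result, and the only substantive ingredient is a good lower bound on $q_i$ so that the relevant Chernoff exponent diverges.

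First I would invoke the multiplicative Chernoff bound: for each fixed $i \ge 2$,
$$\prob{\abs{n_i - N q_i} \ge \delta N q_i} \le 2 \exp\rbra{-\tfrac{1}{3} \delta^2 N q_i}.$$
Next I would lower-bound $q_i = c_i / c_2 \ge c_k / c_2$ using the fixed vector $c$ from Lemma~\ref{lemma:c}: by part (iii), $c_k \ge 1/(k \log k)$, and by part (i), $c_2 \le (2+\eta)\log k$, so that $q_i \ge 1/\sbra{(2+\eta) k \log^2 k}$ for every $i$. Plugging in $N = k^{11}$ and $\delta = k^{-4}$ then gives
$$\delta^2 N q_i \ge \frac{k^{-8} \cdot k^{11}}{(2+\eta)\, k \log^2 k} = \frac{k^2}{(2+\eta)\log^2 k},$$
which tends to infinity. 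Thus each individual failure probability decays faster than any inverse polynomial in $k$, and a union bound over the $k$ indices leaves a total failure probability of at most $2k \exp\rbra{-\tfrac{k^2}{3(2+\eta)\log^2 k}} \to 0$. This establishes the two-sided bound for all $i$ simultaneously, in fact with very high probability (not merely high probability).

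The main obstacle is not any single step but ensuring the polynomial bookkeeping closes: the generous choice $N = k^{11}$ against $\delta = k^{-4}$ is exactly what makes $\delta^2 N$ large enough to overcome the potential smallness of $q_i$, which can be as tiny as $\Theta\rbra{1/(k \log^2 k)}$, and still survive the union bound over $k$ terms. I would take care to verify that the smallest $q_i$, controlled via Lemma~\ref{lemma:c}(iii), is the binding constraint; the upper-tail direction is strictly easier since $q_i \le 1$ keeps the mean $N q_i$ bounded, so no separate argument is needed there.
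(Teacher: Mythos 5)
Your proof is correct and follows essentially the same route as the paper: recognize $n_i \sim \Binom(N,q_i)$, lower-bound $q_i \ge c_k/c_2 \ge 1/((2+\eta)k\log^2 k)$ via parts (i) and (iii) of Lemma~\ref{lemma:c}, apply a binomial tail bound, and union-bound over the $k$ indices. The only difference is the choice of tail inequality: you use the multiplicative Chernoff bound (exponent $\delta^2 N q_i/3$, linear in $q_i$) where the paper uses Hoeffding (exponent $2\delta^2 N q_i^2$), so you get a marginally stronger exponent of order $k^2/\mathrm{polylog}(k)$ versus the paper's $k/\mathrm{polylog}(k)$ --- both comfortably survive the union bound, and your explicit handling of the deterministic index $i=1$ is a small point of care the paper glosses over.
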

	\begin{proof}
		% this version of Hoeffding is from Wikipedia "Bernoulli distribution" -> "Tail bounds"
		Note that $n_i \sim \Binom(N,q_i)$. By Hoeffding's inequality, $\prob{n_i \le N(1-\delta)q_i} \le \exp(-2 N \delta^2 q_i^2) \le \exp(-2 k^{11} k^{-8} (c_k/c_2)^2) \le \exp(-2 k^3 / ((2+\eta)k \log^2 k)^2) = \exp(-k/\mathrm{polylog}(k))$, using Lemma~\ref{lemma:c}. The same bound also holds for $\prob{n_i \ge N(1+\delta)q_i}$. Taking a union bound over all $k$ indices $i$, we obtain the desired result.
	\end{proof}

	% combine inequalities
	Let $\mathcal{\hat S}$ be the collection consisting of all sets in
	$\mathcal{S}$ along with the additional sets $\{1\}$ and $\{1,i\}$ for each
	$i \ge 2$. Since there are polynomially-many sets in $\mathcal{\hat S}$ 
	we have (by Lemma~\ref{lemma:ineq} and a union bound) that with high
	probability, every feasible $\alpha$ for Program~\ref{prog:alpha} satisfies
	$\sum_{i \in S} \alpha_i \le 0$ for every $S \in \mathcal{\hat S}$. Our next
	step is to combine these constraints to make the constraint
	$\sum_{i=1}^k c_i \alpha_i \le 0$. In other words, we need to form the
	vector $c$ as a conic combination of the vectors 
	$\{\one_S \mid S \in \mathcal{\hat S}\}$. We can do this as follows:
	
	$$c = \frac{c_2}{(1+\delta)N} \sum_{S \in \mathcal{S}} \one_S + \sum_{i=2}^k A_i \one_{\{1,i\}} + b\, \one_{\{1\}},$$
	where
	$$A_i = c_i - \frac{c_2 n_i}{(1+\delta)N}$$
	and
	$$b = c_1 - \frac{c_2}{1+\delta} - \sum_{i > 1} A_i.$$
	
	\noindent The first term is a uniform combination of the constraints from $\mathcal{S}$; by the construction of $\mathcal{S}$, this is already close to $c$. The remaining two terms correct for the discrepancy.
	
	It remains to check $A_i \ge 0$ and $b \ge 0$. Lemma~\ref{lemma:del} implies that $0 \le A_i \le \frac{2\delta c_i}{1+\delta}$. Using Lemma~\ref{lemma:c} we have
	
	$$b \ge (c_1 - c_2) - \sum_{i > 1} \frac{2 \delta c_i}{1+\delta} \ge \frac{1}{k^2 \log k} - 2\delta(1+\eta)k > 0$$
	by the choice of $\delta = k^{-4}$. This completes the proof that $\alpha = 0$ is an optimal solution to Program~\ref{prog:alpha}.
	
	% uniqueness
	To complete the proof of Theorem~\ref{thm:main} we need to show that $\alpha = 0$ is the unique optimum. Let $\hat c_1 = c_1 - \xi$ for an arbitrary small constant $\xi > 0$, and $\hat c_i = c_i$ for $i \ge 2$. Let $P_1$ denote Program~\ref{prog:alpha} and let $P_2$ denote Program~\ref{prog:alpha} with the objective changed from $c$ to $\hat c$. The above argument shows that (provided $\xi$ is small enough) $\alpha = 0$ is an optimal solution to $P_2$ (as well as $P_1$); to see this, note that we can form $\hat c$ as a conic combination of constraints simply by decreasing $b$ by $\xi$. This means that any optimal solution $\alpha^*$ to $P_1$ must have $\alpha^*_1 \ge 0$, or else it would outperform the zero solution in $P_2$. But we have the constraint $\alpha^*_1 \le 0$ (taking $S = \{1\}$) and so $\alpha^*_1 = 0$. We also have $\alpha^*_1 + \alpha^*_i \le 0$ (taking $S = \{1,i\}$) and $\sum_i \alpha^*_i = 0$, which together imply $\alpha^* = 0$.

	\section{Experiments}
	\label{app-exps}
	
	In this section, we describe the synthetic data 
	and error metrics
	used for the experiments in Section~\ref{sec-exps}.

	\subsection{Sampling Procedures}
	\label{app-sampling}
	
	In this appendix, we describe in details all the sampling procedures
	that were used for the experiments in Section~\ref{sec-exps}.
	
	\subsubsection{Sampling Mixing Matrix}
	\label{app-samling-mm}
	
	Given a fixed pair $(p,k)$, we sample a mixing matrix
	$D\in\R^{p \times k}$ as follows. For every column:
	\begin{itemize}
		\item[1)] Sample a $p$-valued vector $d_i$ from the standard normal distribution;
		\item[2)] Normalize to unit norm: $d_i \leftarrow d_i / \norm{d_i}_2$.
	\end{itemize}
	This is the default sampling procedure for any mixing matrix
	in this paper.
	
	It is also interesting to consider two modifications of this sampling procedure:
	(a) sampling with pruning and (b) sampling with sparseness.
	
	In the former (prune) case, we reject the sampled matrix
	if its coherence $\sigma(D)$ defined in equation~\eqref{def-coherence}
	exceeds the threshold $\wb{\sigma}$. We use the following definition
	of coherence
	\begin{equation}
	\label{def-coherence}
	\sigma(D) := \max_{i\ne j} \; \abs{ \inner{d_i,d_j} },
	\end{equation}
	where $\norm{d_i}_2 = 1$
	for all $i\in[k]$ \citep[see, e.g.,][]{AnaEtAl2015}.
	This coherence $\sigma(D)$ takes values in $[0,1]$
	and is the cosine of the angle between two mixing component
	with the smallest mutual angle. It is intuitively clear that
	it is more difficult to recover latent mixing components
	with smaller angle between them. In practice,
	we set up the threshold $\wb{\sigma}$ to the mean value of the
	coherence for a given pair $(p,k)$ over large number of resampling
	(say $10,000$).
	
	In the latter (sparse) case, we first sample a matrix
	from the normal distribution as described above and
	then zero-out half of the elements of this matrix.
	To construct the support, we sample another matrix from the normal distribution and zero-out all the elements exceeding the median value.
	If the obtained matrix has at least one column of all zeros or 
	the respective atoms $d_id_i^{\top}$ are not linearly independent,
	we resample such matrix.
	
	\begin{figure*}[t]
		\centering
		\begin{tabular}{c c}
			\includegraphics[width=0.5\textwidth]{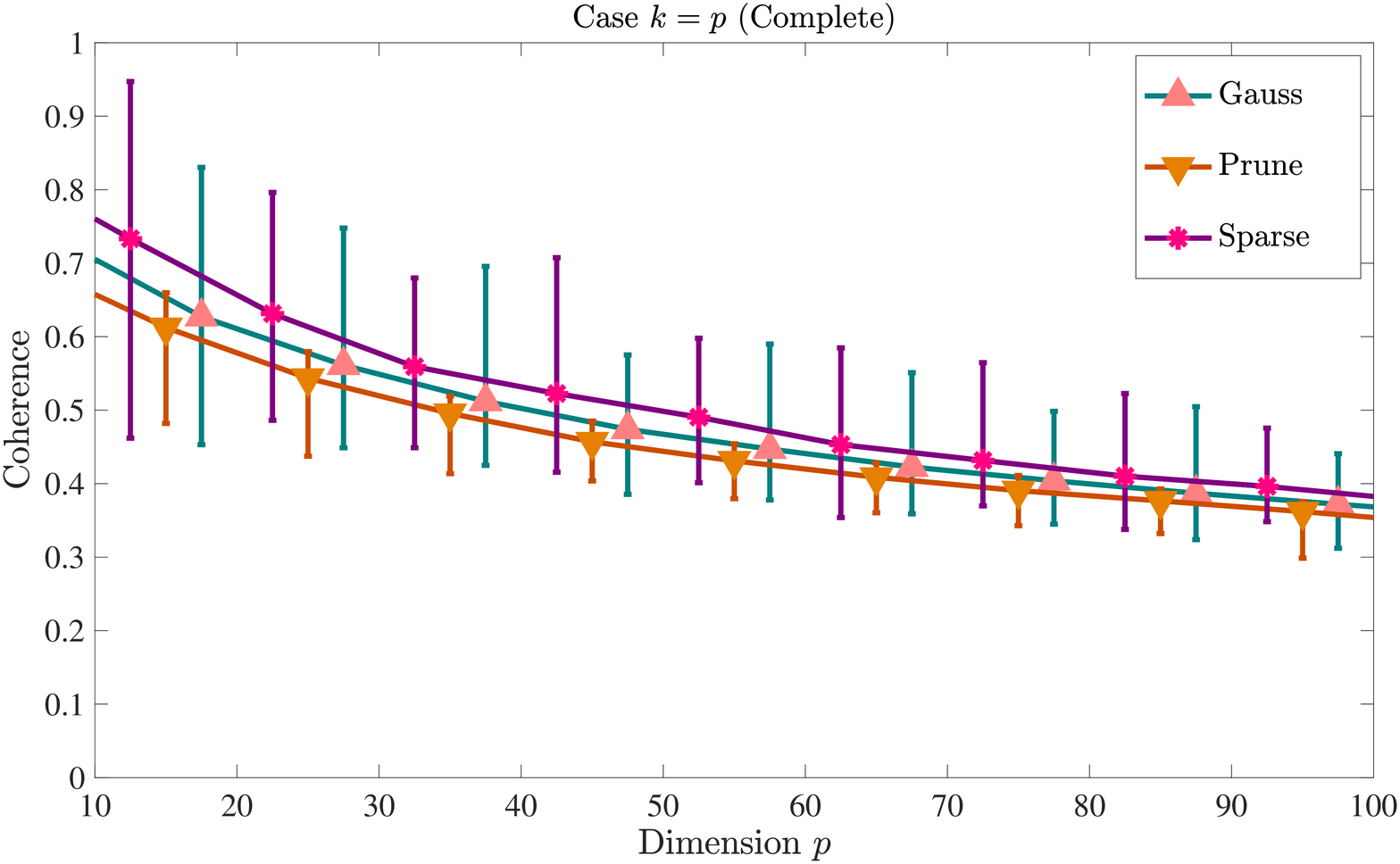}
			&
			\includegraphics[width=0.5\textwidth]{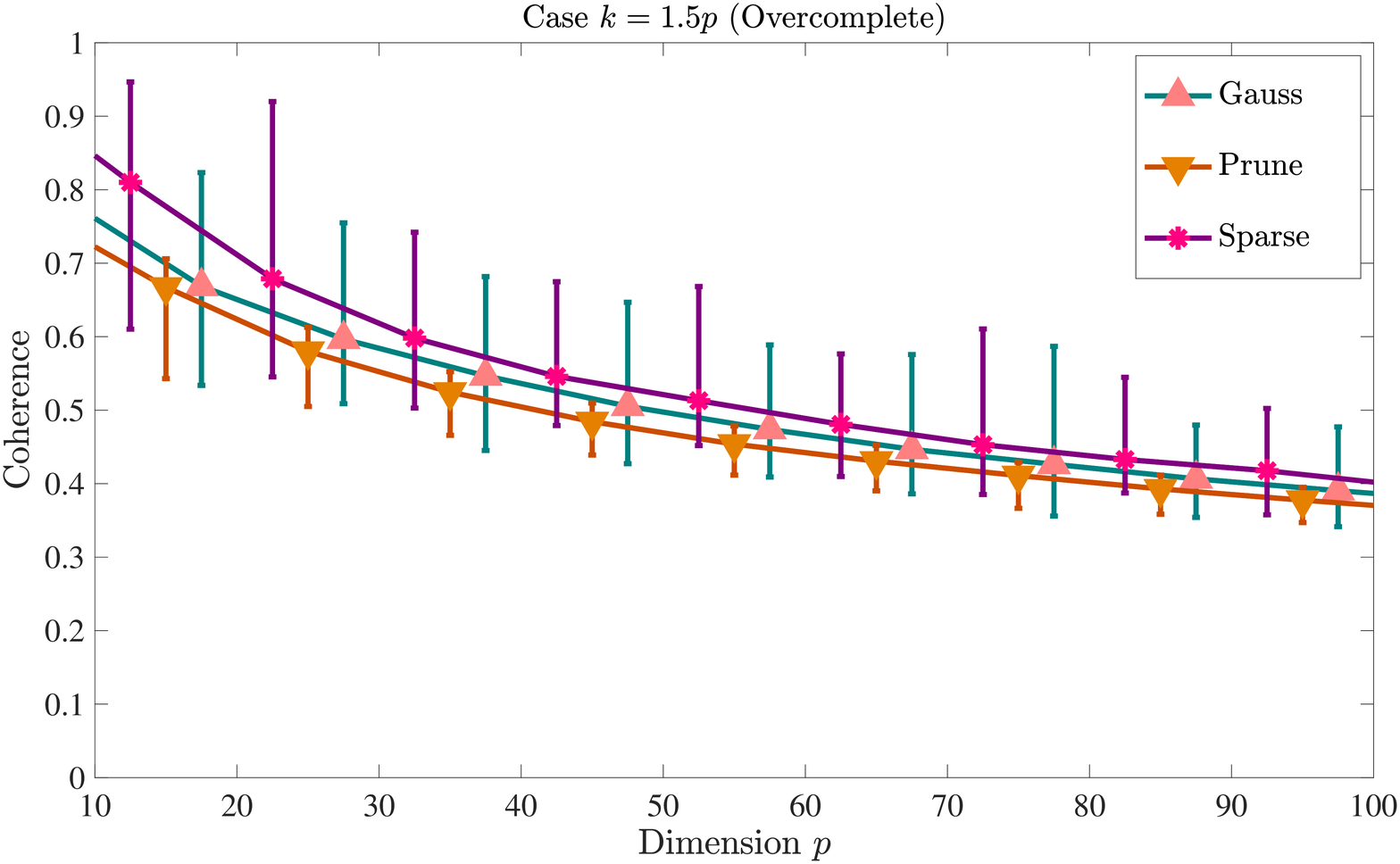}
		\end{tabular}
		\caption{
			Coherence of mixing matrices
			sampled with the sampling procedures described in Appendix~\ref{app-samling-mm}
			in the complete (Left)
			and overcomplete (Right) cases.
			The lines correspond to median values over $1.000$
			samples and the ticks, respectively, to the minimum and maximum
			values.
		}
		\label{fig-coh}
	\end{figure*}
	Importantly, the obtained matrices in these three cases
	are not that much different for the purposes of the overcomplete 
	recovery. Indeed, we can see from the a simple simulation
	experiment that they always have high coherences
	(see Figure~\ref{fig-coh}). 
	The coherence is especially high for lower dimensions $p$,
	such as 10 or 20, which are more amenable to experimental
	comparison. Therefore, one has to be careful when interpreting
	the results in such cases.
	
	As we have seen in extensive experimental comparison,
	the OverICA algorithm recovers equally well mixing matrices
	sampled from any of these three sampling type.

	\subsubsection{The Population Case}
	\label{app-sampling-population}
	
	This synthetic data simulate the 
	infinite sample
	scenario. The algorithms are then provided
	with the exact subspace and one can measure how well
	the algorithms estimate the subspace
	in this noiseless setting. 
	This type of synthetic data can only be used
	either with FOOBI or OverICA algorithms.

	This data is for the scenario where the dimension $p$
	of observations is fixed and the latent dimension $k$
	is changing from $p$ up to $p(p+1)/2$. In particular,
	we sample $N_{rep}$ instances of
	synthetic data for different pairs of $(p,k)$:
	\begin{enumerate}
		\item Fix the dimension of observations $p$;
		\item Repeat $N_{rep}=5$ times for different values of $k$:
		\begin{itemize}
			\item[-] Given a pair $(p,k)$, sample a mixing matrix~$D$
			(see Appendix~\ref{app-samling-mm});
			\item[-] Construct the matrix $A:= D\odot D$;\footnote{
				See Appendix~\ref{app-ica-cum4} for the explanation
				how exactly this matrix is related to the subspace $W$.
			}
			\item[-] Construct the matrix $C := AA^{\top}$;\footnote{
				Since we are only interested in the construction
				of a basis of the matrix $C$ from equation~\eqref{flattening-cum4},
				we can omit the scaling, i.e. $\diag(\kappa)$.
			}
			\item[-] Extract an orthonormal basis $H:=\sbra{h_i,i\in[k]}$ of the column space
			of the matrix $A$ from the matrix $C$
			(in practice, as the largest $k$ eigenvectors or singular vectors of $C$);
			\item[-] 
			Use the matrices $\cbra{H_1, \dots, H_k}$
			where each $H_i := \mat(h_i)$ for all $i\in[k]$, 
			as a basis of the subspace as an input to the second
			step of the OverICA or FOOBI algorithms.
		\end{itemize}
	\end{enumerate}
	
	\subsubsection{The Finite Sample Case: Fixed Dimensions}
	\label{app-sampling-finite}
	
	In this scenario, a finite sample 
	$X:=\cbra{x^{(1)}, \dots, x^{(n)}}$ is sampled exactly
	from the ICA model~\eqref{ica}. This imitates presence 
	of the finite sample noise but absence of 
	the model misspecification.
	The amount of noise can be controlled by the
	number of samples $n$. 
	This data can be used as an input to any overcomplete ICA algorithm.
	
	In particular, we fix both dimensions $p$ and $k$
	and vary the number of samples $n$:
	\begin{enumerate}
		\item Fix the dimension of the observations $p$ and the
		latent dimension $k$;
		\item Repeat ($n_{rep}=10$ times) for different sample sizes $n$:
		\begin{itemize}
			\item[-] Sample a mixing matrix $D$ (see Appendix~\ref{app-samling-mm});
			\item[-] Sample $n$ observations from the ICA model~\eqref{ica}
			with the uniformly distributed on the interval $[-0.5; 0.5]$ sources.
			\item[-] Use this sample $X:=\cbra{x^{(1)}, \dots, x^{(n)}}$ as an input.
		\end{itemize}
	\end{enumerate}
	This procedure results in $n_{rep}$ datasets for any $n$
	for any pair of $(p,k)$.

	\subsubsection{The Finite Sample Case: Fixed Sample Size}
	\label{app-sampling-fsfixedn}
	
	This sampling procedure is almost identical to the one described in Section~\ref{app-sampling-finite}
	with the only difference that the sample size $n$ is fixed instead and the latent dimension $k$ is varied:
	\begin{enumerate}
		\item Fix the dimension of the observations $p$ and the sample size $n$;
		\item Repeat ($N_{rep}=10$ times) for different latent dimensions $k$:
		\begin{itemize}
			\item[-] Sample a mixing matrix $D$ (see Appendix~\ref{app-samling-mm});
			\item[-] Sample $n$ observations from the ICA model~\eqref{ica}
			with the uniformly distributed on the interval $[-0.5; 0.5]$ sources.
			\item[-] Use this sample $X:=\cbra{x^{(1)}, \dots, x^{(n)}}$ as an input.
		\end{itemize}
	\end{enumerate}
	
	It does not make sense to consider values of $k$ greater than $p(p+1)/2$, 
	since in that case the matrix $A:= D\odot D$ does not have full column rank.
	In practice, we point out some interesting values of $k$: (a) $k=p$, (b) $k=p^2/4$
	(corresponds to phase transition of OverICA), (c) $k=p(p-1)/2$ and $k=p(p+1)/2$.
	We mark these values with vertical green lines on plots for all experiments which use 
	this sampling procedure.

	\subsection{Error Metrics}
	\label{app-errors}
	
	Given a ground truth mixing matrix $D$
	and its estimate $\wh{D}$, we introduce
	the following error metrics to measure 
	the estimation quality.
	Note that for the computation of these error metrics
	we assume that every mixing 
	component, i.e. every column of the mixing matrix,
	have unit norm in accordance with Assumption~\ref{ass-scaling}.
	
	\subsubsection{F-Error}
	\label{app-f-error}
	
	We define the \emph{f-error}, i.e. the Frobenius error, as:
	\begin{equation*}
	%\label{def-f-error}
	\mathrm{err}_F(D, \wh{D}) :=
	\min_{\sigma \in \pcal} \frac{ \norm{D - \wh{D}_{\sigma}}_F^2 }{ \norm{D}_F^2 },
	\end{equation*}
	where $\norm{\cdot}_F$ stands for the Frobenius norm of 
	a matrix and we minimize the error over all possible
	permutations $\sigma\in\pcal$ of the columns of $\wh{D}$
	(with the Hungarian algorithm in practice \citep{Kuh1995}).
	Smaller values of this error are better.
	
	\subsubsection{A-Error}
	\label{app-a-error}
	
	We define the \emph{a-error}, i.e. the angle error, as:
	\begin{equation*}
	\begin{aligned}
	\mathrm{err}_C(D, \wh{D}) &:=
	\frac{2}{k\pi} \; \min_{\sigma\in\pcal} \; 
	\sbra{ 
		\sum_{i\in [k]} \acos \rbra{ 
			\gamma  } 
	}, \\
	\gamma &:= \frac{\abs{ \inner{d_i, \wh{d}_{\sigma(i)}} }   }{\norm{d_i}_2 \norm{\wh{d}_{\sigma(i)}}_2},
	\end{aligned}
	\end{equation*}
	where $\norm{\cdot}_2$ stands for the Euclidean norm of
	a vector, the $d_i$ or $\wh{d}_i$ are the $i$-th columns
	of the matrices $D$ or $\wh{D}$, respectively, 
	and we again minimize the error over all
	possible permutations of the columns of $\wh{D}$.
	Note that $\pi\approx 3.14$. The a-error takes values in the interval $[0,1]$
	and smaller values of the a-error are better.
	Note the relation of the a-error to the coherence
	measure~\eqref{def-coherence}.

	\subsubsection{Number of Recovered Atoms}
	\label{app-error-recov}
	
	Since neither a- nor f-errors measure the quality
	of recovery of individual mixing components,
	we also introduce another metric for the estimation
	recovery, which measures the number of ``perfectly''
	recovered components.
	
	\paragraph{Perfect Recovery.}
	
	By a ``perfectly'' recovered component we mean
	a component $\wh{d}_{\sigma(i)}$ which is at
	most angle $\theta$ far from its respective
	ground truth value $d_i$, i.e.
	$$
	\acos \rbra{ \gamma } =
	\acos \rbra{ 
		\frac{\abs{ \inner{d_i, \wh{d}_{\sigma(i)}} }   }{\norm{d_i}_2 \norm{\wh{d}_{\sigma(i)}}_2  } 
	}  \le \theta,
	$$
	where $\sigma$ corresponds to the optimal permutation
	in terms of a-error as described above.
	
	\paragraph{Normalized Recovery Vector.}
	
	We define the \emph{normalized recovery vector} $r\in[0,1]^k$
	such that its $i$-th component is equal to the fraction
	of at least $i$ ``perfectly'' recovered (in terms of the parameter $\theta$)
	components over $N_{rep}$ repetitions of an experiment.
	For example, if $k=5$ and $N_{rep}=3$ and an algorithm recovers
	``perfectly'' 2, 4, and 3 components in these 3 runs,
	then the normalized recovery vector is $r = (1,1, 2/3, 1/3,0)$.
	In the plots in Section~\ref{sec-exps}, we use \textbf{black}
	for 100\% recovery of at least $i\le k$ components, i.e. 1's, and \textbf{white} for 
	never recovering $i\le k$ components or more ``perfectly,'' i.e. 0's. The
	intermediate values are shown in \textbf{grey}.
	We consider the threshold value of 
	$\theta:= \acos(0.99)$,
	which corresponds to the angle $\phi := \theta*180/\pi \approx 8$.

\begin{figure*}[!t]
	\centering
	\begingroup
	\setlength{\tabcolsep}{0pt} % Default value: 6pt
	\renewcommand{\arraystretch}{.5} % Default value: 1
	\begin{tabular}{c@{\hskip 2mm}  c@{\hskip 2mm}  c}
		\includegraphics[width=.32\textwidth]{f3-ferr_smalln.eps}
		& 
		\includegraphics[width=.32\textwidth]{f3-aerr_smalln.eps}
		&
		\includegraphics[width=.32\textwidth]{f3-rec_oica.eps}
		\\
		\includegraphics[width=.32\textwidth]{f3-ferr.eps}
		&
		\includegraphics[width=.32\textwidth]{f3-aerr.eps}
		&
		\includegraphics[width=.32\textwidth]{f3-rec_foobi.eps}
		\\
		\includegraphics[width=.32\textwidth]{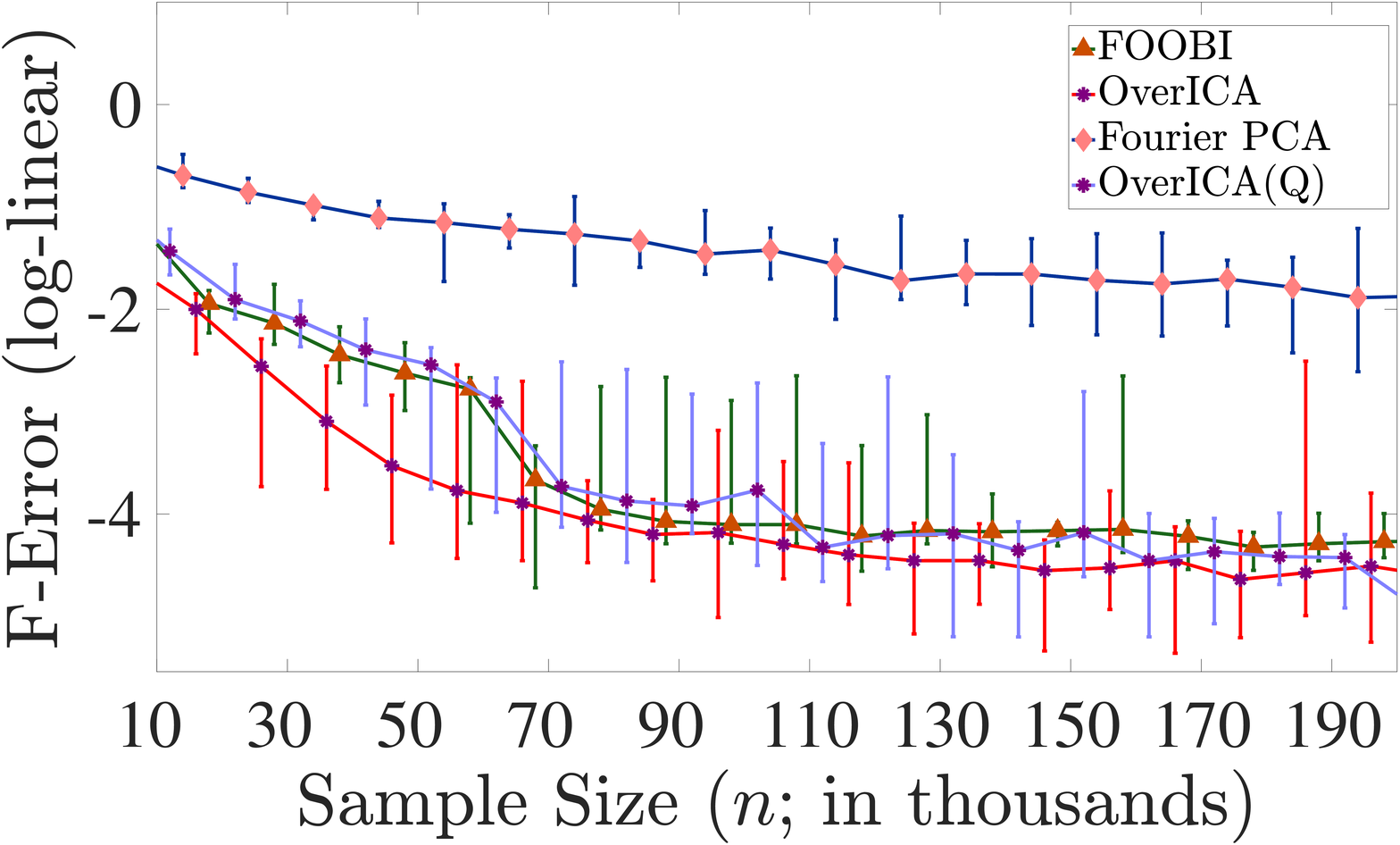}
		&
		\includegraphics[width=.32\textwidth]{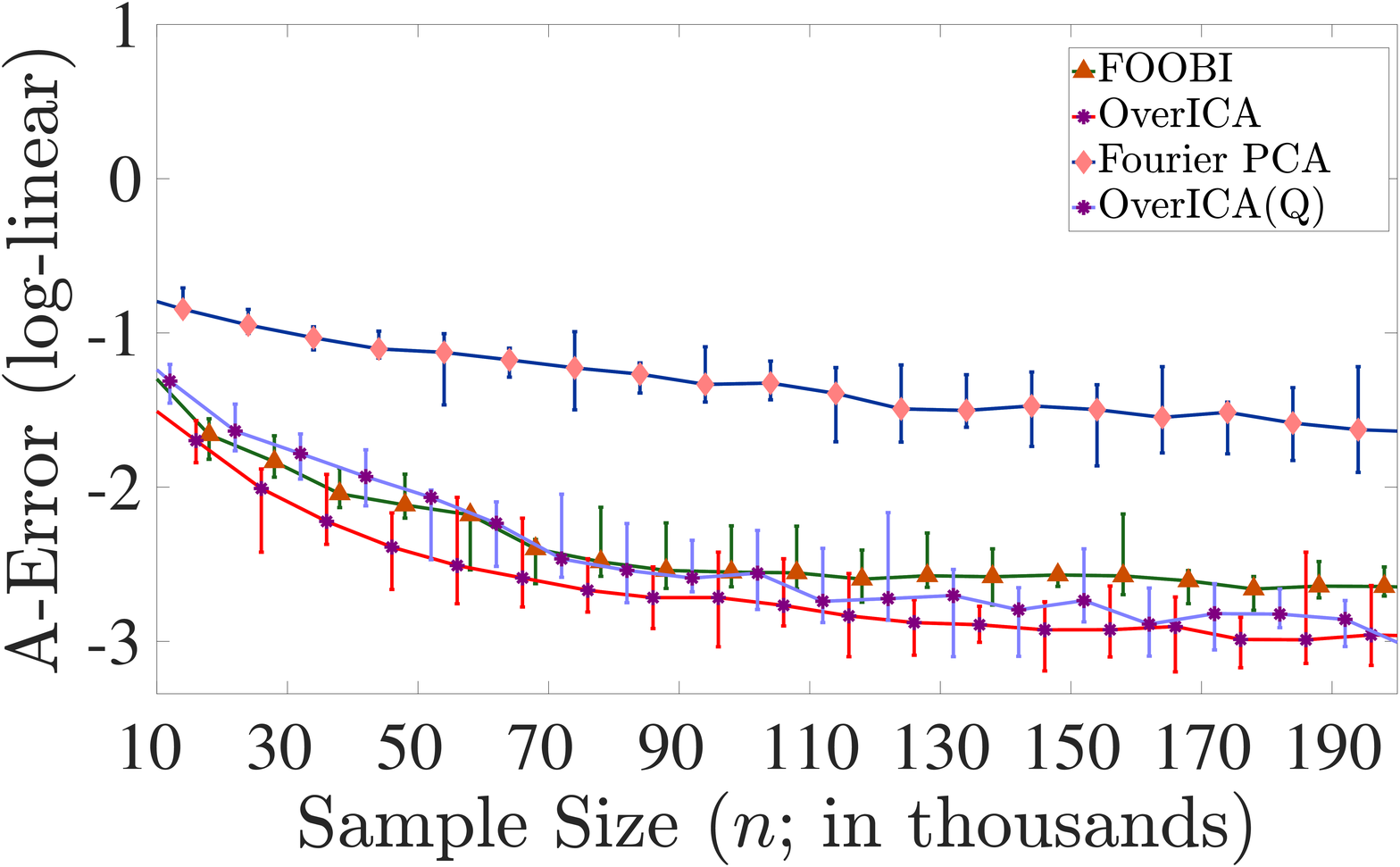}
		&
		\includegraphics[width=.32\textwidth]{f3-rec_oicaq.eps}
	\end{tabular}
	\endgroup
	\caption{
		Comparison in the finite sample regime -- additional plots.
		See explanation in Section~\ref{sec-exps-fs}.
	}
	\label{fig-fs-more}
\end{figure*}

\begin{figure*}[t]
	\centering
	\begingroup
	\setlength{\tabcolsep}{0pt} % Default value: 6pt
	\renewcommand{\arraystretch}{.5} % Default value: 1
	\begin{tabular}{c c c}
		\includegraphics[width=.33\textwidth]{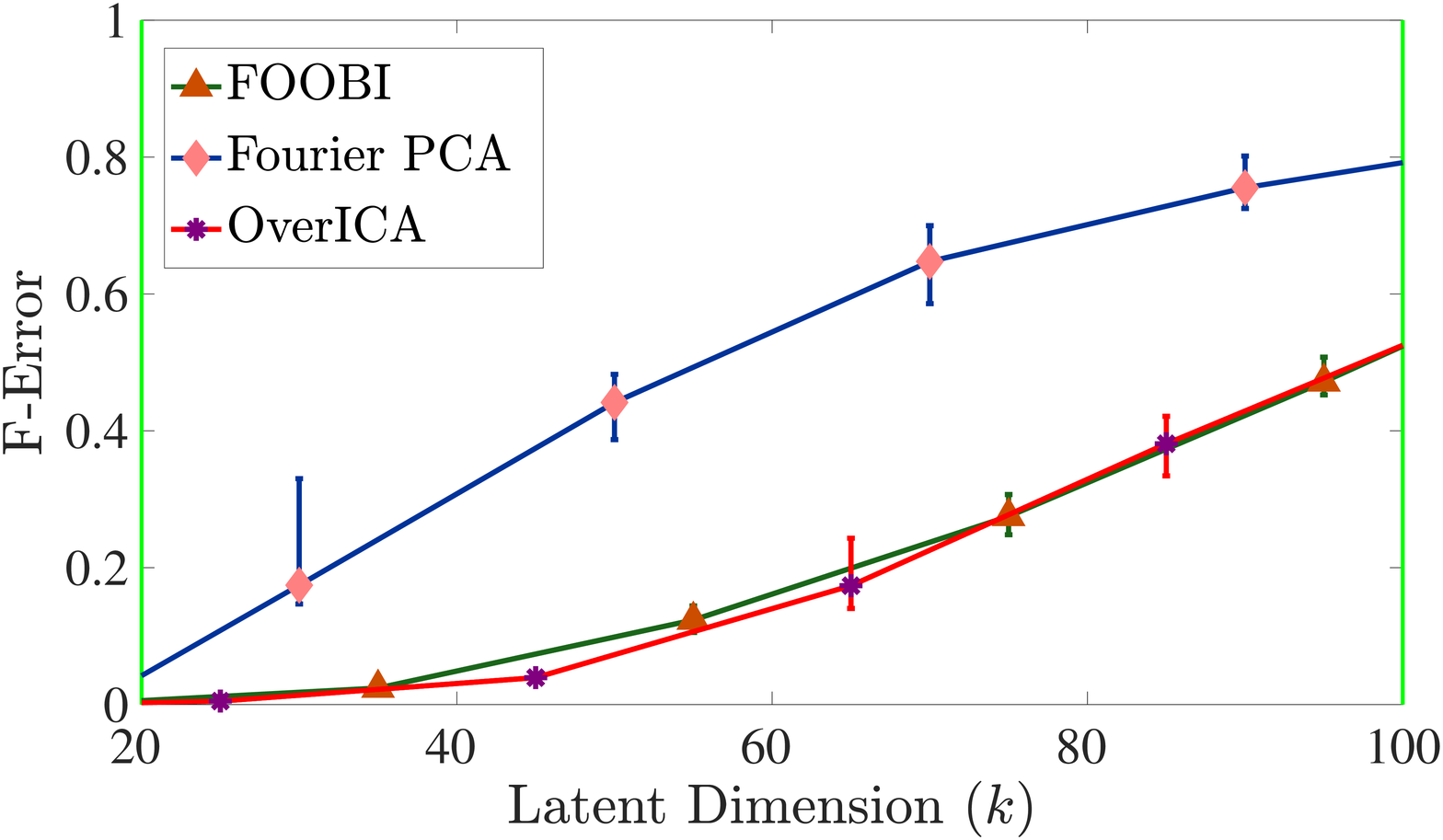}
		&
		\includegraphics[width=.33\textwidth]{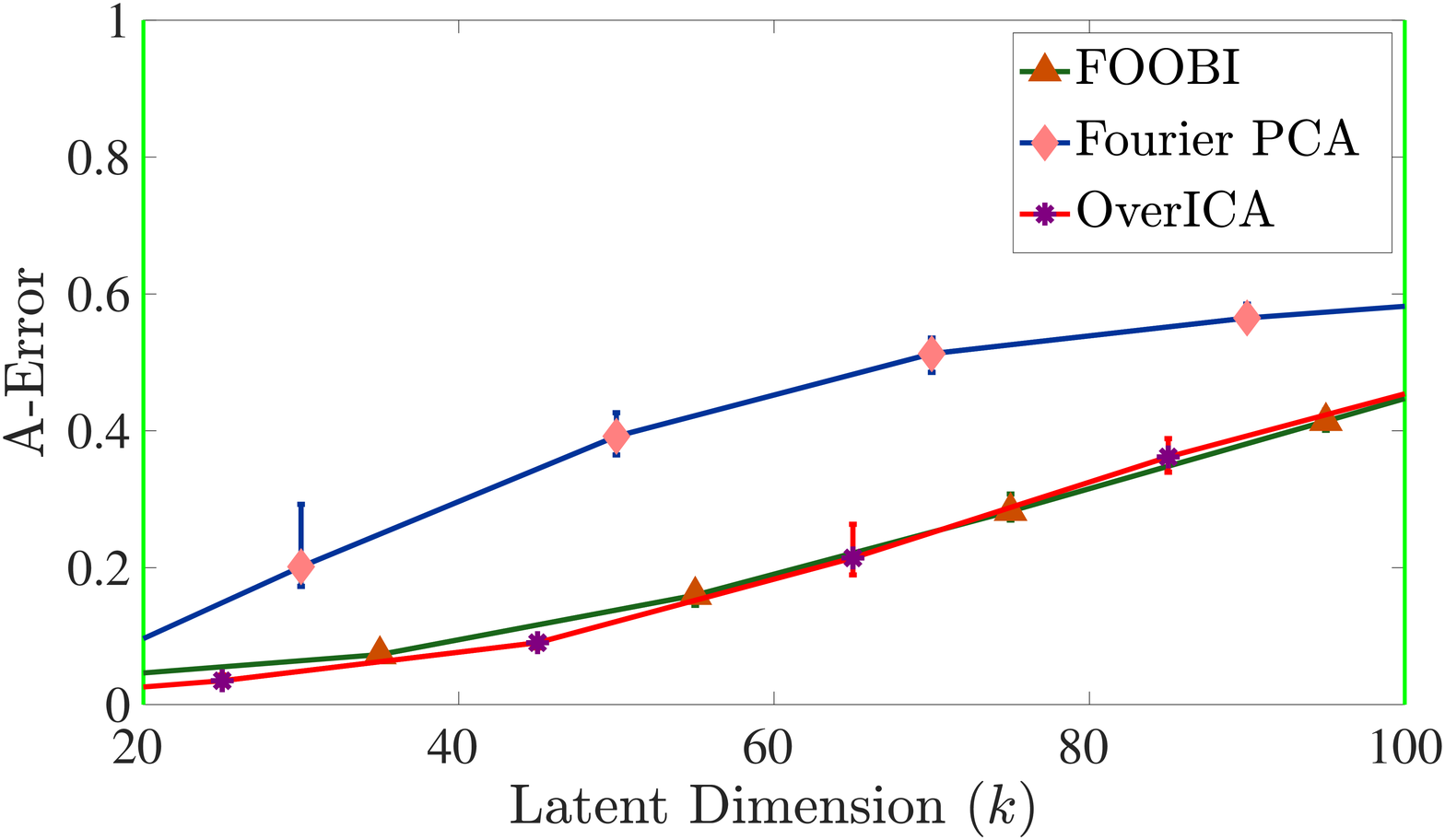}
		&
		\includegraphics[width=.33\textwidth]{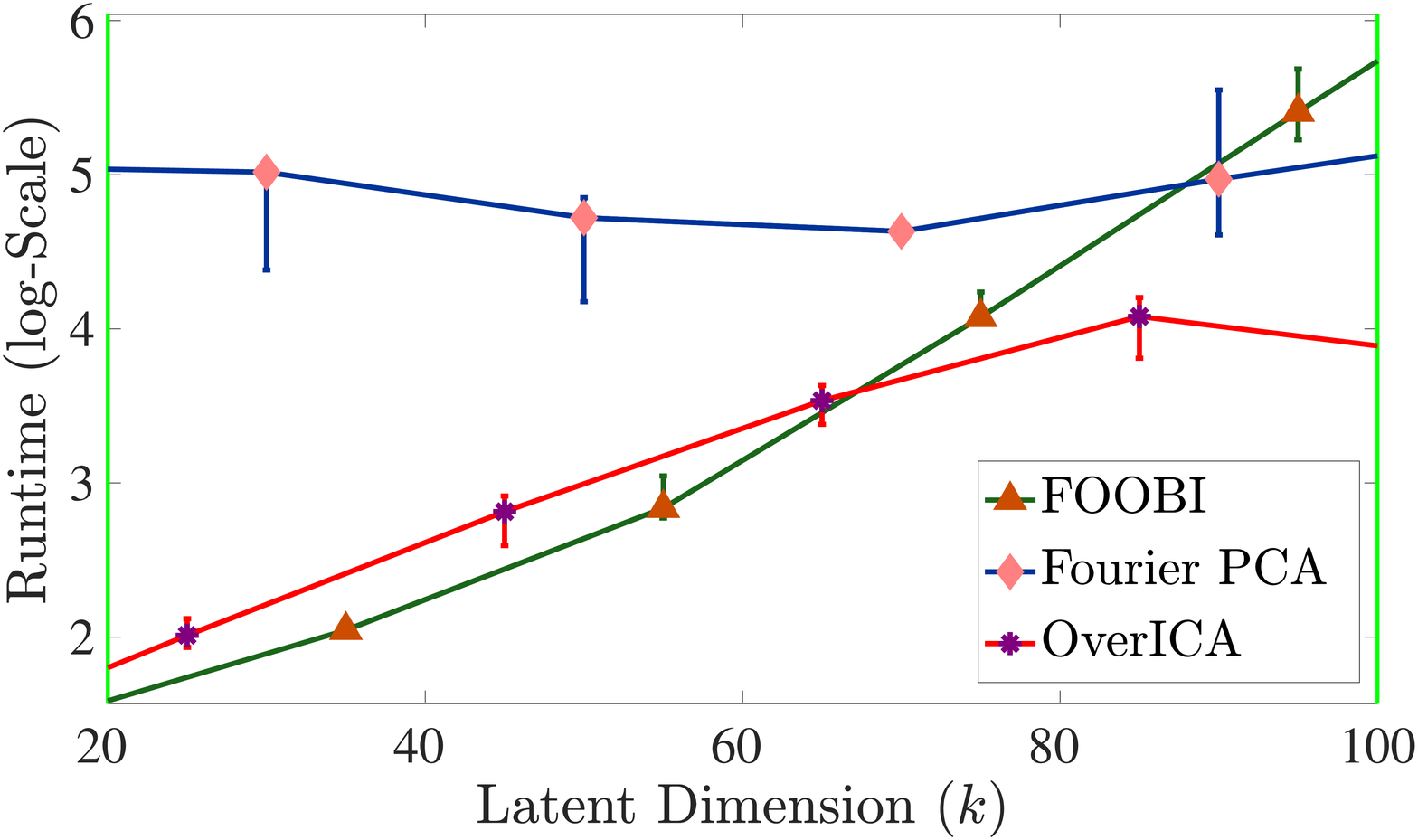}
	\end{tabular}
	\endgroup
	\caption{
		Additional plots for the runtime comparison 
		experiment from Section~\ref{sec-runtime}.
	}
	\label{fig-fs-fixedn}
\end{figure*}	
	
\subsection{Computational and Memory Complexities}
\label{app-compl}
	
	In Table~\ref{tab-compl}, we summarize the computational
	and memory complexities of the FOOBI algorithm \citep{LatEtAl2007},
	the Fourier PCA algorithm \citep{GoyEtAl2014},
	our OverICA algorithm and its modification which replaces
	the first step 
	and two different implementations,
	GenCov from Section~\ref{sec-subspace} and CUM from 
	Appendix~\ref{app-ica-cum4}, of
	the first step of our algorithm.
	
	\paragraph{Complexity of Generalized Covariances.}
	Constructing $s$ generalized covariances,
	where usually $s=O(k)$, requires $O(p^2s)$ memory
	and $O(nsp^2)$ time complexities.
	Extracting further $k$ largest singular vectors
	would require additional $O(k^2p^2)$ time,
	but the other term is dominant since $s=O(k)$
	and $n$ is larger than $k$.
	
	\paragraph{Complexity of the Fourth-Order Cumulant.}
	A flattening $C$
	of the fourth-order cumulant 
	(as described in Appendix~\ref{app-ica-cum4})
	would require
	$O(p^4)$ memory space and it can be constructed
	in $O(np^4)$ time. The algorithms further compute 
	it's $k$ largest singular vectors, which requires
	$O(k^2p^2)$ time.
	
	\paragraph{Complexity of FOOBI.}
	The first step of FOOBI is based on the construction
	of the flattening of the fourth-order cumulant
	and therefore requires the complexities presented above.
	The second step is more involved and requires 
	construction of $O(k^4)$  and $O(p^4k^2)$
	matrices and computation of the eigen decomposition
	of a $O(k^4)$ matrix. This leads to additional
	$O(p^4k^2 + k^4)$ memory and at least $O(k^6)$
	computational complexity requirements. 
	It further solves orthogonal joint matrix diagonalization
	\citep{BunEtA1993,CarSou1993,CarSou1996} which requires 
	at least $O(k^4)$ runtime per sweep.
	
	\paragraph{Complexity of Fourier PCA.}
	The complexity of Fourier PCA is dominated by the 
	first step where two fourth-order generalized cumulants
	are constructed. This requires $O(p^4)$ memory and 
	$O(np^4)$ time complexities, although we notice in practice
	that the constant hidden in $O(\cdot)$ for the time is 
	rather large.
	
	\paragraph{Complexity of OverICA.}
	Since one iteration of FISTA
	(see Algorithm~\ref{alg-fista})
	requires $O(p^3)$ and the number of iterations
	is not high, the algorithm is dominated by the first step.
	Then it takes the respective time of the construction 
	of generalized covariance or the fourth-order cumulant 
	and then computation of the SVD.

	We design the following synthetic experiment to compare the runtime.
	We sample finite sample synthetic data as described in Appendix~\ref{app-sampling-fsfixedn}
	with the fixed sample size $n=100,000$
	and observed dimension $p=20$. The latent
	dimension takes values between $k=p=20$ and $k=p^2/4=100$
	in steps of $20$. We measure runtimes in seconds and display 
	the results in log-linear scale. This comparison is for illustrative purposes only since the runtime depends on different factors. In particular,
	our Matlab/C++ code for FOOBI is highly optimized for runtime performance, 
	while our Matlab implementations of OverICA and Fourier PCA 
	are less so. The parameter $s$ for OverICA is set to $s=10k$.
	We show a head-to-head comparison of runtime in Figure~\ref{fig-fs-fixedn}.
	We observe that f- and a-errors of OverICA and FOOBI are nearly
	the same
	which is in accord with the experimental results 
	from Section~\ref{sec-exps-fs}.

	\subsection{Additional Experiments}
	\label{app-exps-supl}
	
	In this section, we present some more plots 
	for the finite sample experiment
	from Section~\ref{sec-exps-fs}
	(see Figure~\ref{fig-fs-more})
	and for the runtime experiment 
	from Section~\ref{sec-runtime}
	(see Figure~\ref{fig-fs-fixedn}).

\end{appendices}

\end{document}